\definecolor{Gray}{gray}{0.9}
\newcolumntype{g}{>{\columncolor{Gray}}r}
\numberwithin{equation}{section}
\newtheorem{theorem}{Theorem}
\newtheorem{definition}{Definition}
\newtheorem{corollary}{Corollary}
\newtheorem{proposition}{Proposition}
\newtheorem{lemma}{Lemma}
\theoremstyle{remark}
\newtheorem{remark}{Remark}
\providecommand{\abs}[1]{\lvert#1\rvert}
\newcommand{\bitem}{\begin{itemize}}
\newcommand{\eitem}{\end{itemize}}
\newcommand{\R}{\mathbb{R}}
\newcommand{\V}{\mathbb{V}}
\newcommand{\SE}{\mathcal{E}}
\newcommand{\SV}{\mathcal{V}}
\newcommand{\SI}{\mathcal{I}}
\newcommand{\SM}{\mathcal{M}}
\newcommand{\bpm}{\begin{pmatrix}}
\newcommand{\epm}{\end{pmatrix}}
\newcommand{\bsm}{\left(\begin{smallmatrix}}
\newcommand{\esm}{\end{smallmatrix}\right)}
\newcommand{\lan}{\left\langle}
\newcommand{\ran}{\right\rangle}
\DeclareMathOperator{\argmin}{arg min}
\DeclareMathOperator{\conv}{conv}
\DeclareMathOperator{\nb}{nb}
\newcolumntype{M}{>{$\vcenter\bgroup\hbox\bgroup}c<{\egroup\egroup$}}
\definecolor{darkgreen}{rgb}{0,0.5,0}
\newcommand{\minus}{$-$}
\newcommand{\plus}{$+$}
\newcommand{\myparagraph}[1]{\noindent \textbf{#1}}
\newcommand{\addtag}{\refstepcounter{equation}\tag{\theequation}} 
\begin{document}

\title{Partial Optimality by Pruning for MAP-Inference\\ with General Graphical Models}

\author{Paul Swoboda,
	Alexander Shekhovtsov,
        J\"org~Hendrik~Kappes,
        Christoph~Schn\"orr,
        and~Bogdan~Savchynskyy
}
\date{}
%
%
%
%

\maketitle

\begin{abstract}
We consider the energy minimization problem for undirected graphical models, also known as MAP-inference problem for Markov random fields which is NP-hard in general.
We propose a novel polynomial time algorithm to obtain a part of its optimal {\em non-relaxed integral} solution. 
Our algorithm is initialized with variables taking integral values in the solution of a convex relaxation of the MAP-inference problem and iteratively prunes those, which do not satisfy our criterion for partial optimality. 
We show that our pruning strategy is in a certain sense theoretically optimal. Also empirically our method outperforms previous approaches in terms of the number of persistently labelled variables. 
The method is very general, as it is applicable to models with arbitrary factors of an arbitrary order and can employ any solver for the considered relaxed problem. 
Our method's runtime is determined by the runtime of the convex relaxation solver for the MAP-inference problem.
\end{abstract}



\section{Introduction}
Finding the most likely configuration of a Markov random field (MRF), also called MAP-inference or energy minimization problem for graphical models, is of big importance in computer vision, bioinformatics, communication theory, statistical physics, combinatorial optimization, signal processing, information retrieval and statistical machine learning, see~\cite{PIC2011,GraphicalModelsWainwrightJordan, Kappes2013Benchmark} for an overview of applications.
This key problem however is NP-hard. Therefore approximate methods have been developed to tackle big instances commonly arising in image processing, see~\cite{SzeliskiComparativeStudyMRF, Kappes2013Benchmark} for an overview of such methods. 
These approximate methods often cannot find an optimal configuration, but deliver close solutions. 
If one could prove, that some variables of the solution given by such approximate algorithms belong to an optimal configuration, the value of such approximate methods would be greatly enhanced.
In particular, the problem for the remaining variables could be solved by stronger, but computationally more expensive methods to obtain a global optimum as done e.g. in~\cite{PartialOptimalityAndILPKappes}.

In this paper we propose a way to gain such a partially optimal solution for the MAP-inference problem with \emph{general} discrete MRFs from possibly also non-exact solutions of the commonly used local polytope relaxation (see~\cite{ALinearProgrammingApproachToMaxSumWerner}). 
Solving over the local polytope amounts to solving a linear problem for which {\em any} linear programming (LP) solver can be used and for which dedicated and efficient algorithms exist.

\subsection{Related Work}\label{sec:related-work}

We distinguish two classes of approaches to partial optimality. 

\myparagraph{(i) Roof duality based approaches.} The earliest paper dealing with persistency is~\cite{NemhauserStableSetPartialOptimality}, which states a persistency
criterion for the stable set problem and verifies it for every solution of a certain relaxation. This relaxation is the same, as used by the roof duality method in~\cite{PseudoBooleanOptimizationBorosHammer} and which is also the basis
for the well known QPBO-algorithm~\cite{ExtendedRoofDuality,PseudoBooleanOptimizationBorosHammer}.
The MQPBO method~\cite{PartialOptimalityInMultiLabelMRFsKohli} extends roof duality to the multi-label case. 
The authors transform multi-label problems into quadratic binary ones and solve them via QPBO~\cite{PseudoBooleanOptimizationBorosHammer}.
However, their transformation is dependent upon choosing a label order and their results are so as well, see the experiments in~\cite{PartialOptimalityByPruningPotts}, where the label order is sampled randomly. It is not known how to choose an optimal label order to obtain the maximum number of persistent variables.

The roof duality method has been extended to higher order binary problems in~\cite{GeneralizedRoofDualityStrandmark,HigherOrderGraphCutFix,TransformationGeneralToFirstOrderIshikawa,GeneralizedRoofDualityBisubmodularKolmogorov}. 
The generalized roof duality method for binary higher order problems~\cite{GeneralizedRoofDualityStrandmark,GeneralizedRoofDualityBisubmodularKolmogorov} computes partially optimal variables directly for higher order potentials, while Ishikawa's and Fix et al's approaches~\cite{TransformationGeneralToFirstOrderIshikawa,HigherOrderGraphCutFix} transform the higher order problem to one with unary and pairwise terms only. 
Fix et al's method~\cite{HigherOrderGraphCutFix} is an improvement upon Ishikawa's~\cite{TransformationGeneralToFirstOrderIshikawa}.

Windheuser et al~\cite{GeneralizedRoofDualityWindheuser} proposed {\em a multi-label higher-order} roof duality method, which is a generalization of both MQPBO~\cite{PartialOptimalityInMultiLabelMRFsKohli} to higher order and Kahl and Strandmark's work~\cite{GeneralizedRoofDualityStrandmark} to the multi-label case.
However Windheuser et al neither describe an implementation nor provide experimental validation for the higher order multi-label case.

\myparagraph{(ii) Labeling testing approaches.} A different approach, specialized for Potts models, is pursued by Kovtun~\cite{KovtunPartialOptimalLabeling}, where
possible labelings are tested for persistency by auxiliary submodular problems. The parametric max-flow method for the Potts model by Gridchin and Kolmogorov~\cite{PottsModelParametricMaxFlowGridchyn} reduces the number of max-flow computations to compute the persistencies of Kovtun's method~\cite{KovtunPartialOptimalLabeling} to  $log(K)$, where $K$ is the number of labels. The dead-end elimination procedure~\cite{DeadEndEliminationDesmet} tests, if certain labels of nodes cannot belong to an optimal solution. It is a local heuristic and does not perform any optimization.

Since for non-binary multi-labeling problems the submodular approximations constructed by approaches of class (i) are provably less tight than the standard local polytope relaxation~\cite[Prop.~1]{MQPBOShekhovtsovTP}, we consider class (ii) in this paper. 
Specifically, based on ideas in~\cite{PartialOptimalityByPruningPotts} to handle the Potts model, we develop a theoretically substantiated approach to recognizing partial optimality for \emph{general} graphical models, together with a competitive comparison to the $5$ approaches~\cite{PartialOptimalityInMultiLabelMRFsKohli,KovtunPartialOptimalLabeling,GeneralizedRoofDualityStrandmark,HigherOrderGraphCutFix,TransformationGeneralToFirstOrderIshikawa} discussed above, that define the state-of-the-art.

\myparagraph{Unified study.} In addition we point to the recent paper~\cite{ShekhovtsovCVPR2014}, which provides a unified study of most mentioned methods and a systematic way of their analysis. While their persistency criterion is provably not weaker than ours, due to the general structure of the resulting LP it cannot be applied to large-scale problems in a straightforward manner. Moreover, our approach is directly applicable to higher order models and tighter then the local polytope relaxations, whereas~\cite{ShekhovtsovCVPR2014} requires generalization to these cases, though such a generalization is presumably possible. We show that our algorithm solves a special case of the maximal presistency problem formulated in~\cite{ShekhovtsovCVPR2014}.
\myparagraph{Shrinking technique.} The recent work~\cite{SavchynskyyNIPS2013} proposes a method for efficient shrinking of the combinatorial search area with the local polytope relaxation. Though the algorithmic idea is similar to the presented one, the method~\cite{SavchynskyyNIPS2013} does not provide partially optimal solutions. We refer to Section~\ref{sec:PersistencyAlgorithm} for further discussion.

Furthermore, preliminary shorter version of the our study was published at a conference as~\cite{PartialOptimalitySwobodaCVPR2014}.

\subsection{Contribution and Organization}
Adopting ideas from~\cite{PartialOptimalityByPruningPotts}, we propose {\em a novel} method for computing partial optimality, which is applicable to \emph{general graphical models with arbitrary higher order potentials}.
Similarly to~\cite{PartialOptimalityByPruningPotts} our algorithm is initialized with variables taking integral values in the solution of a convex relaxation of the MAP-inference problem and iteratively prunes those, which do not satisfy our persistency criterion. 
We show that our pruning strategy is in a certain sense theoretically optimal. 
Though the used relaxation can be chosen arbitrarily, for brevity we restrict our exposition and experiments to the local polytope relaxation. 
Tighter relaxations {\em provably} yield better results. 
However even by using the local polytope relaxation we can often achieve {\em a substantially higher} number of persistent variables, than competing approaches, which we confirm experimentally. 
We also show how our approach can be made invariant against reparametrizations. This improves our partial optimality criterion and we can show equivalence with the all-to-one improving mapping class of partial optimality methods proposed in~\cite{ShekhovtsovCVPR2014}.
Our approach is very general, as it can use {\em any}, also approximate, solver for the considered convex relaxation. 
Moreover, the computational complexity of our method is determined mainly by the runtime of the used solver.

The comparison to existing persistency methods is summarized in Table~\ref{tab:Comparison}.
\begin{table*}[htb]
\centering
\begin{tabular}{|l|c|c|c|c|} 
\toprule
Work & {non-binary} & {higher order} & {non-Potts} & Auxiliary problem \\
\hline
Boros \& Hammer 2002~\cite{PseudoBooleanOptimizationBorosHammer} & \minus & \minus & \plus & QPBO\\
Kovtun 2003\cite{KovtunPartialOptimalLabeling} & \plus & \minus & \minus & submodular binary\\
Rother et al. 2007~\cite{ExtendedRoofDuality} & \minus & \minus & \plus & QPBO\\
Kohli et al. 2008~\cite{PartialOptimalityInMultiLabelMRFsKohli} & \plus & \minus & \plus & QPBO\\
Kovtun 2011~\cite{Kovtun2011} & \plus & \minus & \plus & submodular multilabel\\
Ishikawa 2011~\cite{TransformationGeneralToFirstOrderIshikawa}& \minus & \plus & \plus & QPBO \\
Fix et al. 2011~\cite{HigherOrderGraphCutFix} & \minus & \plus & \plus & QPBO \\
Kolmogorov 2012~\cite{GeneralizedRoofDualityBisubmodularKolmogorov} & \minus & \plus & \plus & bi-submodular \\
Kahl \& Strandmark 2012~\cite{GeneralizedRoofDualityStrandmark} & \minus & \plus & \plus & bi-submodular + LP\\
Windheuser et al. 2012~\cite{GeneralizedRoofDualityWindheuser} & \plus & \plus & \plus & bi-submodular\\
Swoboda et al. 2013~\cite{PartialOptimalityByPruningPotts} & \plus & \minus & \minus & iterated local polytope\\
Shekhovtsov 2014~\cite{ShekhovtsovCVPR2014} & \plus & \minus & \plus & local polytope with extra variables\\
\hline
{\bf Ours} & {\bf \plus} & {\bf \plus} & {\bf \plus} & {\bf any convex relaxation}\\
\bottomrule
\end{tabular}
\caption{Comparison between partial optimality methods. Detailed descriptions are presented in Section~\ref{sec:related-work}.}
\label{tab:Comparison}
\end{table*}

Our code together with the experimental setup is available at~\url{http://paulswoboda.net}.

\myparagraph{Organization.} 
In Section~\ref{sec:Preliminaries} we review the energy minimization problem and the local polytope relaxation, in Section~\ref{sec:Persistency} our persistency criterion is presented. 
The corresponding algorithm and its theoretical analysis are presented in Sections~\ref{sec:PersistencyAlgorithm}, \ref{sec:LargestPersistentLabeling} and~\ref{sec:Reparamerisation} respectively.
Extensions to the higher order case and tighter relaxations are discussed in Section~\ref{sec:HigherOrder}.
Section~\ref{sec:Experiments} provides experimental validation of our approach and a comparison to the existing methods~\cite{PartialOptimalityInMultiLabelMRFsKohli,KovtunPartialOptimalLabeling,GeneralizedRoofDualityStrandmark,HigherOrderGraphCutFix,TransformationGeneralToFirstOrderIshikawa}.


\section{MAP-Inference Problem}
\label{sec:Preliminaries}
The MAP-inference problem for a graphical model over an undirected  graph $G=(\mathcal{V},\mathcal{E})$, reads 
\begin{equation}
 \label{eq:GraphicalModel}
\min_{x \in X_{\mathcal{V}}} E_{\mathcal{V}}(x) := 
\sum\limits_{v \in \mathcal{V}} \theta_v(x_v) + \sum\limits_{uv \in \mathcal{E}} \theta_{uv}(x_u,x_v)\,,
\end{equation}
where $x_u$ belongs to a finite \emph{label set} $X_u$ for each node $u \in \mathcal{V}$, 
$\theta_u: X_u \rightarrow \R$ and $\theta_{uv}: X_u \times X_v \rightarrow \R$ are the \emph{unary} and \emph{pairwise potentials} associated with the nodes and edges of $G$. The label space for $A \subset \mathcal{V}$ is $X_A = \bigotimes_{u \in A} X_u$, where $\bigotimes$ stands for the Cartesian product. 
For notational convenience we write $X_{uv} = X_u \times X_v$ and $x_{uv}=(x_u,x_v)$ for $uv \in \mathcal{E}$. 
Notations like $x \in X_{A}$ implicitly indicate that the vector $x$ only has components $x_{u}$
indexed by $u \in A$. With $x_{|A} \in X_A$ we denote restriction of the labeling $x\in X_{\SV}$ to the set $A\subset \SV$.

More general graphical models with terms depending on three or more variables can be considered as well.
For brevity we restrict ourselves here to the pairwise case. An extension to the higher order case is discussed in Section~\ref{sec:HigherOrder}.

Problem~\eqref{eq:GraphicalModel} is equivalent to the integer linear problem
{\small
\vspace*{-0.2cm}
\begin{align} 
\label{eq:MarginalPolytopProblem}
&\min_{\mu\in\Lambda_\mathcal{V}} \sum_{v\in \mathcal{V}} \sum_{x_v\in X_v}\hspace{-2pt}\theta_v(x_v)\mu_v(x_v) 
\vspace*{1pt} +\sum_{uv\in \mathcal{E}}\sum_{x_{uv}\in X_{uv}}\hspace{-6pt}\theta_{uv}(x_{uv})\mu_{uv}(x_{uv})
\nonumber\\
&\text{s.t. }\ \mu_w(x_w) \in \{0,1\}\ \mbox{for}\ w\in \mathcal{V}\cup \mathcal{E},\ x_w\in X_w\,,
\end{align}
}
where {\em the local polytope} $\Lambda_\mathcal{V}$~\cite{GraphicalModelsWainwrightJordan} is the set of $\mu$ fulfilling
{\small
\begin{equation}
\label{eq:LocalPolytope}
\begin{array}{l}
 \sum_{x_v \in \mathcal{V}}\mu_v(x_v)=1,\ v \in \mathcal{V},\\
 \sum_{x_v \in \mathcal{V}}\mu_{uv}(x_u,x_v)=\mu_u(x_u),\ x_u\in X_u,\ uv \in \mathcal{E},\\
 \sum_{x_u \in \mathcal{V}}\mu_{uv}(x_u,x_v)=\mu_v(x_v),\ x_v\in X_v,\ uv \in \mathcal{E},\\
 \mu_w(x_w)\ge 0,\ w\in \mathcal{V}\cup \mathcal{E},\ x_w\in X_w\,.
\end{array}
\end{equation}
}
We define $\Lambda_A$ for $A \subset \mathcal{V}$ similarly.
Slightly abusing notation we will denote the objective function in~\eqref{eq:MarginalPolytopProblem} as $E_{\mathcal{V}}(\mu)$.
The formulation~\eqref{eq:MarginalPolytopProblem} utilizes the overcomplete representation~\cite{GraphicalModelsWainwrightJordan} of labelings in terms of indicator vectors~$\mu$, which are often called {\em marginals}. The problem of finding ${\mu^*\in \argmin_{\mu\in\Lambda_\mathcal{V}}E_{\mathcal{V}}(\mu)}$ (i.e. solving~\eqref{eq:MarginalPolytopProblem} without integrality constraints) is called {\em the local polytope relaxation} of~\eqref{eq:GraphicalModel}.

While solving the local polytope relaxation can be done in polynomial time, the corresponding optimal marginal $\mu^*$ may not be integral anymore, hence infeasible and not optimal for~\eqref{eq:MarginalPolytopProblem}.
For a wide spectrum of problems however most of the entries of optimal marginals $\mu^*$ for the local polytope relaxation will be integral. 
Unfortunately, there is no guarantee that any of these integral variables will be part of a globally optimal solution to~\eqref{eq:MarginalPolytopProblem}, except in the case of binary variables, that is $X_u = \{0,1\}$ $\forall u \in \mathcal{V}$, and unary and pairwise potentials~\cite{RoofDualityInQuadratic01OptimizationHammerHansenSimeone}.
Natural questions are: 
(i) Is there a subset $A\subset \mathcal{V}$ and a minimizer $\mu^0$ of the original NP-hard problem~\eqref{eq:MarginalPolytopProblem} such that $\mu^0_{v} = \mu^*_{v}$  $\forall v \in A$? 
In other words, is $\mu^*$ {\em partially optimal} or {\em persistent} on some set~$A$?
(ii) Given a relaxed solution $\mu^* \in \Lambda_\mathcal{V}$, how can we determine such a set $A$?
We provide a novel approach to tackle these problems in what follows.

\section{Persistency}
\label{sec:Persistency}
Assume we have marginals $\mu \in \Lambda_{\SV}$. 
We say that the marginal $\mu_{u}$, $u \in \SV$, is {\em integral} if 
$ \mu_{u}(x_u) \in \{0,1\}$ $\forall x_u \in X_u$.
In this case the marginal corresponds uniquely to a label $x_u$ with $\mu_{u}(x_u) = 1$. If this {\em integrality condition} holds for all $u\in\SV$ the corresponding vector $\mu$ will be denoted as $\delta(x)$. The convex hull of marginals corresponding to all labelings known as {\em marginal polytope} will be denoted as $\SM_{\SV}:=\conv(\delta(X_{\SV}))$. The non-relaxed energy minimization~\eqref{eq:GraphicalModel} can be equivalently written as $\min_{\mu\in\SM_{\SV}}E_{\SV}(\mu)$.

Let the boundary nodes and edges of a subset of nodes $A \subset \mathcal{V}$ be defined as follows:

\begin{definition}[Boundary and Interior]
\label{def:Boundary}
For the set $A\subset \mathcal{V}$ the set 
$\partial \mathcal{V}_A := {\{ u \in A \ \colon\ \exists v \in \mathcal{V}\backslash A \text{ s.t. } uv \in \mathcal{E} \}}$ is called {\em its boundary}.
The respective {\em set of boundary edges} is defined as 
$\partial \mathcal{E}_A = \{ uv \in \mathcal{E} \ \colon\ u \in A \textrm{ and } v \in \mathcal{V}\backslash A \}$.
The set $A \backslash \partial \mathcal{V}_A$ is called the interior of $A$.
\end{definition}

An example graph illustrating the concept of interior and boundary nodes can be seen in Figure~\ref{fig:ExampleGraph}.

\begin{figure}
\parbox{0.35\linewidth}{
\begin{tikzpicture}[scale=0.29]
\tikzstyle{outside}=[circle,thick,draw=blue!75,fill=red!50]
\tikzstyle{inside}=[circle,thick,draw=yellow!75,fill=yellow!20,pattern=crosshatch]
\tikzstyle{boundary}=[circle,thick,draw=green!75,fill=green!20,pattern=north west lines]

  \node[outside] (11) at (1,1) {};
  \node[outside] (12) at (1,3)  {};
  \node[outside] (13) at (1,5)  {};
  \node[outside] (21) at (3,1) {};
  \node[boundary] (22) at (3,3)  {};
  \node[boundary] (23) at (3,5)  {};
  \node[outside] (31) at (5,1) {};
  \node[boundary] (32) at (5,3)  {};
  
  \node[inside] (33) at (5,5)  {};

  \node[outside] (14) at (1,7) {};
  \node[outside] (15) at (1,9)  {};
  \node[boundary] (24) at (3,7) {};
  \node[outside] (25) at (3,9)  {};
  \node[boundary] (34) at (5,7) {};
  \node[outside] (35) at (5,9)  {};

  \node[outside] (41) at (7,1) {};
  \node[outside] (51) at (9,1) {};
  \node[boundary] (42) at (7,3) {};
  \node[outside] (52) at (9,3) {};
  \node[boundary] (43) at (7,5) {};
  \node[outside] (53) at (9,5) {};
  \node[outside] (44) at (7,7) {};
  \node[outside] (54) at (9,7) {};
  \node[outside] (45) at (7,9) {};
  \node[outside] (55) at (9,9) {};
  \foreach \from/\to in
{
11/12,12/13,21/22,22/23,31/32,32/33,13/23,23/33,12/22,22/32,11/21,21/31,
31/41,41/51,32/42,42/52,33/43,43/53,
14/24,24/34,34/44,44/54,15/25,25/35,35/45,45/55,
15/14,14/13,25/24,24/23,35/34,34/33,45/44,44/43,43/42,42/41,
55/54,54/53,53/52,52/51}
    \draw (\from) -- (\to);

\draw[blue,dashed] (2,2) -- (8,2);
\draw[blue,dashed] (8,2) -- (8,6);
\draw[blue,dashed] (8,6) -- (6,6);
\draw[blue,dashed] (6,6) -- (6,8);
\draw[blue,dashed] (6,8) -- (2,8);
\draw[blue,dashed] (2,8) -- (2,2);
\caption{}\label{fig:ExampleGraph}
\end{tikzpicture}
}
\parbox{0.02\linewidth}{\ }
\parbox{0.6\linewidth}{
\small Figure~1. An exemple graph containing inside nodes (yellow with crosshatch pattern) and boundary nodes (green with diagonal pattern). 
The blue dashed line encloses the set $A$. Boundary edges are those crossed by the dashed line. 
}
\vspace{-0.7cm}
\end{figure}

\begin{definition}[Persistency]\label{def:Persistency}
A labeling $x^0 \in X_{A}$ on a subset $A \subset \mathcal{V}$ is {\em partially optimal} or {\em persistent} if $x^0$ coincides with an optimal solution to~\eqref{eq:GraphicalModel} on $A$. 
\end{definition}
 
In the remainder of this section, we state our novel persistency criterion in Theorem~\ref{thm:PersistencyCriterion}. Taking additionally into account a convex relaxation yields a computationally tractable approach in Corollary~\ref{cor:RelaxedPersistencyCriterion}.

As a starting point, consider the following sufficient criterion for persistency of $x^0\in X_{A}$.
Introducing a {\em concatenation} of labelings $x^0\in X_A$ and $\tilde{x}\in X_{\mathcal{V}\backslash A}$ as 
$(x^0,\tilde x):=
\left\{
\begin{array}{rl}
 x^0_v, & v\in A,\\
 \tilde{x}_v, & v \in \mathcal{V}\backslash A
\end{array}
\right.
$,
the criterion reads: 
\begin{proposition}
\label{prop:BestPersistencyCriterion}
A partial labeling $x^0 \in X_A$ is persistent if there holds
\begin{equation}
\label{eq:BestPersistencyCriterion}
\forall \tilde{x} \in X_{\mathcal{V}\backslash A}  \ \colon\ 
x^0 \in \operatornamewithlimits{argmin}_{\substack{x \in X_{A}}} E_{\mathcal{V}}((x,\tilde{x}))\,.
\end{equation} 
\end{proposition}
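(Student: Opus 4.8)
The plan is to argue by a direct exchange with a global optimum, instantiating the universal quantifier in~\eqref{eq:BestPersistencyCriterion} at a single, carefully chosen completion rather than trying to exploit all completions simultaneously. First I would fix an arbitrary global minimizer $x^* \in \operatornamewithlimits{argmin}_{x \in X_{\mathcal{V}}} E_{\mathcal{V}}(x)$ of the non-relaxed problem~\eqref{eq:GraphicalModel}, which exists because $X_{\mathcal{V}}$ is finite. The key idea is that the right completion to test is the restriction of $x^*$ to the complement of $A$, i.e. the ``tail'' of an already-optimal labeling.

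Concretely, I would set $\tilde x := x^*_{|\mathcal{V}\backslash A} \in X_{\mathcal{V}\backslash A}$ and apply the hypothesis~\eqref{eq:BestPersistencyCriterion} to this particular $\tilde x$. Since $(x^*_{|A},\tilde x)$ is merely $x^*$ rewritten in concatenation form, this yields
\[
E_{\mathcal{V}}\bigl((x^0,\tilde x)\bigr) \;\le\; E_{\mathcal{V}}\bigl((x^*_{|A},\tilde x)\bigr) \;=\; E_{\mathcal{V}}(x^*).
\]
On the other hand, global optimality of $x^*$ gives the reverse inequality $E_{\mathcal{V}}((x^0,\tilde x)) \ge E_{\mathcal{V}}(x^*)$, since $(x^0,\tilde x)$ is one particular labeling in $X_{\mathcal{V}}$. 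Combining the two inequalities forces $E_{\mathcal{V}}((x^0,\tilde x)) = E_{\mathcal{V}}(x^*)$, so the completed labeling $(x^0,\tilde x)$ is itself a global minimizer of $E_{\mathcal{V}}$.

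Finally I would observe that $(x^0,\tilde x)$ agrees with $x^0$ on $A$ by the very definition of the concatenation. Thus I have exhibited a global optimum of~\eqref{eq:GraphicalModel} whose restriction to $A$ equals $x^0$, which is exactly the requirement of Definition~\ref{def:Persistency}; hence $x^0$ is persistent. I do not expect a genuine obstacle here: the whole argument is a one-line sandwich between the hypothesis and global optimality, and the only point deserving care is the conceptual one of applying~\eqref{eq:BestPersistencyCriterion} to the single completion $x^*_{|\mathcal{V}\backslash A}$ induced by an optimal solution, rather than attempting to control every $\tilde x$ at once.
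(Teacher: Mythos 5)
Your proof is correct and is essentially the same argument as the paper's: the paper writes $\min_{x\in X_{\mathcal{V}}} E_{\mathcal{V}}(x) = \min_{\tilde{x} \in X_{\mathcal{V}\backslash A}} \min_{x \in X_{A}} E_{\mathcal{V}}((x,\tilde{x}))$ and instantiates the hypothesis at a $\tilde{x}$ attaining the outer minimum, which is precisely your choice $\tilde{x} = x^*_{|\mathcal{V}\backslash A}$ for a global optimizer $x^*$. Your explicit two-inequality sandwich merely spells out the exchange step that the paper leaves implicit in its nested-minimization formulation.
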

\begin{proof}
Consider the equation
\begin{equation}
\label{eq:EnergyMinimumSplit}
 \min_{x\in X_{\mathcal{V}}} E_{\mathcal{V}}(x) = 
\min_{\tilde{x} \in X_{\mathcal{V}\backslash A}} \min_{\substack{x \in X_{A}}} E_{\mathcal{V}}((x,\tilde{x}))\,.
\end{equation}
Let $\tilde{x} \in X_{\mathcal{V}\backslash A}$ be such that it leads to a minimal value on the right hand side of~\eqref{eq:EnergyMinimumSplit}. 
Then $\tilde{x}$ is part of an optimal solution. By the assumption~\eqref{eq:BestPersistencyCriterion}, $x^0$ is an optimal solution to the inner minimization problem of~\eqref{eq:EnergyMinimumSplit}, hence $(x^0,\tilde{x})$ is optimal for~\eqref{eq:GraphicalModel}.
\end{proof}

This means that the concatenated labeling $(x^0,\tilde x)$ has to be optimal for 
$\min_{x} E(x) \text{ s.t. } x_v = \tilde{x}_v \forall v \in \mathcal{V} \backslash A$.
Informally this means that the solution $x^0$ is independent of what happens on $\mathcal{V}\backslash A$.
This criterion however is hard to check directly, as it entails solving NP-hard minimization problems over an exponential number of labelings $\tilde{x} \in X_{\mathcal{V}\backslash A}$.

We relax the above criterion~\eqref{eq:BestPersistencyCriterion} so that we have to check the solution of only \emph{one} energy minimization problem by modifying the unaries $\theta_v$ on boundary nodes so that they bound the influence of \emph{all} labelings on  $\mathcal{V}\backslash A$ uniformly.

\begin{definition}[Boundary potentials and energies]
\label{def:InsidePotentialAndFunctional}
For a set $A \subset \mathcal{V}$ and {\em a test} labeling $y \in X_{\partial \mathcal{V}_A}$, we define for each boundary edge $uv \in \partial \mathcal{E}_A$, $u \in \partial \mathcal{V}_A$ the ``boundary'' potential $\hat{\theta}_{uv,y_u}: X_u \rightarrow \R$ as 
\begin{equation}
\label{eq:InnerPotential}
\hat{\theta}_{uv,y_u}(x_u) := \left\{ 
\begin{array}{ll}
\max_{x_v \in X_v} \theta_{uv}(x_u,x_v), & y_u = x_u\\
\min_{x_v \in X_v} \theta_{uv}(x_u,x_v), & y_u \neq x_u\\
\end{array}\,.
\right.
\end{equation}
Define the energy $\hat{E}_{A,y}\colon X_A \rightarrow \R$ with test labeling $y$ as
\begin{equation}
\label{eq:InnerFunctional}
 \hat{E}_{A,y}(x) := E_A(x) \hspace{2pt} + \hspace{-12pt}
\sum\limits_{uv \in \partial \mathcal{E}_A\colon u \in \partial \mathcal{V}_A} \hat{\theta}_{uv,y_u}(x_u)\,,
\end{equation}
where 
$E_A(x) = \sum\limits_{u \in A} \theta_u(x_u) +\hspace{-8pt} \sum\limits_{uv \in \mathcal{E}: u,v\in A}\hspace{-5pt} \theta_{uv}(x_{uv})$ 
is the energy with potentials with support in~$A$.
\end{definition}

Given a test labeling $y \in X_A$, energy~\eqref{eq:InnerFunctional} assigns a higher value than the original energy~\eqref{eq:GraphicalModel} for all labelings conforming to~$y$ and makes it more favourable for all labelings to not conform to~$y$.
An illustration of a boundary potential is depicted by Figure~\ref{fig:InsidePotential}.

\begin{figure}
\floatbox{figure}
{
\caption{
Illustration of a boundary potential~$\hat{\theta}_{x^0}$ constructed in~\eqref{eq:InnerPotential}. 
The second label comes from the test labeling~$x^0$, therefore entries are maximized for the first row and minimized otherwise.
}
}
{
\begin{tikzpicture}

\node[circle, fill=blue] (x11) at (1,1) {};
\node[circle, fill=blue] (x12) at (1,3) {};

\node[circle, fill=blue] (x21) at (5,1) {};
\node[circle, fill=blue] (x22) at (5,2) {};
\node[circle, fill=blue] (x23) at (5,3) {};

\draw[thick,-] (x11) -- node[near start,below] (1121) {} (x21); 
\draw[thick,-] (x11) to (x22); 
\draw[thick,-] (x11) -- node[near start,above] (1123) {} (x23); 
\draw[thick,-] (x12) -- node[near start,below] (1221) {} (x21); 
\draw[thick,-] (x12) to (x22); 
\draw[thick,-] (x12) -- node[near start,above] (1223) {} (x23); 

\draw[thick,red!75] ([shift=(-10:1.5cm)]1,1) arc (-10:35:1.5cm);
\node[red!75] at (2.5,0.5) {min};
\draw[thick,green!75] ([shift=(-35:1.5cm)]1,3) arc (-35:10:1.5cm);
\node[green!75] (max) at (2.5,3.5) {max};


\node[left of=x11] {$x_u$};
\node[left of=x12] {$x_u^0$};
\node[right of=x22] {$x_v$};
\node[below=0.3cm of x11] {$u$};
\node[below=0.3cm of x21] {$v$};
\node[above=0.1cm of max] {$\hat{\theta}_{uv,x^0_u}(x_u)$};

\begin{pgfonlayer}{background}
\node[pattern=north west lines, draw=green!75, thick, fit=(x11) (x12), minimum width=1cm] (leftPot) {};
\node[fill=red!50, draw=blue!75, thick, fit=(x21) (x22) (x23), minimum width=1cm] (rightPot) {};
\end{pgfonlayer}
\end{tikzpicture}

%
%
%
%
%
%
}
\end{figure}
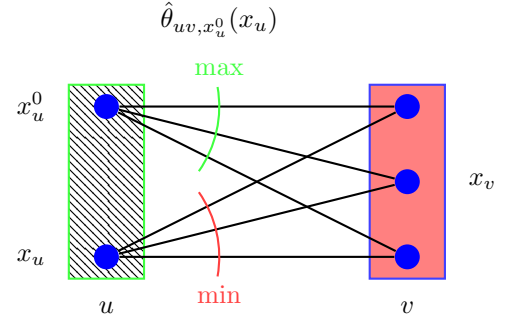
As a consequence, if the test labeling $y$ from Definition~\ref{def:Boundary} minimizes the energy~\eqref{eq:InnerFunctional}, the proof of the following theorem asserts that 
changing an arbitrary labeling $x \in X_{\mathcal{V}}$ as follows:
$x'_v = \left\{ \begin{array}{ll} y_v, &v \in A\\ x_v, &v \notin A\end{array} \right.$
will always result in a labeling with not bigger energy~\eqref{eq:GraphicalModel}, hence $y$ in particular fulfills the conditions~\eqref{eq:BestPersistencyCriterion} of Proposition~\ref{prop:BestPersistencyCriterion} and thus is persistent.
\begin{theorem}[Partial optimality criterion]
\label{thm:PersistencyCriterion}
A labeling $x^0 \in X_A$ on a subset $A \subset \mathcal{V}$ is persistent if
\begin{equation}
\label{eq:PersistencyMinimizationProblem}
x^0 \in \argmin_{x \in X_A} \hat{E}_{A,x^0}(x)\,,
\end{equation}
where $\hat{E}_{A,x^0}$ is the augmented energy functional~\eqref{eq:InnerFunctional}.
\end{theorem}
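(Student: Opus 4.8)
The plan is to reduce the statement to the sufficient condition of Proposition~\ref{prop:BestPersistencyCriterion}: it suffices to verify that for \emph{every} fixed $\tilde{x} \in X_{\mathcal{V}\backslash A}$ the labeling $x^0$ minimizes $E_{\mathcal{V}}((x,\tilde{x}))$ over $x \in X_A$, that is, $E_{\mathcal{V}}((x^0,\tilde{x})) \le E_{\mathcal{V}}((z,\tilde{x}))$ for every competitor $z \in X_A$. First I would split the energy of a concatenation $(z,\tilde{x})$ into three groups of terms: the part $E_A(z)$ whose support lies in $A$, the boundary contribution $\sum_{uv \in \partial\mathcal{E}_A} \theta_{uv}(z_u,\tilde{x}_v)$, and the remaining terms supported in $\mathcal{V}\backslash A$, which do not depend on $z$ and hence cancel when comparing $x^0$ with $z$. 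The claim therefore collapses to the inequality $E_A(x^0) + \sum_{uv \in \partial\mathcal{E}_A} \theta_{uv}(x^0_u,\tilde{x}_v) \le E_A(z) + \sum_{uv \in \partial\mathcal{E}_A} \theta_{uv}(z_u,\tilde{x}_v)$.

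The central device is to link this reduced inequality to the hypothesis~\eqref{eq:PersistencyMinimizationProblem}, $\hat{E}_{A,x^0}(x^0) \le \hat{E}_{A,x^0}(z)$, exploiting the two-sided design of the boundary potential~\eqref{eq:InnerPotential}. I would expand both augmented energies: since $x^0$ agrees with the test labeling on every boundary node, all of its boundary terms take the \emph{max} branch, so $\hat{E}_{A,x^0}(x^0) = E_A(x^0) + \sum_{uv \in \partial\mathcal{E}_A} \max_{x_v}\theta_{uv}(x^0_u,x_v)$; for the competitor $z$ each boundary node $u$ uses the max branch when $z_u = x^0_u$ and the min branch when $z_u \ne x^0_u$. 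The key observation -- the step that makes the argument go through -- is that on every boundary edge $uv$ with $z_u = x^0_u$ the contribution is \emph{identical} on the two sides of both the target inequality and the hypothesis, hence cancels; so I only ever need to compare terms over the edges $uv$ with $z_u \ne x^0_u$.

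On those remaining edges the estimates point the right way: $\theta_{uv}(x^0_u,\tilde{x}_v) \le \max_{x_v}\theta_{uv}(x^0_u,x_v)$ bounds the genuine $x^0$-contribution from above, whereas $\min_{x_v}\theta_{uv}(z_u,x_v) \le \theta_{uv}(z_u,\tilde{x}_v)$ bounds the genuine $z$-contribution from below. After cancelling the common agreeing-edge max-terms, the hypothesis~\eqref{eq:PersistencyMinimizationProblem} becomes $E_A(x^0) + \sum_{uv:\, z_u \ne x^0_u} \max_{x_v}\theta_{uv}(x^0_u,x_v) \le E_A(z) + \sum_{uv:\, z_u \ne x^0_u} \min_{x_v}\theta_{uv}(z_u,x_v)$, and sandwiching this between the two one-sided bounds yields exactly the reduced target inequality $E_A(x^0) + \sum_{uv:\, z_u \ne x^0_u} \theta_{uv}(x^0_u,\tilde{x}_v) \le E_A(z) + \sum_{uv:\, z_u \ne x^0_u} \theta_{uv}(z_u,\tilde{x}_v)$. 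Since $\tilde{x}$ and $z$ were arbitrary, Proposition~\ref{prop:BestPersistencyCriterion} then gives persistency of $x^0$.

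I expect the only genuine obstacle to be the orientation problem in the last step: a naive termwise comparison of $\hat{E}_{A,x^0}$ with the true energy fails on the boundary edges where $z$ happens to agree with $x^0$, because there the \emph{max} over-estimates in the wrong direction. The resolution I would emphasize is that precisely on those agreeing edges the boundary contributions coincide on both sides and drop out, so the troublesome terms never enter the comparison; what remains involves only disagreeing edges, where the max/min construction of~\eqref{eq:InnerPotential} supplies exactly the one-sided bounds needed to sandwich the two genuine energies.
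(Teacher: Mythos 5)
Your proposal is correct. The per-edge mechanism you use is exactly the content of the paper's Lemma~\ref{lemma:RestrictedPotentialInequality}: on a boundary edge $uv$ with $z_u \neq x^0_u$ the bound
\begin{equation*}
\theta_{uv}(x^0_u,\tilde{x}_v) \;\le\; \max_{x_v}\theta_{uv}(x^0_u,x_v), \qquad \min_{x_v}\theta_{uv}(z_u,x_v) \;\le\; \theta_{uv}(z_u,\tilde{x}_v)
\end{equation*}
is precisely inequality~\eqref{eq:RestrictedPotentialInequality1} instantiated at $x_v = \tilde{x}_v$, and your observation that agreeing edges contribute identical terms to both sides and cancel is the trivial case $x'_u = x^0_u$ of that lemma. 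Where you genuinely diverge is in the global organization. The paper does \emph{not} route the argument through Proposition~\ref{prop:BestPersistencyCriterion}: it introduces the optimal constrained completion $\tilde{x}$ of $x^0$ via~\eqref{eq:OuterProblemWithBoundaryConditions} and proves $E_{\mathcal{V}}((x^0,\tilde{x}|_{\mathcal{V}\backslash A})) \le E_{\mathcal{V}}(x')$ for an arbitrary full labeling $x'$ in a single chain, where the key step~\eqref{eq:PersistencyCriterionLeq1} simultaneously invokes the hypothesis~\eqref{eq:PersistencyMinimizationProblem} and the optimality of $\tilde{x}$. You instead fix the outside labeling $\tilde{x}$, compare only $(x^0,\tilde{x})$ against $(z,\tilde{x})$, cancel the outside terms $E_{\mathcal{V}\backslash A}(\tilde{x})$ outright, and then hand the quantification over completions to the already-proven Proposition~\ref{prop:BestPersistencyCriterion}. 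This buys modularity and transparency: the delicate two-facts-in-one-step inequality of the paper disappears, the cancellation on agreeing edges is made explicit rather than buried in the lemma's trivial case, and you actually establish the formally stronger uniform condition~\eqref{eq:BestPersistencyCriterion} --- which, interestingly, is exactly what the paper's prose preceding Theorem~\ref{thm:PersistencyCriterion} claims the proof shows, even though its written proof establishes persistency directly. The paper's direct chain, for its part, yields as a byproduct that the concatenation $(x^0,\tilde{x})$ is itself globally optimal, a fact reused later (e.g., in Lemma~\ref{lemma:StrongPersistencyUniqueGlobalOptimum} via strictness of~\eqref{eq:PersistencyCriterionLeq1}), which your factored argument only recovers indirectly through Proposition~\ref{prop:BestPersistencyCriterion}.
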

To prove the theorem we need the following technical lemma.

\begin{lemma}
\label{lemma:RestrictedPotentialInequality}
Let $A\subset \mathcal{V}$ be given together with ${y \in X_{\partial \mathcal{V}_A}}$. Let $x^0$ and $x'$ be two labelings on $\mathcal{V}$ such that $x^0|_A = y$. Then it holds for $uv \in \partial \mathcal{E}_A$, $u \in \partial \mathcal{V}_A$ that
\begin{equation}
\label{eq:RestrictedPotentialInequality1}
\theta_{uv}(x^0_u,x_v') + \hat{\theta}_{uv,y}(x_u') - \hat{\theta}_{uv,y}(x_u^0) \leq \theta_{uv}(x'_u,x'_v)\,.\hspace{-3pt}
\end{equation}
\end{lemma}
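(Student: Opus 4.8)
The plan is to prove the inequality by a single case distinction on the label $x_u'$ at the boundary node $u$, exploiting that the boundary potential~\eqref{eq:InnerPotential} is a maximum in the ``agreeing'' row $x_u = y_u$ and a minimum in every other row. First I would record the key consequence of the hypothesis $x^0|_A = y$: since $u \in \partial \mathcal{V}_A \subseteq A$, we have $x^0_u = y_u$, so the first branch of~\eqref{eq:InnerPotential} applies and $\hat{\theta}_{uv,y}(x^0_u) = \max_{x_v \in X_v} \theta_{uv}(x^0_u, x_v)$.

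Next I would split into two cases according to whether $x_u' = x^0_u$ (equivalently $x_u' = y_u$) or $x_u' \neq x^0_u$. In the agreeing case $x_u' = x^0_u$, both boundary-potential terms evaluate to the same quantity $\max_{x_v} \theta_{uv}(x^0_u, x_v)$ and hence cancel, so~\eqref{eq:RestrictedPotentialInequality1} collapses to $\theta_{uv}(x^0_u, x_v') \le \theta_{uv}(x_u', x_v')$, which holds with equality because $x_u' = x^0_u$. In the disagreeing case $x_u' \neq x^0_u$, the second branch of~\eqref{eq:InnerPotential} gives $\hat{\theta}_{uv,y}(x_u') = \min_{x_v} \theta_{uv}(x_u', x_v) \le \theta_{uv}(x_u', x_v')$, while the maximum expression recorded above yields $\hat{\theta}_{uv,y}(x^0_u) = \max_{x_v} \theta_{uv}(x^0_u, x_v) \ge \theta_{uv}(x^0_u, x_v')$. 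Adding these two bounds and rearranging produces exactly~\eqref{eq:RestrictedPotentialInequality1}.

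I do not expect any genuine obstacle, since the claim is a per-edge algebraic inequality with no summation or global interaction; the only points requiring care are purely notational—unwinding that the subscript $y$ in $\hat{\theta}_{uv,y}$ refers to the single label $y_u = x^0_u$, and keeping the roles of the two arguments straight so the max bound is applied to the $x^0_u$-row and the min bound to the $x_u'$-row. The role of the lemma in the larger argument is clear: it isolates the single inequality needed per boundary edge so that, when summed over $\partial \mathcal{E}_A$ in the proof of Theorem~\ref{thm:PersistencyCriterion}, it shows that replacing any labeling on $A$ by the minimizer $x^0$ of the augmented energy $\hat{E}_{A,x^0}$ cannot increase the original energy~\eqref{eq:GraphicalModel}, thereby verifying the condition~\eqref{eq:BestPersistencyCriterion} of Proposition~\ref{prop:BestPersistencyCriterion}.
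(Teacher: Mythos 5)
Your proof is correct and follows essentially the same route as the paper: the paper also splits on whether $x_u' = x_u^0$ (dismissing the agreeing case as trivial) and, in the disagreeing case, obtains the inequality by choosing $x_v'$ as the argument in both the minimization $\min_{x_v} \theta_{uv}(x_u',x_v)$ and the maximization $\max_{x_v} \theta_{uv}(x_u^0,x_v)$, exactly as you do. Your version merely spells out the substitutions $\hat{\theta}_{uv,y}(x_u^0) = \max_{x_v}\theta_{uv}(x_u^0,x_v)$ (via $x^0_u = y_u$) and the agreeing-case cancellation more explicitly than the paper's terse write-up.
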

\begin{proof}
The case $x'_u = x_u^0$ is trivial. Otherwise, by Definition~\ref{def:InsidePotentialAndFunctional}, inequality~\eqref{eq:RestrictedPotentialInequality1} is equivalent to
\begin{multline}
\label{eq:RestrictedPotentialInequality2}
  \theta_{uv}(x_u^0,x_v') + \min_{x_v \in X_v} \theta_{uv}(x'_u,x_v) \\- \max_{x_v \in X_v} \theta_{uv}(x_u^0,x_v) - \theta_{uv}(x'_u,x'_v)
 \leq 0\,.
\end{multline}
Choose $x'_v$ for $x_v$ in the minimization and maximization in~\eqref{eq:RestrictedPotentialInequality2} to obtain the result.
\end{proof}

\begin{proof}[Proof of Theorem~\ref{thm:PersistencyCriterion}]
Let
\begin{equation}
\label{eq:OuterProblemWithBoundaryConditions}
\tilde{x} \in \arg\min\limits_{x \in X_{\SV}\atop x|_A=x^0|_A}E_{\SV}(x)\,.
\end{equation}
and let $x' \in X_{\SV}$ be an arbitrary labeling.
Then
{\allowdisplaybreaks
\begin{align*}
     & E_{\SV}(\tilde{x})= E_A(x^0) + E_{\mathcal{V}\backslash A}(\tilde{x}) + \hspace{-7pt}\sum_{uv \in \partial \mathcal{E}_A}\hspace{-5pt} \theta_{uv}(x^0_u,\tilde{x}_v)  \addtag \label{eq:PersistencyCriterionEq1}\\
=    & E_A(x^0) +  \sum_{uv \in \partial \mathcal{E}_A}  \hat{\theta}_{uv,y}(x^0_u)\\
     & + E_{\mathcal{V}\backslash A}(\tilde{x}) + \sum_{uv \in \partial \mathcal{E}_A} \left[ \theta_{uv}(x^0_u,\tilde{x}_v) - \hat{\theta}_{uv,y}(x^0_u) \right] \\
=    & \hat{E}_{A,x^0}(x^0) + E_{\mathcal{V}\backslash A}(\tilde{x}) + \hspace{-10pt}\sum_{uv \in \partial \mathcal{E}_A}\hspace{-7pt} \left[ \theta_{uv}(x^0,\tilde{x}_v) - \hat{\theta}_{uv,x^0}(x^0_u) \right] \\
\leq & \hat{E}_{A,x^0}(x') + E_{\mathcal{V}\backslash A}(x') +\hspace{-10pt} \sum_{uv \in \partial \mathcal{E}_A}\hspace{-7pt} \left[ \theta_{uv}(x^0,x'_v) - \hat{\theta}_{uv,x^0}(x^0_u) \right] \addtag \label{eq:PersistencyCriterionLeq1}\\
=    & E_A(x') + \sum_{uv \in \partial \mathcal{E}_A}  \hat{\theta}_{uv,x^0}(x'_u) \\
     & + E_{\mathcal{V}\backslash A}(x') + \sum_{uv \in \partial \mathcal{E}_A} \left[ \theta_{uv}(x^0_u,x'_v) - \hat{\theta}_{uv,x^0}(x^0_u) \right] \\
\leq & E_A(x')\hspace{-1.5pt} + \hspace{-1.5pt}E_{\mathcal{V}\backslash A}(x') \hspace{-1.5pt}+ \hspace{-10pt}\sum_{uv \in \partial \mathcal{E}_A}\hspace{-7pt} \theta_{uv}(x'_u,x'_v) \addtag \label{eq:PersistencyCriterionLeq2}
=  E_{\SV}(x').
\end{align*}
}
The equality~\eqref{eq:PersistencyCriterionEq1} is due to definition of $\tilde{x}$ in~\eqref{eq:RestrictedPotentialInequality2}.
The first inequality~\eqref{eq:PersistencyCriterionLeq1} is due to $x^0 \in \argmin_{x} \hat{E}_{A,x^0}(x)$, as assumed,
and 
of $\tilde{x}$ for~\eqref{eq:OuterProblemWithBoundaryConditions}.
The second inequality~\eqref{eq:PersistencyCriterionLeq2} is due to Lemma~\ref{lemma:RestrictedPotentialInequality}.
Hence $x^0$ is part of a globally optimal solution, as $x'$ was arbitrary.
\end{proof}

Checking the criterion in Theorem~\ref{thm:PersistencyCriterion} is NP-hard, because~\eqref{eq:PersistencyMinimizationProblem} is a MAP-inference problem of the same class as~\eqref{eq:GraphicalModel}. By relaxing the minimization problem~\eqref{eq:PersistencyMinimizationProblem} one obtains the polynomially verifiable persistency criterion in Corollary~\ref{cor:RelaxedPersistencyCriterion}.


\begin{corollary}[Tractable partial optimality criterion]
\label{cor:RelaxedPersistencyCriterion}
Labeling $x^0\in X_A$ on $A\subset\SV$ fulfilling the condition
\begin{equation}
\label{eq:TractablePartialOptimalityCriterion}
\delta(x^0) \in \argmin_{\mu\in\Lambda_A} \hat{E}_{A,x^0}(\mu)
\end{equation} 
is also a solution to~\eqref{eq:PersistencyMinimizationProblem}, hence persistent on $A$.
\end{corollary}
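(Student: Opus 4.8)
The plan is to exploit that passing from the discrete minimization~\eqref{eq:PersistencyMinimizationProblem} to the relaxed minimization~\eqref{eq:TractablePartialOptimalityCriterion} only enlarges the feasible set, while the objective stays linear and agrees with its discrete counterpart on integral marginals. Concretely, I would read $\hat{E}_{A,x^0}(\mu)$ as a linear functional of $\mu$: the first two terms of~\eqref{eq:InnerFunctional} form the usual overcomplete energy $\sum_{u\in A}\sum_{x_u}\theta_u(x_u)\mu_u(x_u)+\sum_{uv\in\mathcal{E}:u,v\in A}\sum_{x_{uv}}\theta_{uv}(x_{uv})\mu_{uv}(x_{uv})$, and the boundary term is the linear functional $\sum_{uv\in\partial\mathcal{E}_A:u\in\partial\mathcal{V}_A}\sum_{x_u}\hat{\theta}_{uv,x^0_u}(x_u)\mu_u(x_u)$. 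By construction this linear extension satisfies $\hat{E}_{A,x^0}(\delta(x))=\hat{E}_{A,x^0}(x)$ for every labeling $x\in X_A$, since evaluating it at the integral vertex $\delta(x)$ selects exactly the summands appearing in~\eqref{eq:InnerFunctional}.

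First I would record the inclusion $\delta(X_A)\subseteq\SM_A:=\conv(\delta(X_A))\subseteq\Lambda_A$: every integral marginal $\delta(x)$ trivially satisfies the marginalization and nonnegativity constraints~\eqref{eq:LocalPolytope}, so it lies in the local polytope, and $\Lambda_A$ is convex, hence contains the whole hull $\SM_A$. Now assume $\delta(x^0)$ satisfies~\eqref{eq:TractablePartialOptimalityCriterion}, i.e.\ it minimizes $\hat{E}_{A,x^0}$ over $\Lambda_A$. Because each $\delta(x)$ with $x\in X_A$ is a feasible point of $\Lambda_A$, minimality of $\delta(x^0)$ over the larger set immediately yields $\hat{E}_{A,x^0}(\delta(x^0))\le\hat{E}_{A,x^0}(\delta(x))$ for all $x\in X_A$.

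Invoking the consistency identity $\hat{E}_{A,x^0}(\delta(x))=\hat{E}_{A,x^0}(x)$, this chain of inequalities becomes $\hat{E}_{A,x^0}(x^0)\le\hat{E}_{A,x^0}(x)$ for all $x\in X_A$, which is precisely $x^0\in\argmin_{x\in X_A}\hat{E}_{A,x^0}(x)$, the hypothesis~\eqref{eq:PersistencyMinimizationProblem} of Theorem~\ref{thm:PersistencyCriterion}. Applying that theorem then delivers persistency of $x^0$ on $A$, as claimed.

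I expect the only point needing care to be the bookkeeping behind linearity and the identity $\hat{E}_{A,x^0}(\delta(x))=\hat{E}_{A,x^0}(x)$: one must check that the boundary potentials $\hat{\theta}_{uv,x^0_u}$, being unary functions of $x_u$, are lifted to marginals through $\mu_u$ exactly as the genuine unaries $\theta_u$ are, so that no spurious cross terms arise and evaluation at an integral vertex reproduces~\eqref{eq:InnerFunctional} verbatim. Everything else is a one-line relaxation argument -- minimizing a linear functional over the superset $\Lambda_A\supseteq\delta(X_A)$ certifies minimality over the discrete set -- so no genuine difficulty arises beyond this routine verification.
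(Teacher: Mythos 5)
Your proposal is correct and is essentially the paper's own argument: the paper likewise observes that minimality of the integral point $\delta(x^0)$ over $\Lambda_A$ implies minimality over the integral points of $\Lambda_A$, which by the equivalence of the discrete and overcomplete formulations is exactly the hypothesis~\eqref{eq:PersistencyMinimizationProblem} of Theorem~\ref{thm:PersistencyCriterion}. Your explicit verification that the boundary potentials lift linearly through the unary marginals $\mu_u$ merely spells out what the paper leaves implicit in its notation $\hat{E}_{A,x^0}(\mu)$.
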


\begin{proof}
Expression~\eqref{eq:TractablePartialOptimalityCriterion} implies 
\begin{equation}
\delta(x^0) \in \argmin_{\mu\in\Lambda_A, \mu \in \{0,1\}^{\dim{\Lambda_A}}} \hat{E}_{A,x^0}(\mu)
\end{equation}
because $\delta(x^0)$ is integral by definition.
As~\eqref{eq:GraphicalModel} and~\eqref{eq:MarginalPolytopProblem} are equivalent and the corresponding labeling $x^0$ satisfies the conditions of Theorem~\ref{thm:PersistencyCriterion}, $x^0$ is partially optimal on $A$.
\end{proof}

\section{Persistency Algorithm}
\label{sec:PersistencyAlgorithm}
Now we concentrate on finding a set $A$ and labeling $x \in X_{A}$ such that the solution of
$\min_{\mu\in \Lambda_A} \hat{E}_{A,x}(\mu)$ fulfills the conditions of Corollary~\ref{cor:RelaxedPersistencyCriterion}. 
Our approach is summarized in Algorithm~\ref{alg:FindPersistency}. 

In the initialization step of Algorithm~\ref{alg:FindPersistency} we solve the relaxed problem over $\mathcal{V}$ without boundary labeling and initialize the set $A^0$ with nodes having an integer label.
Then in each iteration $t$ we minimize over the local polytope the energy $\hat{E}_{A^t,x^t}$ defined in~\eqref{eq:InnerFunctional}, corresponding to the set $A^t$ and boundary labeling coming from the solution of the last iteration.
We remove from $A^{t}$ all variables which are not integral or do not conform to the boundary labeling.
In each iteration $t$ of Algorithm~\ref{alg:FindPersistency} we shrink the set $A^t$ by removing variables taking non-integral values or not conforming to the current boundary condition.

\begin{algorithm}[t] \label{alg:FindPersistency}
 \KwData{$G=(\mathcal{V},\mathcal{E})$, $\theta_u: X_u \rightarrow \R$, $\theta_{uv}: X_{uv} \rightarrow \R$} 
 \KwResult{$A^*\subset \mathcal{V}$, $x^* \in X_{A^*}$}
 Initialize: \\
  Choose $\mu^{0} \in \argmin_{\mu \in \Lambda_\mathcal{V}} E_{\mathcal{V}}(\mu)$\\
  $A^0=\{u \in \mathcal{V}\colon \mu^0_u\in\{0,1\}^{|X_u|}\}$\\
  $t = 0$\\
\Repeat{$A^t = A^{t-1}$}{
  Set $x^t_u$ such that $\mu^{t}_{u}(x_u^t) = 1,\ u \in A^t$\\
  Choose $\mu^{t+1} \in \argmin_{\mu \in \Lambda_{A^{t}}} \hat{E}_{A^{t},x^{t}}(\mu)$\\
  $t = t+1$\\
  $W^t = \{ u \in \partial \mathcal{V}_{A^{t-1}}   \ \colon \ \mu^{t}_{u}(x^{t-1}_{u}) \neq 1 \}$\\
  $A^t = \{ u\in A^{t-1}\colon \mu^{t}_{u} \in \{0,1\}^{|X_u|} \} \backslash W^t$\\
}
$A^* = A^t$\\
Set $x^* \in X_{A^*}$ such that $\mu^{t}_u(x^*_u) = 1$\\
\caption{Finding persistent variables.}
\end{algorithm}

\myparagraph{Convergence.}
Since $\mathcal{V}$ is finite and $\abs{A^t}$ is monotonically decreasing, the algorithm converges in at most $\abs{\mathcal{V}}$ steps. 
Solving each subproblem in Algorithm~\ref{alg:FindPersistency} can be done in polynomial time. As the number of iterations of Algorithm~\ref{alg:FindPersistency} is at most $\abs{\mathcal{V}}$, Algorithm~\ref{alg:FindPersistency} itself is polynomial as well.
In practice only few iterations are needed.

After termination of Algorithm~\ref{alg:FindPersistency}, we have  
\begin{equation} 
\delta(x^*) \in \argmin_{\mu \in \Lambda_{A^*}} \hat{E}_{A^*,x^*}(\mu)\,.
\end{equation}
Hence $x^*$ and $A^*$ fulfill the conditions of  Corollary \ref{cor:RelaxedPersistencyCriterion}, which proves persistency.

\begin{figure*}
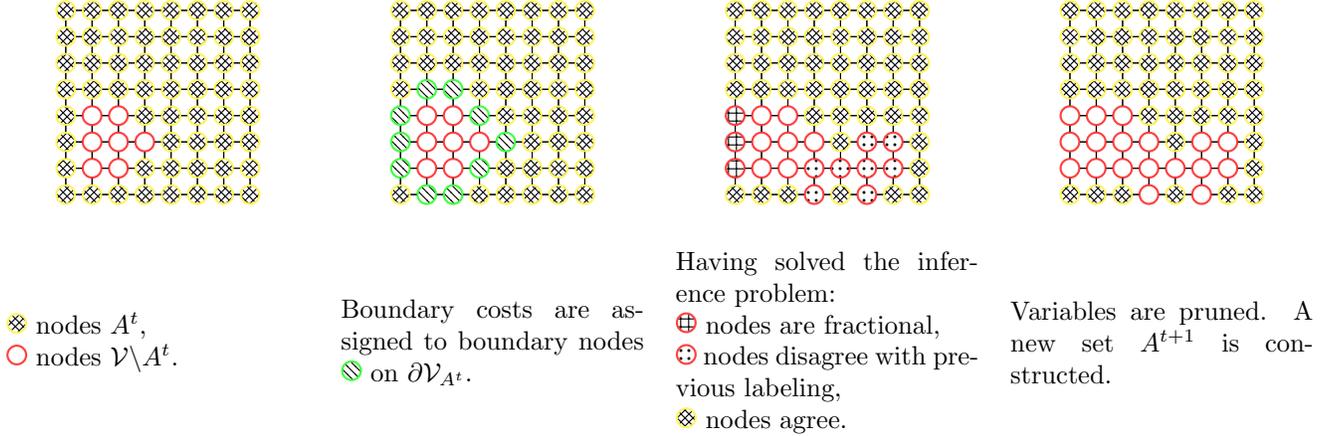

\begin{tabular}{cccc}

\begin{tikzpicture}[scale=0.35]
\input{./data_artificial/style3.tex}
\input{./data_artificial/initial_labeling}
\end{tikzpicture}
&
\begin{tikzpicture}[scale=0.35]
\input{./data_artificial/style3.tex}
\input{./data_artificial/boundary_conditions}
\end{tikzpicture}
&
\begin{tikzpicture}[scale=0.35]
\input{./data_artificial/style3.tex}
\input{./data_artificial/subsequent_labeling}
\end{tikzpicture}
&
\begin{tikzpicture}[scale=0.35]
\input{./data_artificial/style3.tex}
\input{./data_artificial/initial_labeling_next_iteration}
\end{tikzpicture}
\\

\vspace{0.07cm} & & &\\

\parbox{0.22\textwidth}{
\begin{tikzpicture}[scale=0.35]\input{./data_artificial/style3.tex} \node[i3] () at (0,0)  {}; \end{tikzpicture}
nodes $ A^t$, 
\\
\begin{tikzpicture}[scale=0.35]\input{./data_artificial/style3.tex} \node[outside] () at (0,0)  {}; \end{tikzpicture}
nodes $\mathcal{V} \backslash A^t$.
}
&
\parbox{0.22\textwidth}{
Boundary costs are assigned to boundary nodes
\begin{tikzpicture}[scale=0.35]\input{./data_artificial/style3.tex} \node[b255] () at (0,0)  {}; \end{tikzpicture}
on $\partial \mathcal{V}_{A^t}$.
}
&
\parbox{0.22\textwidth}{Having solved the inference problem:\\
\begin{tikzpicture}[scale=0.35]\input{./data_artificial/style3.tex} \node[fractional] () at (0,0)  {}; \end{tikzpicture}
nodes are fractional,\\
\begin{tikzpicture}[scale=0.35]\input{./data_artificial/style3.tex} \node[disagreeing] () at (0,0)  {}; \end{tikzpicture}
nodes disagree with previous labeling,\\
\begin{tikzpicture}[scale=0.35]\input{./data_artificial/style3.tex} \node[i3] () at (0,0)  {}; \end{tikzpicture}
nodes agree.
}
&
\parbox{0.22\textwidth}{Variables are pruned. A new set $A^{t+1}$ is constructed.}
\\

\end{tabular}
\centering
\caption{
Illustration of one iteration of Algorithm~\ref{alg:FindPersistency}. 
}
\end{figure*}

\myparagraph{Choice of Solver.}
All our results are independent of the specific algorithm one uses to solve the relaxed problems $\min_{\mu \in \Lambda_A}\hat{E}_{A,y}$, provided it returns an exact solution. However this can be an issue for large-scale datasets, where classical exact LP solvers like e.g.\ the simplex method become inapplicable. It is important that one can also employ {\em approximate} solvers, as soon as they provide (i) a proposal for {\em potentially} persistent nodes and (ii) sufficient conditions for optimality of the found {\em integral} solutions such as e.g.\ zero duality gap. These properties have the following precise formulation.
 
%

\begin{definition}[Integrally Correct Algorithm]
\label{def:ConsistencyMapping}
Assume an algorithm that takes as the input an energy minimization prboblem and outputs a labeling $x^* \in \bigotimes_{v \in \mathcal{V}}(\mathcal{X}_v\cup\{\#\})$.
We call such an algorithm {\em integrally correct} if $x^*_v\in X_v \forall v\in V$ implies 
$x^* \in \argmin\limits_{x \in X_{\mathcal{V}}} E_{\mathcal{V}}(x)$.

\end{definition}

Integrally correct algorithms include
\begin{itemize}
\item Dual decomposition based algorithms~\cite{
journals/pami/KomodakisPT11,
TRWSKolmogorov,
MAPBundleApproach,
AdaptiveDiminishingSmoothing,
Savchynskyy11
} 
deliver {\em strong tree agreement}~\cite{MapEstimationMessagePassingLPWainwright} and algorithms considering the Lagrangian 
dual~\cite{NormProductBPHazanShashua,FirstOrderPrimalDualMAPSchmidt,ConvergentMessagePassingGloberson}
return {\em strong arc consistency}~\cite{ALinearProgrammingApproachToMaxSumWerner} for some nodes. 
If one of these properties holds for a node $v$, we set $c_v$ as the corresponding label. Otherwise we set $c_v = \#$.
\item  Naturally, any algorithm solving $\min_{\mu \in \Lambda_\mathcal{V}} E(\mu)$ exactly is integrally correct with \\
$c_v=
\left\{
\begin{array}{rl}
 x_v, & \mu_v(x_v)=1\\
 \#, & \mu_v\notin\{0,1\}^{|X_v|}\,.
\end{array}
\right.$
\end{itemize}

\begin{proposition}
\label{prop:PersistencyAlgorithmWithConsistentSolver}
Let operations $\mu\in\argmin(...)$ in Algorithm~\ref{alg:FindPersistency} be exchanged with
\[\forall v \in \mathcal{V}, x_v\in X_v,\ \mu_v(x_v):=
\left\{
\begin{array}{rl}
1, & c_v=x_v\\
0, & c_v\notin\{x_v,\#\},\\
1/|X_v|, & c_v=\#
\end{array}
\right.
\]
where $c$ are consistent labelings returned by an integrally correct algorithm applied to the corresponding minimization problems. Then the output labeling $x^*$ is persistent. 
\end{proposition}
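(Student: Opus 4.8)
The plan is to show that the modified Algorithm~\ref{alg:FindPersistency} still produces, upon termination, a set $A^*$ and a labeling $x^*$ that satisfy the hypotheses of Theorem~\ref{thm:PersistencyCriterion}, namely that $x^*$ is an exact minimizer of the \emph{non-relaxed} augmented energy $\hat{E}_{A^*,x^*}$ over $X_{A^*}$. Once this is established, persistency follows directly. The key difference from the exact case treated in Corollary~\ref{cor:RelaxedPersistencyCriterion} is that we no longer solve $\min_{\mu\in\Lambda_A}\hat{E}_{A,y}(\mu)$ exactly; instead we run an integrally correct algorithm (Definition~\ref{def:ConsistencyMapping}) and reconstruct a marginal $\mu$ from the consistent labeling $c$ it returns.

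First I would examine the termination condition. The loop stops when $A^t=A^{t-1}$, meaning no node was pruned in the last iteration. Pruning removes exactly those nodes $v\in A^{t-1}$ whose reconstructed marginal $\mu^t_v$ is either non-integral (the $c_v=\#$ case, giving $\mu_v(x_v)=1/|X_v|$) or integral but disagreeing with the previous boundary labeling $x^{t-1}$ (captured by $W^t$). Hence at termination every surviving node $v\in A^*$ has $c_v\in X_v$, i.e.\ the integrally correct algorithm returned a genuine label, and that label agrees with $x^*$. In particular the reconstructed $\mu^*=\delta(x^*)$ is fully integral on $A^*$.

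The crux is then to invoke the \emph{integral correctness} property. The integrally correct algorithm was applied to the energy minimization problem $\min_{x\in X_{A^*}}\hat{E}_{A^*,x^*}(x)$ (equivalently over $\Lambda_{A^*}$). Since on $A^*$ every output label satisfies $c_v\in X_v$ with no $\#$ symbols, Definition~\ref{def:ConsistencyMapping} guarantees that the reconstructed integral labeling $x^*$ is an exact global minimizer:
\begin{equation}
\delta(x^*)\in\argmin_{\mu\in\Lambda_{A^*}}\hat{E}_{A^*,x^*}(\mu)\,.
\end{equation}
This is precisely condition~\eqref{eq:TractablePartialOptimalityCriterion} of Corollary~\ref{cor:RelaxedPersistencyCriterion}, so $x^*$ is persistent on $A^*$. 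Convergence in at most $\abs{\mathcal{V}}$ steps is unaffected, since $\abs{A^t}$ is still monotonically decreasing by construction of the pruning step.

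I expect the main obstacle to be a subtle bookkeeping point rather than a deep argument: one must verify that when the loop terminates, the integral correctness hypothesis genuinely applies to the \emph{final} set $A^*$ and the \emph{final} boundary labeling $x^*$. The delicacy is that the boundary potentials $\hat{\theta}_{uv,x^*_u}$ depend on the test labeling, and the algorithm updates the boundary condition between iterations; one should confirm that at the fixed point, the labeling used to build $\hat{E}_{A^*,x^*}$ coincides with the labeling extracted from the algorithm's output on $A^*$, so that the returned $c$ has no $\#$ entries on $A^*$ and agrees with $x^*$ everywhere on $A^*$. This self-consistency at termination is exactly what the pruning of $W^t$ and of non-integral nodes enforces, and once it is spelled out the persistency conclusion is immediate from Definition~\ref{def:ConsistencyMapping} together with Corollary~\ref{cor:RelaxedPersistencyCriterion}.
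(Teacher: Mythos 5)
Your proposal follows essentially the same route as the paper's own proof: observe that the termination condition $A^t=A^{t-1}$ forces self-consistency at the fixed point (the returned $c$ has no $\#$ entries on $A^*$ and satisfies $c_v=x^*_v$ for all $v\in A^*$, with the boundary test labeling agreeing with $x^*$), then invoke integral correctness to conclude that $x^*$ exactly minimizes the augmented energy $\hat{E}_{A^*,x^*}$, and finally apply the persistency criterion. The one slip is your displayed claim $\delta(x^*)\in\argmin_{\mu\in\Lambda_{A^*}}\hat{E}_{A^*,x^*}(\mu)$: Definition~\ref{def:ConsistencyMapping} guarantees only optimality over \emph{integral} labelings, i.e.\ $x^*\in\argmin_{x\in X_{A^*}}\hat{E}_{A^*,x^*}(x)$, and says nothing about the local polytope relaxation being tight for the auxiliary problem --- an integrally correct algorithm is characterized purely by the implication ``full labeling output $\Rightarrow$ integer optimality.'' Consequently you cannot route the conclusion through condition~\eqref{eq:TractablePartialOptimalityCriterion} of Corollary~\ref{cor:RelaxedPersistencyCriterion}. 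This does not damage the argument, however, because the integer-optimality statement is precisely the hypothesis of Theorem~\ref{thm:PersistencyCriterion}, which is what the paper's proof invokes, and which you yourself announce as the target in your opening paragraph. Replacing $\Lambda_{A^*}$ by $X_{A^*}$ in your display and citing Theorem~\ref{thm:PersistencyCriterion} instead of Corollary~\ref{cor:RelaxedPersistencyCriterion} makes your proof coincide with the paper's.

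A side remark: for the concrete certificates listed after Definition~\ref{def:ConsistencyMapping} (strong tree agreement, strong arc consistency, zero duality gap, or an exact LP solve) the relaxation is in fact tight at the returned labeling, so your stronger display happens to be true for those solvers; but it is not implied by the definition as stated, and the proposition is asserted for \emph{arbitrary} integrally correct algorithms, so the proof must only use the weaker, non-relaxed optimality.
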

\begin{proof}
At termination of Algorithm~\ref{alg:FindPersistency} we have obtained a subset of nodes $A^*$, a test labeling $y^* \in X_{\partial \mathcal{V}_A}$, a labeling $x^*$ equal to $y^*$ on $\partial \mathcal{V}_A$ and a consistency mapping $c_u = x_u^*$ for $u \in A^*$. Hence, by Definition~\ref{def:ConsistencyMapping}, $x^* \in \argmin_{x \in X_A} \hat{E}_{A^*,y^*}$ and $x^*$ fulfills the conditions of Theorem~\ref{thm:PersistencyCriterion}.
\end{proof}
\begin{remark}
Note that a bad or early stopped solver, i.e. one which rarely (or even never) returns an optimality certificate or solves a weak relaxation, will also work with Algorithm~\ref{alg:FindPersistency}. 
However it will find smaller (or even empty) partial optimal solutions.
\end{remark}

\myparagraph{Comparison to the Shrinking Technique (CombiLP)~\cite{SavchynskyyNIPS2013}.} The recently published approach~\cite{SavchynskyyNIPS2013}, similar to Algorithm~\ref{alg:FindPersistency}, describes how to shrink the combinatorial search area with the local polytope relaxation. 
However (i) Algorithm~\ref{alg:FindPersistency} solves a series of auxiliary problems on the subsets $A^t$ of integer labels, whereas the method~\cite{SavchynskyyNIPS2013} considers nodes, which got fractional labels in the relaxed solution; 
(ii) Algorithm~\ref{alg:FindPersistency} is polynomial and provides only persistent labels, whereas the method~\cite{SavchynskyyNIPS2013} has exponential complexity and either finds an optimal solution or gives no information about persistence.

From the practical point of view, both algorithms have different application scenarios:
CombiLP~\cite{SavchynskyyNIPS2013} will only work on sparse graphs, as otherwise the combinatorial part, which one has to solve with exact methods, becomes too big, as the boundary $\partial \mathcal{V}_A$ for $A\subsetneq \mathcal{V}$ grows very quickly then.
Also, even for sparse graphs, the combinatorial part may not grow too big during the application of the algorithm, as otherwise the combinatorial solver will again not be able to cope with it.
Our algorithm does not possess these two disadvantages. From the perspective of running time it does not matter how big the set $\SV\backslash A^t$ becomes during the iterations of Algorithm~\ref{alg:FindPersistency}. 
On the other hand, the subsets of variables to which the method~\cite{SavchynskyyNIPS2013} applies a combinatorial solver to achieve global optimality are often smaller than $\SV\backslash A^t$ in Algorithm~\ref{alg:FindPersistency}, because potentials in CombiLP~\cite{SavchynskyyNIPS2013} remain unchanged in contrast to the perturbation~\eqref{eq:InnerFunctional}. 
Another advantage of the method~\cite{SavchynskyyNIPS2013} is that it needs to solve the (typically) big LP relaxation of the original problem only once, whereas our method does this iteratively, which makes it often slower then CombiLP.

One other possible application scenario which is possible with our method but not with CombiLP~\cite{SavchynskyyNIPS2013} is the following: Assume we want to solve an extremely big inference problem, one that does not fit even into memory. 
To do this, choose a subset $A \subsetneq \mathcal{V}$ of nodes of the graphical model, solve the inference problem on the induced subgraph $G(A)$ with some boundary conditions, and find a partially optimal labeling on it. This is akin to the windowing technique of~\cite{shekhovtsov-14-TR}. By doing so for an overlapping set of subgraphs, one may try to find a labeling for the overall problem on $G$.

The major differences between CombiLP~\cite{SavchynskyyNIPS2013} and our method are summarised in Table~\ref{tab:CombiLPComparison}.
\begin{table}
\centering
\begin{tabular}{|l|c|c|} 
\toprule
\multicolumn{3}{|r|}{\rotatebox{-30}{CombiLP~\cite{SavchynskyyNIPS2013}} \hspace{-1.3cm} \rotatebox{-30}{Our method}} \\
\hline
Dense graphs & \minus & \plus \\
Very large-scale & \minus & \plus \\
Big fractional part of LP solution & \minus & \plus \\
\parbox{5cm}{Relaxed MAP-inference is solved only once} & \plus & \minus \\
\vspace{-0.28cm}&&\\
\parbox{5cm}{Provides a complete solution to Labeling Problem~\eqref{eq:GraphicalModel}} & \plus & \minus \\
\bottomrule
\end{tabular}
\caption{Comparison between our method and CombiLP~\cite{SavchynskyyNIPS2013}.}
\label{tab:CombiLPComparison}
\end{table}

\section{Largest Persistent Labeling}
\label{sec:LargestPersistentLabeling}

Let $A^0\subset \mathcal{V}$ and $\mu^0\in\Lambda_{A^0}$ be defined as in Algorithm~\ref{alg:FindPersistency}.
Subsets $A\subset A^0$ which fulfill the conditions of Corollary~\ref{cor:RelaxedPersistencyCriterion} taken with labelings $\mu^0|_A$ can be partially ordered with respect to inclusion $\subset$. 
In this section we will show that the following holds:
\begin{itemize}
 \item There is a largest set among those, for which there exists a \emph{unique} persistent labeling fufilling the conditions of Corollary~\ref{cor:RelaxedPersistencyCriterion}. 
 \item Algorithm~\ref{alg:FindPersistency} finds this largest set.
\end{itemize}
This will imply that Algorithm~\ref{alg:FindPersistency} cannot be improved upon with regard to the criterion in Corollary~\ref{cor:RelaxedPersistencyCriterion}.

\begin{definition}[Strong Persistency]
A labeling $x^* \in X_{A}$ is called {\em strongly persistent} on $A$, if from
\begin{equation}
x^0 \in \argmin_{x \in X_A} \hat{E}_{A,x^0}(x)\,,
\end{equation}
with $\hat{E}_{A,x^*}$ as in~\eqref{eq:InnerFunctional} follows $x^* = x^0$, i.e. $x^*$ is the \emph{unique} labeling on $A$ such that 
$x^* \in \argmin_{x \in X_A} \hat{E}_{A,x^*}(x)$.
\end{definition}

\begin{lemma}
\label{lemma:StrongPersistencyUniqueGlobalOptimum}
Let $x^* \in X_A$ be strongly persistent. Then for any optimal solution $x$ of~\eqref{eq:GraphicalModel} we have $x^* = x_{|A}$.
\end{lemma}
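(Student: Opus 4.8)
The plan is to revisit the chain of inequalities \eqref{eq:PersistencyCriterionEq1}--\eqref{eq:PersistencyCriterionLeq2} from the proof of Theorem~\ref{thm:PersistencyCriterion} and to show that, under the \emph{uniqueness} built into strong persistency, the first inequality \eqref{eq:PersistencyCriterionLeq1} becomes \emph{strict} whenever the competing labeling disagrees with $x^*$ on $A$. First, since $x^*$ is strongly persistent it is in particular a minimizer of $\hat{E}_{A,x^*}$, so it satisfies the hypothesis \eqref{eq:PersistencyMinimizationProblem} of Theorem~\ref{thm:PersistencyCriterion} and is persistent. Consequently, letting $\tilde{x}\in\argmin_{x\colon x|_A=x^*}E_{\SV}(x)$ as in \eqref{eq:OuterProblemWithBoundaryConditions}, the completion $\tilde{x}$ is in fact a \emph{global} minimizer of $E_{\SV}$, i.e. $E_{\SV}(\tilde{x})=\min_{x\in X_{\SV}}E_{\SV}(x)$.

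Next I would fix an arbitrary labeling $x'\in X_{\SV}$ with $x'_{|A}\neq x^*$ and run the identical computation as in the proof of Theorem~\ref{thm:PersistencyCriterion} with $x^0=x^*$. The passage \eqref{eq:PersistencyCriterionLeq1} combines two separate minimizations: the optimality of $\tilde{x}$ over the completions on $\mathcal{V}\backslash A$ (accounting for the $E_{\mathcal{V}\backslash A}$ and $\theta_{uv}(x^*_u,\cdot)$ terms) and the minimality of $x^*$ for $\hat{E}_{A,x^*}$ (accounting for the $\hat{E}_{A,x^*}$ term; the constant $\sum_{uv\in\partial\mathcal{E}_A}\hat{\theta}_{uv,x^*}(x^*_u)$ is identical on both sides and cancels). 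The decisive point is that strong persistency makes $x^*$ the \emph{unique} minimizer of $\hat{E}_{A,x^*}$, so $\hat{E}_{A,x^*}(x^*)<\hat{E}_{A,x^*}(x'_{|A})$ strictly, while the second inequality \eqref{eq:PersistencyCriterionLeq2} continues to hold verbatim by Lemma~\ref{lemma:RestrictedPotentialInequality}. Chaining these yields $E_{\SV}(\tilde{x})<E_{\SV}(x')$.

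Finally I would conclude by contraposition. Since $\tilde{x}$ is a global minimizer, any $x'$ with $x'_{|A}\neq x^*$ has strictly larger energy and hence cannot be optimal for \eqref{eq:GraphicalModel}. Equivalently, every optimal solution $x$ must satisfy $x_{|A}=x^*$, which is exactly the claim.

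I expect the main obstacle to be the bookkeeping of the second paragraph: one must check that in the step \eqref{eq:PersistencyCriterionLeq1} the \emph{only} place where the competitor's restriction $x'_{|A}$ enters, apart from the two quantities already controlled by the optimality of $\tilde{x}$ over $\mathcal{V}\backslash A$, is through the term $\hat{E}_{A,x^*}(x'_{|A})$. This is what guarantees that the strict inequality $\hat{E}_{A,x^*}(x^*)<\hat{E}_{A,x^*}(x'_{|A})$ is not swallowed by the subsequent estimate \eqref{eq:PersistencyCriterionLeq2} and survives into the final comparison $E_{\SV}(\tilde{x})<E_{\SV}(x')$. Everything else is a line-by-line repetition of the argument already established for Theorem~\ref{thm:PersistencyCriterion}.
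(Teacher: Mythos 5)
Your proof is correct and follows exactly the paper's own route: the paper's one-line proof states precisely that inequality~\eqref{eq:PersistencyCriterionLeq1} becomes strict in this case, and you have filled in the same argument in detail, using strong persistency to get $\hat{E}_{A,x^*}(x^*) < \hat{E}_{A,x^*}(x'_{|A})$ whenever $x'_{|A}\neq x^*$ while the optimality of $\tilde{x}$ and Lemma~\ref{lemma:RestrictedPotentialInequality} (i.e.~\eqref{eq:PersistencyCriterionLeq2}) remain weak inequalities, so the strictness survives into $E_{\SV}(\tilde{x}) < E_{\SV}(x')$. Your bookkeeping observation that $x'_{|A}$ enters the step~\eqref{eq:PersistencyCriterionLeq1} only through $\hat{E}_{A,x^*}(x'_{|A})$ is exactly the point the paper leaves implicit, so no gap remains.
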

\begin{proof}
This follows from Inequality~\eqref{eq:PersistencyCriterionLeq1} being strict in this case.
\end{proof}

\begin{theorem}[Largest persistent labeling]
\label{thm:LargestPersistentLabeling}
Let $x^0 \in X_{A^*_{strong}}$ and $A^*_{strong} \subset \mathcal{V}$ be such that 
\begin{equation}
\delta(x^0) \in \argmin_{\mu\in\Lambda_{A^*_{strong}}} \hat{E}_{A^*_{strong},x^0}(\mu)
\end{equation} 
and $x^0$ is the unique such labeling on $A^*_{strong}$.

Then Algorithm~\ref{alg:FindPersistency} finds a persistent labeling on $A^*$ such that $A^*_{strong} \subset A^* \subset \mathcal{V}$, i.e. $A^*$ is a superset of all sets on which strongly persistent labelings identifiable by the criterion of Corollary~\ref{cor:RelaxedPersistencyCriterion} exist.
\end{theorem}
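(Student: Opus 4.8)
The plan is to prove the stronger invariant that the target set is never pruned. Concretely, I would show by induction on the iteration counter $t$ of Algorithm~\ref{alg:FindPersistency} that $A^*_{strong}\subseteq A^t$ and $x^t|_{A^*_{strong}}=x^0$ for every $t$. Since $A^*=A^t$ at termination, this immediately yields $A^*_{strong}\subseteq A^*$, and persistency of $x^*$ on $A^*$ is already guaranteed by Corollary~\ref{cor:RelaxedPersistencyCriterion}. The base case ($t=0$, where the outer set is $\mathcal{V}$ and $\partial\mathcal{V}_{\mathcal{V}}=\emptyset$) and the inductive step (outer set $A^{t-1}$ with boundary labeling $y=x^{t-1}$, which by the induction hypothesis agrees with $x^0$ on $\partial\mathcal{V}_{A^{t-1}}\cap A^*_{strong}$) are both instances of a single Key Claim.

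Key Claim: whenever $B:=A^*_{strong}\subseteq A$ and the boundary labeling $y$ of $A$ coincides with $x^0$ on $\partial\mathcal{V}_A\cap B$, every minimizer $\mu\in\argmin_{\mu\in\Lambda_A}\hat{E}_{A,y}(\mu)$ satisfies $\mu|_B=\delta(x^0)$. To prove it I would build a competitor $\nu\in\Lambda_A$ that equals $\mu$ on $A\setminus B$, equals $\delta(x^0)$ on $B$, and on each crossing edge $uv$ with $u\in B$, $v\in A\setminus B$ sets $\nu_{uv}(x^0_u,x_v)=\mu_v(x_v)$ and $\nu_{uv}(x_u,x_v)=0$ for $x_u\neq x^0_u$; this is the minimal surgery that keeps $\nu$ feasible across $\partial\mathcal{V}_B$.

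I would then compare the two energies term by term and establish
\[\hat{E}_{A,y}(\mu)-\hat{E}_{A,y}(\nu)\;\ge\;\hat{E}_{B,x^0}(\mu|_B)-\hat{E}_{B,x^0}(\delta(x^0))\,.\]
The unaries and pairwise terms supported inside $B$ cancel exactly between the two sides; the boundary-of-$A$ potentials attached to a node $u\in B$ cancel against those boundary edges of $B$ that leave $A$, precisely because $y_u=x^0_u$ makes $\hat{\theta}_{uv,y}=\hat{\theta}_{uv,x^0_u}$. What survives is a sum over the crossing edges $uv$ (with $u\in B$, $v\in A\setminus B$) of $\sum_{x_u,x_v}\mu_{uv}(x_u,x_v)\bigl[\theta_{uv}(x_u,x_v)-\theta_{uv}(x^0_u,x_v)-\hat{\theta}_{uv,x^0_u}(x_u)+\hat{\theta}_{uv,x^0_u}(x^0_u)\bigr]$, where each bracket is nonnegative by Lemma~\ref{lemma:RestrictedPotentialInequality} while $\mu_{uv}\ge 0$. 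With the displayed inequality in hand, optimality of $\mu$ forces its left side to be $\le 0$, so $\mu|_B$ minimizes $\hat{E}_{B,x^0}$ over $\Lambda_B$; strong persistency, i.e. uniqueness of the minimizer $\delta(x^0)$, then gives $\mu|_B=\delta(x^0)$.

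Finally I would close the induction. The identity $\mu^t|_B=\delta(x^0)$ means every $u\in B$ is integral in $\mu^t$ with $\mu^t_u(x^0_u)=1$, so no node of $B$ is dropped for fractionality; and for $u\in B\cap\partial\mathcal{V}_{A^{t-1}}$ we have $\mu^t_u(x^{t-1}_u)=\mu^t_u(x^0_u)=1$, hence $u\notin W^t$, so $B\subseteq A^t$ with $x^t|_B=x^0$. I expect the main obstacle to be the feasibility-preserving modification across $\partial\mathcal{V}_B$ together with the careful bookkeeping that collapses the energy difference into the per-edge form of Lemma~\ref{lemma:RestrictedPotentialInequality}; the cancellation of the boundary potentials hinges on the inductive fact $x^{t-1}|_B=x^0$, which is exactly why the agreement of the boundary labeling must be carried along in the induction.
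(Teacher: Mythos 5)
Your proposal is correct and follows essentially the same route as the paper: the identical induction over the iterations of Algorithm~\ref{alg:FindPersistency} (including the observation that integrality plus $\mu^t_u(x^{t-1}_u)=\mu^t_u(x^0_u)=1$ keeps the strong set out of $W^t$), with your Key Claim being precisely the paper's Lemma~\ref{lemma:PersistentLabelingDuringIterations}, merely with the roles of $A$ and $B$ interchanged. The one difference is in your favor: where the paper disposes of that lemma with ``similar to the proof of Theorem~\ref{thm:PersistencyCriterion}, replace $\mathcal{V}$ by $B$'' --- an argument that, as written, only covers integral labelings --- you supply the genuinely needed marginal-level proof via the explicit feasible competitor $\nu\in\Lambda_A$ and the per-edge brackets controlled by Lemma~\ref{lemma:RestrictedPotentialInequality}, reading strong persistency as uniqueness of the LP minimizer $\delta(x^0)$ over $\Lambda_B$ exactly as the paper's intended hypothesis requires.
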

To prove the theorem we need the following technical lemma.

\begin{lemma}
\label{lemma:PersistentLabelingDuringIterations}
Let $A \subset B \subset \mathcal{V}$ be two subsets of $\mathcal{V}$ and $\mu^A \in \Lambda_A$ marginals on $A$ and $x^A \in X_A$ a labeling fulfilling the conditions of Corollary~\ref{cor:RelaxedPersistencyCriterion} uniquely (i.e. $x^A$ is strongly persistent). Let $y^B \in X_{B}$ be a test labeling such that $y^B|_A=x^A$. 

Then for all marginals $\mu^* \in \argmin_{\mu \in \Lambda_B} \hat{E}_{B,y^B}(\mu)$ on $B$ it holds that $\mu^*_v(x^A_v) = 1$ $\forall v \in A$.
\end{lemma}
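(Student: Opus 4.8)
The plan is to prove the lemma by a substitution (exchange) argument at the level of marginals, reducing everything to the uniqueness guaranteed by strong persistency of $x^A$ on $A$. Fix any minimizer $\mu^* \in \argmin_{\mu\in\Lambda_B}\hat{E}_{B,y^B}(\mu)$. I would construct a competitor $\bar\mu\in\Lambda_B$ that coincides with $\delta(x^A)$ on $A$ and with $\mu^*$ on $B\setminus A$: on nodes and internal edges set $\bar\mu|_A=\delta(x^A)$ and $\bar\mu|_{B\setminus A}=\mu^*|_{B\setminus A}$, and on each edge $uv$ crossing from $A$ to $B\setminus A$ set $\bar\mu_{uv}(x_u,x_v):=[x_u=x^A_u]\,\mu^*_v(x_v)$. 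The first step is to check $\bar\mu\in\Lambda_B$: normalization is immediate, and the product form on the crossing edges is exactly what is needed so that the marginalization constraints $\sum_{x_v}\bar\mu_{uv}=\bar\mu_u=\delta(x^A)_u$ and $\sum_{x_u}\bar\mu_{uv}=\bar\mu_v=\mu^*_v$ hold.

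Next I would decompose the augmented energy on $B$ into an $A$-part, a coupling part on the boundary edges of $A$ pointing into $B\setminus A$, and a remainder depending only on $\mu|_{B\setminus A}$. Splitting $\partial\mathcal{E}_A=\partial\mathcal{E}_A^{\mathrm{out}}\cup\partial\mathcal{E}_A^{\mathrm{in}}$ into edges leaving $B$ and edges entering $B\setminus A$, and observing that edges in $\partial\mathcal{E}_A^{\mathrm{out}}$ are also boundary edges of $B$ whose potential $\hat\theta_{uv,y^B_u}$ equals $\hat\theta_{uv,x^A_u}$ since $y^B|_A=x^A$, one obtains
\begin{equation*}
\hat{E}_{B,y^B}(\mu)=\hat{E}_{A,x^A}(\mu|_A)+\!\!\sum_{uv\in\partial\mathcal{E}_A^{\mathrm{in}}}\!\!\big[\theta_{uv}(\mu_{uv})-\hat\theta_{uv,x^A_u}(\mu_u)\big]+R(\mu|_{B\setminus A}),
\end{equation*}
where $R$ collects the node and edge potentials internal to $B\setminus A$ together with the boundary potentials of $B$ attached to $B\setminus A$. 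Since $\bar\mu$ and $\mu^*$ agree on $B\setminus A$, the remainder cancels in the difference $\hat{E}_{B,y^B}(\bar\mu)-\hat{E}_{B,y^B}(\mu^*)$. The $A$-part contributes $\hat{E}_{A,x^A}(\delta(x^A))-\hat{E}_{A,x^A}(\mu^*|_A)$, which is $\le 0$ because $\delta(x^A)$ minimizes $\hat{E}_{A,x^A}$ over $\Lambda_A$. For each crossing edge I would then show that the coupling term does not increase, i.e.
\begin{equation*}
\theta_{uv}(\bar\mu_{uv})-\hat\theta_{uv,x^A_u}(\bar\mu_u)\le\theta_{uv}(\mu^*_{uv})-\hat\theta_{uv,x^A_u}(\mu^*_u).
\end{equation*}

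The key technical step, and the main obstacle, is establishing this last inequality, which is the relaxed counterpart of Lemma~\ref{lemma:RestrictedPotentialInequality}. I would derive it by taking the expectation of the integral inequality~\eqref{eq:RestrictedPotentialInequality1} (with $x^0_u=x^A_u$) over the edge marginal $\mu^*_{uv}(x'_u,x'_v)$ and using the marginalization identities $\sum_{x'_u}\mu^*_{uv}=\mu^*_v$ and $\sum_{x'_v}\mu^*_{uv}=\mu^*_u$; the left-hand side collapses to $\sum_{x_v}\theta_{uv}(x^A_u,x_v)\mu^*_v(x_v)-\max_{x_v}\theta_{uv}(x^A_u,x_v)=\theta_{uv}(\bar\mu_{uv})-\hat\theta_{uv,x^A_u}(\bar\mu_u)$ and the right-hand side to $\theta_{uv}(\mu^*_{uv})-\hat\theta_{uv,x^A_u}(\mu^*_u)$. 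Consequently $\hat{E}_{B,y^B}(\bar\mu)\le\hat{E}_{B,y^B}(\mu^*)$; since $\mu^*$ is optimal over $\Lambda_B$ and $\bar\mu\in\Lambda_B$, the difference is exactly zero, and being a sum of nonpositive terms each must vanish. In particular $\hat{E}_{A,x^A}(\mu^*|_A)=\hat{E}_{A,x^A}(\delta(x^A))$, so $\mu^*|_A$ minimizes $\hat{E}_{A,x^A}$ over $\Lambda_A$; the uniqueness in the strong-persistency hypothesis then forces $\mu^*|_A=\delta(x^A)$, i.e. $\mu^*_v(x^A_v)=1$ for all $v\in A$. The two places demanding care are the bookkeeping of which boundary edges of $A$ remain boundary edges of $B$ versus become internal edges, and the passage from the integral inequality of Lemma~\ref{lemma:RestrictedPotentialInequality} to its linearized form by averaging against $\mu^*_{uv}$.
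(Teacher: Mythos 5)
Your proof is correct and is, in substance, the argument the paper intends: its entire proof of this lemma is the one-line remark ``Similar to the proof of Theorem~\ref{thm:PersistencyCriterion}. Replace $\mathcal{V}$ by $B$,'' i.e.\ the same exchange argument you carry out. You additionally supply the details the paper leaves implicit in lifting that labeling-level argument to marginals --- the explicit concatenated competitor $\bar\mu\in\Lambda_B$ with product form on crossing edges, the splitting of $\partial\mathcal{E}_A$ into edges entering $B\setminus A$ versus leaving $B$ (using $y^B|_A=x^A$ to match boundary potentials), the linearized form of Lemma~\ref{lemma:RestrictedPotentialInequality} obtained by averaging against $\mu^*_{uv}$, and the final appeal to uniqueness of the relaxed minimizer over $\Lambda_A$ --- all of which are exactly the right way to make the paper's sketch rigorous.
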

\begin{proof}
Similar to the proof of Theorem~\ref{thm:PersistencyCriterion}.
Replace $\mathcal{V}$ by $B$.
\end{proof}

\begin{proof}[Proof of Theorem~\ref{thm:LargestPersistentLabeling}]
We will use the notation from Algorithm~\ref{alg:FindPersistency}.
It will be enough to show that for every $\overline{A} \subset \mathcal{V}$ such that there
exists a strongly persistent labeling $\overline{x} \in X_{\overline{A}}$ we have $\overline{A} \subset A^t$ in each iteration of 
Algorithm~\ref{alg:FindPersistency} and furthermore $\overline{x}_v = x^t_v$ for all $v \in \mathcal{V}_{\overline{A}}$.
Hence the union of sets $A'_{strong}$, for which a strongly persistent labeling exists which fulfills the conditions of Corollary~\ref{cor:RelaxedPersistencyCriterion}, is a subset of $A^t$ $\forall t$.
Also by Lemma~\ref{lemma:StrongPersistencyUniqueGlobalOptimum} the associated strongly persistent labelings agree where they overlap, hence we are done.


For $t=0$ apply Lemma~\ref{lemma:PersistentLabelingDuringIterations} with $A := \overline{A}$ and $B:=A^0(=\mathcal{V})$.
Condition
$\overline{x}= y^B|_{\overline A}$ in Lemma~\ref{lemma:PersistentLabelingDuringIterations} is assured by Corollary~\ref{cor:RelaxedPersistencyCriterion}. 
Hence, Lemma \ref{lemma:PersistentLabelingDuringIterations} ensures that for all
$\mu^0 \in \argmin_{\mu \in \Lambda_\mathcal{V}} E(\mu)$ it holds that 
$\mu^0_{v}(\overline{x}_v) = 1$ for all $v \in \overline{A}$. 

Now assume the claim to hold for iteration $t-1$. We need to show that it also holds for
$t$. For this invoke Lemma~\ref{lemma:PersistentLabelingDuringIterations} with 
$A := \overline{A}$,
$B := A^{t-1}$ and
$y^B := x^{t-1}$. The conditions of Lemma~\ref{lemma:PersistentLabelingDuringIterations} hold by assumption on $t-1$. Lemma~\ref{lemma:PersistentLabelingDuringIterations} now ensures that for all $\mu^{t} \in
\argmin_{\mu \in \Lambda_{A^{t-1}}} \hat{E}_{A^{t-1}, x^{t-1}}(\mu)$ 
there holds
$\mu^{t}(x_v^A) = 1$ $\forall v \in A$. 

\end{proof}

From the proof of Theorem~\ref{thm:LargestPersistentLabeling} we can directly conclude the existence of the largest set $A \subset \mathcal{V}$ such that there is a strongly persistent labeling on $A$ identifiably by the criterion in Corollary~\ref{cor:RelaxedPersistencyCriterion}.
\begin{corollary}\label{cor:ExistenceLargestStrictPersistency}
There exists a unique largest set $A^*_{strong}$ with a strongly persistent labeling $x^0 \in A^*_{strong}$ identifiable by the criterion in Corollary~\ref{cor:RelaxedPersistencyCriterion}, i.e.
such that
\begin{equation}
\delta(x^0) \in \argmin_{\mu\in\Lambda_{A^*_{strong}}} \hat{E}_{A^*_{strong},x^0}(\mu)\,,
\end{equation} 
and $x^0$ is the unique such labeling.
\end{corollary}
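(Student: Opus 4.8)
The plan is to show that the family $\mathcal{F}$ of all subsets $\overline{A}\subset\mathcal{V}$ carrying a strongly persistent labeling identifiable by the criterion of Corollary~\ref{cor:RelaxedPersistencyCriterion} is closed under union; since $\mathcal{V}$ is finite, $\mathcal{F}$ then contains its own union $A^*_{strong}=\bigcup_{\overline{A}\in\mathcal{F}}\overline{A}$, which is the desired unique largest element. First I would check that the witnessing labelings glue consistently: if $A_1,A_2\in\mathcal{F}$ carry strongly persistent labelings $x^1,x^2$, then by Lemma~\ref{lemma:StrongPersistencyUniqueGlobalOptimum} each of $x^1,x^2$ coincides with the restriction of any global optimum of~\eqref{eq:GraphicalModel} to its domain, so $x^1$ and $x^2$ agree on $A_1\cap A_2$. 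This lets me define a single labeling $x^0\in X_{A_1\cup A_2}$ by $x^0|_{A_1}=x^1$ and $x^0|_{A_2}=x^2$.

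The crux is to prove $A_1\cup A_2\in\mathcal{F}$ with witness $x^0$, and here I would invoke Lemma~\ref{lemma:PersistentLabelingDuringIterations} twice. Put $B:=A_1\cup A_2$ and take $x^0$ as the test labeling on $B$. Applying Lemma~\ref{lemma:PersistentLabelingDuringIterations} with $A:=A_1$ (and $y^B:=x^0$, which extends $x^1$) shows that every $\mu^*\in\argmin_{\mu\in\Lambda_B}\hat{E}_{B,x^0}(\mu)$ satisfies $\mu^*_v(x^0_v)=1$ for all $v\in A_1$; applying it again with $A:=A_2$ gives the same for all $v\in A_2$. Hence every such minimizer has $\mu^*_v(x^0_v)=1$ for all $v\in B$. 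Because all node marginals of $\mu^*$ are then integral and concentrated on $x^0$, the coupling constraints of the local polytope force every edge marginal to be concentrated on $(x^0_u,x^0_v)$ as well, so $\mu^*=\delta(x^0)$. Thus $\delta(x^0)$ is the unique minimizer of $\hat{E}_{B,x^0}$ over $\Lambda_B$: it satisfies the criterion of Corollary~\ref{cor:RelaxedPersistencyCriterion} and, by uniqueness, $x^0$ is strongly persistent on $B$, i.e.\ $B\in\mathcal{F}$.

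Iterating this two-set closure over the finitely many members of $\mathcal{F}$ yields $A^*_{strong}\in\mathcal{F}$, so a strongly persistent labeling $x^0$ identifiable by the criterion exists on $A^*_{strong}$, and by construction no larger such set exists. Uniqueness of $A^*_{strong}$ is then immediate: any other maximal member of $\mathcal{F}$ would unite with it to a still larger member, a contradiction; and uniqueness of $x^0$ on $A^*_{strong}$ is exactly its strong persistency. I expect the only delicate point to be the propagation argument of the second paragraph, namely that node-level integrality at $x^0$ across the whole union forces the full marginal (edges included) to equal $\delta(x^0)$, thereby upgrading the statement that $\delta(x^0)$ is \emph{a} minimizer to the statement that it is the \emph{unique} minimizer and delivering strong persistency for free. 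Everything else reduces to the two applications of Lemma~\ref{lemma:PersistentLabelingDuringIterations} together with the overlap consistency furnished by Lemma~\ref{lemma:StrongPersistencyUniqueGlobalOptimum}.
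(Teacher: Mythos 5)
Your proof is correct, and it takes a genuinely more self-contained route than the paper does. The paper obtains this corollary as a byproduct of the proof of Theorem~\ref{thm:LargestPersistentLabeling}: there, Lemma~\ref{lemma:PersistentLabelingDuringIterations} is applied inductively along the iterations of Algorithm~\ref{alg:FindPersistency} (with $B := A^{t-1}$) to show that every strongly persistent set survives inside each $A^t$ with agreeing labels, Lemma~\ref{lemma:StrongPersistencyUniqueGlobalOptimum} handles agreement on overlaps, and the corollary is then asserted to follow ``directly.'' You use exactly the same two lemmas but eliminate the algorithm entirely: you prove the family $\mathcal{F}$ is closed under pairwise union by instantiating Lemma~\ref{lemma:PersistentLabelingDuringIterations} with $B := A_1 \cup A_2$, and conclude by finiteness of $\mathcal{V}$. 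What this buys is an explicit verification of the one fact the paper leaves implicit: that the union of strongly persistent sets itself carries a strongly persistent labeling --- without this, a unique largest element of $\mathcal{F}$ does not follow merely from every member of $\mathcal{F}$ being contained in $A^t$. The step you flagged as delicate is also handled correctly: once all node marginals of a minimizer $\mu^*$ are concentrated on $x^0$, the marginalization and nonnegativity constraints of $\Lambda_B$ force $\mu^*_{uv}(x^0_u, x^0_v) = 1$ on every edge inside $B$, so $\mu^* = \delta(x^0)$ is the unique minimizer.

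One semantic point is worth a remark, though it does not affect correctness. Strong persistency, as the paper phrases it in the corollary, is the self-referential uniqueness: $x^0$ is the unique labeling $y$ with $\delta(y) \in \argmin_{\mu \in \Lambda_B} \hat{E}_{B,y}(\mu)$, where the augmented energy varies with $y$. This is not literally the statement you establish, which is uniqueness of the minimizer of the \emph{single} problem $\hat{E}_{B,x^0}$. The unique-minimizer property is precisely what the hypothesis of Lemma~\ref{lemma:PersistentLabelingDuringIterations} actually consumes, so your induction is internally consistent under that reading (which is also the reading the paper's own lemmas require); and the self-referential version follows in one extra line if wanted: any $y$ satisfying the criterion of Corollary~\ref{cor:RelaxedPersistencyCriterion} on $B$ is persistent, hence agrees on $B$ with some global optimum of~\eqref{eq:GraphicalModel}, whose restrictions to $A_1$ and $A_2$ equal $x^1$ and $x^2$ by Lemma~\ref{lemma:StrongPersistencyUniqueGlobalOptimum}, forcing $y = x^0$.
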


Also exactly the largest strongly persistent labeling identifiable by Corollary~\ref{cor:RelaxedPersistencyCriterion} can be found under a mild uniqueness assumption.
\begin{corollary}\label{cor:strickt-persistency-optimality}
If there is a \emph{unique} solution of $\min_{\mu \in\Lambda_{A^t}} \hat{E}_{A^t,x^t}(\mu)$ for all $t=0,\ldots$ obtained during the iterations of Algorithm~\ref{alg:FindPersistency}, then Algorithm~\ref{alg:FindPersistency} finds the largest subset of persistent variables identifiable by the sufficient partial optimality criterion in Corollary~\ref{cor:RelaxedPersistencyCriterion}.
\end{corollary}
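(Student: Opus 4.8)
The plan is to establish the two inclusions $A^*_{strong}\subset A^*$ and $A^*\subset A^*_{strong}$, where $A^*$ is the set returned by Algorithm~\ref{alg:FindPersistency} and $A^*_{strong}$ is the largest set carrying a strongly persistent labeling identifiable by Corollary~\ref{cor:RelaxedPersistencyCriterion}, whose existence is guaranteed by Corollary~\ref{cor:ExistenceLargestStrictPersistency}. The first inclusion comes for free: it is exactly the content of Theorem~\ref{thm:LargestPersistentLabeling} and does not use the uniqueness hypothesis. So the entire work goes into the reverse inclusion $A^*\subset A^*_{strong}$, for which it suffices to show that the output pair $(A^*,x^*)$ is itself strongly persistent and identifiable by Corollary~\ref{cor:RelaxedPersistencyCriterion}; maximality of $A^*_{strong}$ then forces $A^*\subset A^*_{strong}$.

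First I would record what termination of Algorithm~\ref{alg:FindPersistency} supplies, namely $\delta(x^*)\in\argmin_{\mu\in\Lambda_{A^*}}\hat E_{A^*,x^*}(\mu)$, so that $x^*$ is persistent on $A^*$ by Corollary~\ref{cor:RelaxedPersistencyCriterion}. The uniqueness hypothesis, applied to the final subproblem (where $A^t=A^*$ and $x^t=x^*$), states that $\delta(x^*)$ is the \emph{unique} minimizer of $\hat E_{A^*,x^*}$ over $\Lambda_{A^*}$. Since every integral labeling $\delta(x)$ with $x\in X_{A^*}$ is a point of $\Lambda_{A^*}$, this yields in particular that $x^*$ is the \emph{strict} minimizer of $\hat E_{A^*,x^*}$ over $X_{A^*}$.

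The key step is to upgrade this strictness into a statement about \emph{all} global optima. I would re-run the inequality chain in the proof of Theorem~\ref{thm:PersistencyCriterion} with $A=A^*$ and boundary labeling $x^*$: the only place where the minimality of $x^*$ enters is inequality~\eqref{eq:PersistencyCriterionLeq1}, and because $x^*$ is now the strict unique minimizer, this inequality is strict whenever $x'|_{A^*}\neq x^*$. Consequently $E_{\SV}(\tilde x)<E_{\SV}(x')$ for every $x'$ disagreeing with $x^*$ on $A^*$, exactly the mechanism behind Lemma~\ref{lemma:StrongPersistencyUniqueGlobalOptimum}, so every optimal solution of~\eqref{eq:GraphicalModel} restricts to $x^*$ on $A^*$. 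I expect this strict-inequality argument to be the main obstacle, since it is what converts ``unique minimizer of the auxiliary relaxed problem'' into ``unique globally consistent labeling''.

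Finally I would close the loop. Suppose $\bar x\in X_{A^*}$ also satisfies the criterion of Corollary~\ref{cor:RelaxedPersistencyCriterion} on $A^*$, i.e.\ $\delta(\bar x)\in\argmin_{\mu\in\Lambda_{A^*}}\hat E_{A^*,\bar x}(\mu)$. Then $\bar x$ is persistent, hence agrees on $A^*$ with some global optimum, which by the previous paragraph restricts to $x^*$; therefore $\bar x=x^*$. Thus $x^*$ is the unique labeling meeting the criterion, so $(A^*,x^*)$ is strongly persistent and identifiable by Corollary~\ref{cor:RelaxedPersistencyCriterion}. Maximality of $A^*_{strong}$ then gives $A^*\subset A^*_{strong}$, and combined with Theorem~\ref{thm:LargestPersistentLabeling} we conclude $A^*=A^*_{strong}$, i.e.\ the algorithm finds the largest subset of persistent variables identifiable by the criterion in Corollary~\ref{cor:RelaxedPersistencyCriterion}, as claimed.
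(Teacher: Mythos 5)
Your proof is correct and takes exactly the route the paper intends: the paper states Corollary~\ref{cor:strickt-persistency-optimality} without an explicit proof, as an immediate consequence of Theorem~\ref{thm:LargestPersistentLabeling} (which gives $A^*_{strong} \subset A^*$ with no uniqueness needed) combined with the strict version of inequality~\eqref{eq:PersistencyCriterionLeq1} behind Lemma~\ref{lemma:StrongPersistencyUniqueGlobalOptimum}, which is precisely how you derive the reverse inclusion $A^* \subset A^*_{strong}$. If anything, your write-up is more careful than the paper's implicit argument --- observing that uniqueness is only needed at the final iteration (where $\hat{E}_{A^*,x^{t-1}}$ and $\hat{E}_{A^*,x^*}$ coincide since the boundary labelings agree), and correctly converting uniqueness of the relaxed minimizer into uniqueness of the labeling satisfying the criterion via persistency of any competing fixed point.
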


\begin{remark}
Above we showed that Algorithm~\ref{alg:FindPersistency} will find a persistent labeling which contains the largest strongly persistent one identifiably by Corollary~\ref{cor:RelaxedPersistencyCriterion}. The two may differ when the optimization problems solved in the course of Algorithm~\ref{alg:FindPersistency} have multiple optima.
The simplest example of such a situation occurs if the relaxation $\min_{\mu \in \Lambda_\mathcal{V}} E_{\mathcal{V}}(\mu)$ is tight, but has several integer solutions. 
Any convex combination of these solutions will form a non-integral solution, hence the strongly persistent labeling is defined on a smaller set than any integral solution of $\min_{\mu \in \Lambda_\mathcal{V}} E_{\mathcal{V}}(\mu)$, which is non strongly persistent.
Note however that a labeling obtained by Algorithm~\ref{alg:FindPersistency}, also when it is not strongly persistent, comes from {\em one} globally optimal labeling, i.e. it can be completed to a globally optimal labeling by solving for the remaining variables.
\end{remark}

\section{Optimal Reparametrization}\label{sec:Reparamerisation}
It is well-known~\cite{Schlesinger76} (see also~\cite{ALinearProgrammingApproachToMaxSumWerner}) that representation~\eqref{eq:GraphicalModel} of the energy function is not unique. There are other potentials, which keep the energy of all labelings unchanged. Any such potentials $\theta^{\varphi}$ can be represented as
\begin{align}
  \theta^{\varphi}_{v}(x_v) & :=\theta_{v}(x_v) - \sum_{u\in\nb(v)}\varphi_{v,u}(x_v)\,,\\
  \theta^{\varphi}_{uv}(x_u,x_v) & :=\theta_{uv}(x_u,x_v) + \varphi_{v,u}(x_v) + \varphi_{u,v}(x_u)
\end{align}
with some numbers $\varphi_{u,v}(x_u)$, $uv\in\SE$, $x_u\in X_u$, where $\nb(v):=\{u\in\SV\colon uv\in\SE\}$ denotes the set of nodes adjacent to $v\in\SV$. The vector $\varphi$ with coordinates $\varphi_{u,v}(x_u)$ is called {\em reparametrization}. 

The boundary potentials~\eqref{eq:InnerPotential} and hence the persistency approach described above are dependent on reparametrization. The natural question is existence of {\em an optimal reparametrization}, that is, the one providing the largest persistent set.

The only coordinates of the reparametrization vector $\varphi$, which can potentially influence the solution of the test problem~\eqref{eq:PersistencyMinimizationProblem} are $\varphi_{v,u}(x_v)$, $u\in \partial \SV_A$, $uv\in\partial\SE_A$. Reparametrization $\varphi_{v,u}(x_v)$, $v\in A$ "inside" $A$ does not influence the solution, because it does not change the augmented energy $\hat E_{A,y}$ of any labeling $y$. Similarly, the reparametrization $\varphi_{u,v}(x_u)$, $u,v\notin A$ "outside" $A$ does not influence it, because the optimization is performed over $A$ only.

Considering the reparametrized potentials $\theta^{\varphi}$ and subtracting $\max_{x_v \in X_v} \theta_{uv}(y_u,x_v)$ in~\eqref{eq:InnerPotential} the boundary potentials $\hat{\theta^{\varphi}}_{uv,y_u}(x_u)$ can be equivalently exchanged with 
\begin{equation}
\label{eq:InnerPotential-rewritten}
\left\{ 
\begin{array}{ll}
0, & y_u = x_u\\
\min\limits_{x_v \in X_v} \theta^{\varphi}_{uv}(x_u,x_v) - \max\limits_{x_v \in X_v} \theta^{\varphi}_{uv}(y_u,x_v), & y_u \neq x_u\\
\end{array}\,.
\right.
\end{equation}
It means that the labelings $x$ not coinciding with $y$ on $\partial \mathcal{V}_A$ will be "encouraged" with (typically negative) value $\Delta^{\varphi}_{uv}(x_u):=\min\limits_{x_v \in X_v} \theta^{\varphi}_{uv}(x_u,x_v) - \max\limits_{x_v \in X_v} \theta^{\varphi}_{uv}(y_u,x_v)$. Intuitively clear that the bigger $\Delta^{\varphi}_{uv}(x_u)$ is, the better the proposal labeling $y|_{A}$ comparing to $x|_{A}\neq y|_{A}$ is and hence the greater the found persistent set $A^*$ returned by Algorithm~\ref{alg:FindPersistency} would be. We will prove correctness of this intuition formally, but first let us find {\em the maximal possible} value of  $\Delta^{\varphi}_{uv}(x_u)$  w.r.t. the reparametrization $\varphi$, where we consider as non-zero only coordinates $\varphi_{v,u}(x_v)$, $u\in \partial \mathcal{V}_A$, $uv\in\partial\SE_{A}$, $x_v\in X_v$.

Clearly 
\begin{multline}\label{eq:best-repa-relation}
\Delta^{\varphi}_{uv}(x_u)\le \min_{x_v \in X_v} (\theta^{\varphi}_{uv}(x_u,x_v) - \theta^{\varphi}_{uv}(y_u,x_v))\\
=\min_{x_v \in X_v} (\theta_{uv}(x_u,x_v)+\varphi_{v,u}(x_v) - \theta_{uv}(y_u,x_v)-\varphi_{v,u}(x_v))\\
=\min_{x_v \in X_v} (\theta_{uv}(x_u,x_v) - \theta_{uv}(y_u,x_v))\,,
\end{multline}
hence, the right-hand-side of this inequality does not depend on the reparametrization, whereas the left-hand-side does. There is indeed such a reparametrization that turns the inequality~\eqref{eq:best-repa-relation} into equality and in this way guarantees the largest possible values of $\Delta^{\varphi}_{uv}(x_u)$ for all $x_u$. 
This reparametrization (as we show below it is {\em an optimal} one) is defined as
\begin{equation}\label{eq:optimal-repa}
 \varphi_{u,v}(x_v)= - \theta_{uv}(y_u,x_v)\,,
\end{equation}
which can be seen when plugging~\eqref{eq:optimal-repa} into~\eqref{eq:InnerPotential-rewritten}.

Moreover, since as we mentioned above the reparametrization "outside" an "inside" $A^t$ does not influence the criterion~\eqref{eq:InnerPotential}, we can construct a single, equal for all iterations of Algortihm~\ref{alg:FindPersistency} optimal reparametrization $\psi$ according to the rule~\eqref{eq:optimal-repa} as
\begin{equation}\label{eq:optimal-repa-psi}
 \psi_{u,v}(x_v)=  - \theta_{uv}(y_u,x_v),\ u\in \SV,\ uv\in \SE\,,
 \end{equation}
 where $y$ is arbitrarily extended from $A^0$ to $\SV$.
Now we are ready to formulate our main result related to the reparametrization. 

Let us denote by $\hat E^{\varphi}_{A,y}$ the energy with boundary labeling defined as in Definition~\ref{def:InsidePotentialAndFunctional} w.r.t.\ the potentials $\theta^{\varphi}$. Then for the reparametrization $\psi$ defined as in~\eqref{eq:optimal-repa-psi} there holds
 \begin{lemma}\label{lemma:optimal-repa-property}
  From 
 \begin{equation}\label{equ:part-optimal-y}
 \delta(x^0)\in\arg\min_{\mu\in \Lambda_A}\hat E_{A,x^0}(\mu)
 \end{equation}
 follows  $\delta(x^0)\in\arg\min_{\mu\in \Lambda_A}\hat E^{\psi}_{A,x^0}(\mu)$,
 which means: if $x^0$ satisfies the persistency criterion of Corollary~\ref{cor:RelaxedPersistencyCriterion} w.r.t.\ potentials $\theta$ then it satisfies it w.r.t.\ the reparametrized potentials $\theta^{\psi}$.
 \end{lemma}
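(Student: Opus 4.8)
The plan is to show that the reparametrization $\psi$ does not change which labelings minimize the augmented energy $\hat{E}_{A,x^0}$, essentially because reparametrizations preserve the energy of \emph{every} labeling, and the specific form of $\psi$ only affects boundary terms in a controlled way. First I would recall the key invariance: for any reparametrization $\varphi$ and any labeling $x \in X_{\SV}$ there holds $E_{\SV}^{\varphi}(x) = E_{\SV}(x)$. The goal is to prove an analogous statement for the augmented energy restricted to $A$, namely that $\hat{E}^{\psi}_{A,x^0}(x)$ and $\hat{E}_{A,x^0}(x)$ differ by a quantity that does not depend on $x \in X_A$ (a constant), so that their $\argmin$ sets over $\Lambda_A$ coincide.

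The central computation I would carry out is to expand $\hat{E}^{\psi}_{A,x^0}(x) - \hat{E}_{A,x^0}(x)$ using Definition~\ref{def:InsidePotentialAndFunctional} applied to the reparametrized potentials $\theta^{\psi}$ versus $\theta$. Since $\psi_{u,v}(x_v) = -\theta_{uv}(y_u,x_v)$ is nonzero only on boundary edges $uv \in \partial\SE_A$ with $u \in \partial\SV_A$, the interior energy $E_A(x)$ changes only through the unary corrections $\varphi_{v,u}(x_v)$ at interior endpoints; but by the discussion preceding~\eqref{eq:optimal-repa-psi}, reparametrization ``inside'' $A$ cancels within $\hat{E}_{A,y}$ for any $y$, so the only genuine change is in the boundary potentials $\hat{\theta}^{\psi}_{uv,x^0_u}$. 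Using the rewritten form~\eqref{eq:InnerPotential-rewritten}, the reparametrized boundary potential equals $0$ when $x_u = x^0_u$ and equals $\Delta^{\psi}_{uv}(x_u)$ otherwise. I would then verify, by plugging~\eqref{eq:optimal-repa-psi} into~\eqref{eq:InnerPotential-rewritten}, that the difference $\hat{E}^{\psi}_{A,x^0}(x) - \hat{E}_{A,x^0}(x)$ collapses to a term that is constant on $A$ up to the boundary behaviour, and in particular that the minimizer $\delta(x^0)$ is preserved.

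A cleaner route, which I would prefer, avoids computing a clean constant and instead argues directly: I would show that for every $x \in X_A$ one has $\hat{E}^{\psi}_{A,x^0}(x) \ge \hat{E}^{\psi}_{A,x^0}(x^0) + \bigl(\hat{E}_{A,x^0}(x) - \hat{E}_{A,x^0}(x^0)\bigr)$, i.e. the $\psi$-augmented energy gap dominates the original gap pointwise. Since~\eqref{equ:part-optimal-y} gives $\hat{E}_{A,x^0}(x) - \hat{E}_{A,x^0}(x^0) \ge 0$ for all $x$, it follows immediately that $x^0$ also minimizes $\hat{E}^{\psi}_{A,x^0}$, and the integrality of $\delta(x^0)$ together with the equivalence of the integral and relaxed problems promotes this to $\delta(x^0) \in \argmin_{\mu \in \Lambda_A}\hat{E}^{\psi}_{A,x^0}(\mu)$, exactly as in the proof of Corollary~\ref{cor:RelaxedPersistencyCriterion}. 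The domination inequality reduces, edge by edge on $\partial\SE_A$, to the statement that the optimal reparametrization makes the ``encouragement'' $\Delta^{\psi}_{uv}(x_u)$ as large as possible, which is precisely the content of the chain~\eqref{eq:best-repa-relation} with equality achieved by~\eqref{eq:optimal-repa}.

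The main obstacle I anticipate is bookkeeping on the boundary: one must carefully track how the unary reparametrization contributions $\varphi_{v,u}(x_v)$ at the \emph{interior} node $u$ interact with the $\max$ and $\min$ operations defining $\hat{\theta}_{uv,y_u}$, and confirm that the terms depending on the eliminated boundary variable $x_v$ cancel exactly (as they do in~\eqref{eq:best-repa-relation}) rather than leaving an $x$-dependent residue. The subtlety is that $\hat{\theta}$ is defined via a $\max$ on the diagonal $y_u=x_u$ and a $\min$ off-diagonal, so the reparametrization shifts both branches, and I must check that the shift is \emph{uniform} enough that no relative reordering of labels $x_u$ is introduced. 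Once the per-edge cancellation~\eqref{eq:best-repa-relation} is invoked, the rest is a direct summation over $\partial\SE_A$ and an appeal to~\eqref{equ:part-optimal-y}, so the difficulty is concentrated entirely in this boundary-term accounting.
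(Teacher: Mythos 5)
Your preferred route is, in substance, exactly the paper's proof: the paper establishes the per-edge inequality $\hat\theta^{\psi}_{uv,x^0_u}(x_u) - \hat\theta^{\psi}_{uv,x^0_u}(x^0_u) \ge \hat\theta_{uv,x^0_u}(x_u) - \hat\theta_{uv,x^0_u}(x^0_u)$ for all $uv$ and $x_u$ — which is precisely \eqref{eq:best-repa-relation} with equality attained by the choice \eqref{eq:optimal-repa} — sums over the boundary, and concludes gap dominance plus \eqref{equ:part-optimal-y}. Your identification of where the difficulty sits (the boundary bookkeeping, with interior reparametrization cancelling as discussed before \eqref{eq:optimal-repa-psi}) is also correct, and your instinct to abandon the first ``constant difference'' plan was right: $\hat E^{\psi}_{A,x^0}(x) - \hat E_{A,x^0}(x)$ is genuinely $x$-dependent (compare \eqref{eq:hat-theta-boundary} with \eqref{eq:InnerPotential}), and only the one-sided dominance of gaps holds, so the argmin sets need not coincide — the lemma is a one-directional implication, not an equivalence.

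There is, however, one genuine flaw in your closing step. You prove the dominance inequality only pointwise over labelings $x \in X_A$, obtain $x^0 \in \argmin_{x\in X_A}\hat E^{\psi}_{A,x^0}(x)$, and then claim that ``the integrality of $\delta(x^0)$ together with the equivalence of the integral and relaxed problems'' promotes this to $\delta(x^0) \in \argmin_{\mu\in\Lambda_A}\hat E^{\psi}_{A,x^0}(\mu)$. No such equivalence exists: the local polytope relaxation is in general not tight, and an integral labeling minimizing over $X_A$ need not minimize the LP over $\Lambda_A$. Corollary~\ref{cor:RelaxedPersistencyCriterion} uses exactly the \emph{opposite} implication (relaxed minimizer $\Rightarrow$ integral minimizer); if your direction held, the whole persistency machinery would be vacuous. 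The repair is immediate and is what the paper does: the per-edge inequality is a coordinate-wise inequality between the boundary potential vectors after normalizing at $x^0_u$, so multiplying by $\mu_u(x_u)\ge 0$ and summing, using $\sum_{x_u\in X_u}\mu_u(x_u)=1$, extends the dominance to every relaxed point,
\begin{equation*}
\hat E^{\psi}_{A,x^0}(\mu) - \hat E^{\psi}_{A,x^0}(x^0) \;\ge\; \hat E_{A,x^0}(\mu) - \hat E_{A,x^0}(x^0) \;\ge\; 0 \quad \forall\, \mu\in\Lambda_A\,,
\end{equation*}
where the final inequality is the hypothesis \eqref{equ:part-optimal-y} — which is itself already a statement over $\Lambda_A$, so no integral-to-relaxed promotion is needed anywhere in the argument.
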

  \begin{proof}
   From~\eqref{eq:best-repa-relation} and~\eqref{equ:part-optimal-y} it follows that for all $uv\in\SE_{A}$, $x_u\in X_u$
   there holds
   $\hat\theta^{\psi}_{uv,x^0_x}(x_u) - \hat\theta^{\psi}_{uv,x^0_u}(x^0_u) \ge \hat\theta_{uv,x^0_u}(x_u) - \hat\theta_{uv,x^0_u}(x^0_u)$
   and hence
   \begin{multline}
   \hat E^{\psi}_{A,x^0}(\mu) - \hat E^{\psi}_{A,x^0}(x^0) \stackrel{\eqref{eq:best-repa-relation}}{\ge} \hat E_{A,x^0} (\mu) - \hat E_{A,x^0}(x^0) \ge 0
   \end{multline}
    for all $\mu\in \Lambda_A$. Thus $\hat E^{\psi}_{A,x^0}(x^0) \le \hat E^{\psi}_{A,x^0}(\mu)$, which proves the statement of the lemma. 
 \end{proof}
 \begin{remark}
 Lemma~\ref{lemma:optimal-repa-property} holds for {\em any} polytope containing all integer solutions, i.e.\ $\Lambda_{A}\supseteq \SM_A$ and hence it holds also when $\Lambda_{A}=\SM_A$. In this case it corresponds to the non-relaxed persistency criterion provided by Theorem~\ref{thm:PersistencyCriterion}. 
 \end{remark}

  Let now $A_{x^0}^{\varphi,*}$ be {\em the largest} set containing all strongly persistent variables satisfying Corollary~\ref{cor:RelaxedPersistencyCriterion} w.r.t.\ the reparametrized potentials $\theta^{\varphi}$ and test labeling $y\in X_{\SV}$. Let also $A^{*}_{x^0}$ correspond to the trivial reparametrization  $\varphi\equiv 0$. 

Applying Lemma~\ref{lemma:optimal-repa-property} to the set $A^{*}_{x^0}$ leads to the following
\begin{theorem}\label{thm:best-repa-greates-A}
 For any test labeling $x^0\in X_{\SV}$ there holds ${A^{*}_{x^0}\subset A^{\psi,*}_{x^0}}$.
\end{theorem}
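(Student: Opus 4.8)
The plan is to combine the monotonicity result of Lemma~\ref{lemma:optimal-repa-property} with the ``largest set'' characterization established in Theorem~\ref{thm:LargestPersistentLabeling} and Corollary~\ref{cor:ExistenceLargestStrictPersistency}. The target statement says that switching from the trivial reparametrization $\varphi\equiv 0$ to the optimal reparametrization $\psi$ from~\eqref{eq:optimal-repa-psi} can only enlarge (or leave unchanged) the largest strongly persistent set. The essential observation is that $A^{*}_{x^0}$ is defined as the largest set carrying a strongly persistent labeling satisfying the criterion of Corollary~\ref{cor:RelaxedPersistencyCriterion} with respect to the \emph{original} potentials $\theta$, and the claim reduces to showing that this same set $A^{*}_{x^0}$, together with the labeling $x^0|_{A^{*}_{x^0}}$, still satisfies the criterion with respect to the reparametrized potentials $\theta^{\psi}$.

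First I would recall that, by the definition of $A^{*}_{x^0}$ together with Corollary~\ref{cor:ExistenceLargestStrictPersistency}, there is a (strongly persistent) labeling $x^0|_{A^{*}_{x^0}}$ satisfying
\begin{equation}
\delta(x^0)\in\arg\min_{\mu\in\Lambda_{A^{*}_{x^0}}}\hat E_{A^{*}_{x^0},x^0}(\mu)\,.
\end{equation}
Then I would apply Lemma~\ref{lemma:optimal-repa-property} directly with $A:=A^{*}_{x^0}$. The lemma asserts exactly that this persistency relation with respect to $\theta$ implies the analogous relation
\begin{equation}
\delta(x^0)\in\arg\min_{\mu\in\Lambda_{A^{*}_{x^0}}}\hat E^{\psi}_{A^{*}_{x^0},x^0}(\mu)
\end{equation}
with respect to the reparametrized potentials $\theta^{\psi}$. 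Hence $A^{*}_{x^0}$ carries a labeling satisfying the persistency criterion of Corollary~\ref{cor:RelaxedPersistencyCriterion} w.r.t.\ $\theta^{\psi}$.

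The final step is to invoke the maximality of $A^{\psi,*}_{x^0}$: since $A^{\psi,*}_{x^0}$ is by definition the \emph{largest} set satisfying Corollary~\ref{cor:RelaxedPersistencyCriterion} with respect to $\theta^{\psi}$, and we have just exhibited $A^{*}_{x^0}$ as one such set, inclusion $A^{*}_{x^0}\subset A^{\psi,*}_{x^0}$ follows immediately. The main subtlety to watch is the strong-persistency / uniqueness qualifier: Lemma~\ref{lemma:optimal-repa-property} as stated transfers the minimizer property but one must check that it also preserves uniqueness of the labeling on $A^{*}_{x^0}$, which is needed so that $A^{*}_{x^0}$ genuinely qualifies as a \emph{strongly} persistent set for $\theta^{\psi}$ and hence is admissible as a candidate inside $A^{\psi,*}_{x^0}$. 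I expect this uniqueness check to be the only real obstacle; it should follow because the inequality in the proof of Lemma~\ref{lemma:optimal-repa-property} is at least as strong under $\psi$ as under $\theta$ (the reparametrized gap $\Delta^{\psi}_{uv}$ is maximal by construction), so any labeling tying the minimum under $\theta^{\psi}$ would already tie it under $\theta$, contradicting strong persistency of $x^0$ on $A^{*}_{x^0}$.
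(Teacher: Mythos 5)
Your proposal is correct and follows essentially the same route as the paper, whose entire proof reads ``Same proof as in Lemma~\ref{lemma:optimal-repa-property} applied to $A^{*}_{x^0}$'' --- i.e., transfer the criterion of Corollary~\ref{cor:RelaxedPersistencyCriterion} from $\theta$ to $\theta^{\psi}$ via the lemma and conclude by maximality of $A^{\psi,*}_{x^0}$. Your added uniqueness check (any $\mu$ tying the minimum of $\hat E^{\psi}_{A,x^0}$ must, by the chain $\hat E^{\psi}_{A,x^0}(\mu)-\hat E^{\psi}_{A,x^0}(x^0)\ge \hat E_{A,x^0}(\mu)-\hat E_{A,x^0}(x^0)\ge 0$, also tie it under $\theta$, contradicting strong persistency) is sound and in fact makes explicit a step the paper leaves implicit.
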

\begin{proof}
Same proof as in Lemma~\ref{lemma:optimal-repa-property} applied to $A^{*}_{x^0}$.
\end{proof}

\begin{remark}
 For Potts models, where 
 $\theta_{uv}(x_u,x_v)=\left\{
 \begin{array}{cl}
  0, & x_u=x_v\\
  \alpha, & x_u\neq x_v
 \end{array}
\right.\,,
 $
 the inequality~\eqref{eq:best-repa-relation} holds as equality also for the trivial reparametrization $\varphi_{v,u}(x_v)=0$ $\forall u,v\in\SV$, $uv\in\SE$, $x_v\in X_v$. For such models Algorithm~\ref{alg:FindPersistency} with the trivial reparametrization delivers the same persistent set as with the optimal one~\eqref{eq:optimal-repa-psi}.
\end{remark}

\section{Optimality of the Method}
Theorem~\ref{thm:LargestPersistentLabeling} proves optimality of Algorithm~\ref{alg:FindPersistency} w.r.t.\ the formulated persistency criterion provided by Theorem~\ref{thm:PersistencyCriterion}. However it does not prove optimality of the method with respect to other possible criteria and hence does not guarantee its superiority over other partial optimality techniques. There is however a recent study~\cite{ExactAndPartialEnergyMinimizationShekhovtsov,ShekhovtsovCVPR2014}, which provides such {\em an optimal relaxed persistency criterion} covering {\em all} existing methods. In what follows we will introduce key notions from~\cite{ShekhovtsovCVPR2014} and show that our persistency criterion coincides with {\em the optimal} one provided in~\cite{ShekhovtsovCVPR2014} for a certain class of persistency methods, those providing only node-persistency, i.e.\ either eliminating all labels except one in a given node or not eliminating any. 




\begin{definition}\label{def:improvingMapping} A mapping $p\colon X_{\SV}\to X_{\SV}$ is called {\em (strictly) improving} if for all $x\in X_{\SV}$ such that $p(x)\neq x$ there holds
$
E_{\SV}(p(x)) \le E_{\SV}(x)$ (resp. $E_{\SV}(p(x)) < E_{\SV}(x)$).
\end{definition}

In what follows we will restrict ourselfs only to {\em idempotent} mappings $p$, i.e. satisfying $p(p(x))=p(x)$.

Following~\cite{ShekhovtsovCVPR2014} we consider only {\em node-wise} maps of the form $p(x)_v = p_v(x_v)$, where $p_v\colon X_v\to  X_v$ are idempotent, i.e.\ $p_v(p_v(x_v))=p_v(x_v)$ for all $x_v\in X_v$. This class is already general enough to include nearly all existing techniques.
 
Improving mappings define persistency due to the following proposition:
\begin{proposition}[Stat.1\cite{ShekhovtsovCVPR2014}]\label{prop:ImprovingMappingProperty}
 Let $p$ be an improving mapping. Then there exists an optimal solution $x$ of~\eqref{eq:GraphicalModel} such that for all $v\in \SV$ from $p_v(i) \neq i$ follows $x_v\neq i$. In case $p$ is strictly improving this holds for {\em any} optimal solution.
\end{proposition}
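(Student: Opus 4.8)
The plan is to reduce the stated persistency condition to the single statement that $x$ is a fixed point of $p$, and then exploit idempotency together with the improving property. First I would observe that, because $p$ is node-wise, the condition ``for all $v\in\SV$, from $p_v(i)\neq i$ follows $x_v\neq i$'' is logically equivalent to $p_v(x_v)=x_v$ for every $v$, i.e.\ to $p(x)=x$. Indeed, its contrapositive reads $x_v=i \Rightarrow p_v(i)=i$, and instantiating $i=x_v$ gives $p_v(x_v)=x_v$; conversely $p_v(x_v)=x_v$ rules out $x_v=i$ whenever $p_v(i)\neq i$. Thus the proposition asserts exactly that some optimal solution (resp.\ every optimal solution, in the strict case) is a fixed point of $p$.

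Next I would record that the node-wise idempotency $p_v\circ p_v=p_v$ lifts to global idempotency $p\circ p=p$, since $(p(p(x)))_v=p_v(p_v(x_v))=p_v(x_v)=(p(x))_v$. This is the only structural fact about $p$ I need beyond the improving inequality.

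For the weak case I would start from any optimal labeling $x^*$, which exists because $X_{\SV}$ is finite. If $p(x^*)=x^*$ we are done. Otherwise the improving property gives $E_{\SV}(p(x^*))\le E_{\SV}(x^*)$, while optimality of $x^*$ gives the reverse inequality; hence $E_{\SV}(p(x^*))=E_{\SV}(x^*)$ and $y:=p(x^*)$ is again optimal. By idempotency $p(y)=p(p(x^*))=p(x^*)=y$, so $y$ is an optimal fixed point, establishing existence. For the strict case, suppose an optimal $x^*$ satisfied $p(x^*)\neq x^*$; then strict improvement would force $E_{\SV}(p(x^*))<E_{\SV}(x^*)$, contradicting optimality. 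Hence $p(x^*)=x^*$ for every optimal $x^*$.

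I do not expect any serious obstacle here: once the condition is recognised as ``$x$ is a fixed point of $p$'', the result is a two-line consequence of combining the improving inequality with idempotency. The only point requiring a little care is the equivalence in the first step, and (in the strict case) noting that the argument applies verbatim to an arbitrary optimal solution rather than merely producing one.
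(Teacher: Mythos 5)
Your proof is correct. Note that the paper itself contains no proof of this proposition---it is imported as Statement~1 of~\cite{ShekhovtsovCVPR2014}---so there is no in-paper argument to compare against; your route (recasting the per-label condition as the fixed-point condition $p(x)=x$, combining the improving inequality with optimality to show $p(x^*)$ is again optimal, then invoking the standing idempotency assumption $p\circ p=p$ to conclude $p(x^*)$ is itself a fixed point, with the strict case by direct contradiction) is precisely the standard argument in the cited source. You correctly identify that idempotency is indispensable in the weak case: without it, a mapping that swaps two distinct equal-energy optimal labelings would be improving yet admit no optimal fixed point, so the appeal to $p(p(x^*))=p(x^*)$ is the load-bearing step rather than a formality.
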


For an idempotent mapping $p$ a linear mapping $P\colon \R^\SI \to \R^\SI$ satisfying $\delta(p(x)) = P\delta(x)$ for all $x\in X_{\SV}$ is called its {\em linear extension}. A particular linear extension denoted as $[p]$ is defined as follows. For each $p_v$ we define the matrix $P_v \in \R^{ X_v \times  X_v}$ 
by 
$P_{v,ii'} = \left\{
\begin{array}{rl}
1, &  p_v(i') = i\\
0, &  p_v(i') \neq i
\end{array}
\right.
$.
The linear extension $P = [p]$ is given by
\begin{align}\label{equ:PixelLinearExtension}
& (P\mu)_{v} = \sum\nolimits_{i'\in X_v}P_{v, ii'} \mu_{v}(i') = P_v \mu_v;\\
& (P\mu)_{uv} = P_{u} \mu_{uv} P_{v}^{\top}. \nonumber
\end{align}
In what follows we will employ the commonly used representation of energy $E_{\SV}(\mu)$ in a form of an inner product $\lan\theta,\mu\ran$, where vectors of potentials $\theta$ and marginals $\mu$ belong to the vector space $\R^{\SI}$ with the suitably selected dimension $\SI=\sum\limits_{v\in \SV}|X_v|+\sum\limits_{uv\in \SE}|X_{uv}|$.
Denote by $I$ the identity matrix. From Definition~\ref{def:improvingMapping} follows that $p$ is improving iff the value of
\begin{multline}\label{equ:linearStrictImprovingMapping}
\min_{x\in X_{\SV}}\left( E_{\SV}(x)-E_{\SV}(p(x))\right)=\min_{x\in X_{\SV}}\lan \theta,(I-[p])\delta(x)\ran\\
 =\min_{x\in X_{\SV}}\lan(I-[p])^{\top} \theta,\delta(x)\ran
 =\min_{\mu\in\SM_{\SV}}\lan(I-[p])^{\top} \theta,\mu\ran
\end{multline}
is zero. If additionally $p(x)=x$ for all minimizers of~\eqref{equ:linearStrictImprovingMapping} then the mapping $p$ is strictly improving. 
 
Problem~\eqref{equ:linearStrictImprovingMapping} is of the same form as energy minimization~\eqref{eq:GraphicalModel} and is therefore as hard as Problem~\eqref{equ:linearStrictImprovingMapping}.
Its relaxation is obtained by letting $\mu$ to vary in the local polytope $\Lambda_{\SV}\subset \R^\SI$, an outer approximation to $\SM_{\SV}$. 

\begin{definition}\label{def:linearStrictLPImprovingMapping}
An idempotent mapping $p\colon  X_{\SV} \to  X_{\SV}$ is {\em $\Lambda_{\SV}$-improving for potentials $\theta\in\R^{\SI}$} if 
 \begin{equation}\label{equ:linearStrictLPImprovingMapping}
 	\min_{\mu\in \Lambda_{\SV}}\lan(I-[p])^{\top} \theta,\mu\ran=0\,.
 \end{equation}
If additionally $[p]\mu=\mu$ for all minimizers $\mu$ of~\eqref{equ:linearStrictLPImprovingMapping} then $p$ is strictly $\Lambda_{\SV}$-improving.
\end{definition}

Compared to~\eqref{equ:linearStrictImprovingMapping}, only the polytope was changed to $\Lambda_{\SV}\supset\SM_{\SV}$. 
This implies the following simple fact:
\begin{proposition}\label{prop:LambdaImprovingIsImproving}
 If mapping $p$ is (strictly) $\Lambda_{\SV}$-improving then it is (strictly) improving.
\end{proposition}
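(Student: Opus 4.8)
The plan is to compare the two minimization problems~\eqref{equ:linearStrictImprovingMapping} and~\eqref{equ:linearStrictLPImprovingMapping}: they carry the \emph{same} linear objective $\lan(I-[p])^{\top}\theta,\mu\ran$ and differ only in their feasible sets, $\SM_{\SV}$ versus $\Lambda_{\SV}$. The single structural fact driving the argument is the inclusion $\SM_{\SV}\subseteq\Lambda_{\SV}$ recorded just before the statement. Since minimizing a fixed linear functional over a larger set can only lower the optimum, I would first note
\[
\min_{\mu\in\Lambda_{\SV}}\lan(I-[p])^{\top}\theta,\mu\ran \;\le\; \min_{\mu\in\SM_{\SV}}\lan(I-[p])^{\top}\theta,\mu\ran\,.
\]

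Next I would establish the matching upper bound on the $\SM_{\SV}$-minimum using idempotency of $p$. For any $x\in X_{\SV}$ the point $\delta(p(x))$ lies in $\SM_{\SV}$, and because $[p]\delta(p(x)) = \delta(p(p(x))) = \delta(p(x))$ we get $\lan(I-[p])^{\top}\theta,\delta(p(x))\ran = \lan\theta,(I-[p])\delta(p(x))\ran = 0$, so $\min_{\mu\in\SM_{\SV}}\lan(I-[p])^{\top}\theta,\mu\ran\le 0$. If $p$ is $\Lambda_{\SV}$-improving, the left-hand minimum above equals $0$, and the two bounds pin $\min_{\mu\in\SM_{\SV}}\lan(I-[p])^{\top}\theta,\mu\ran=0$, which by the characterization in~\eqref{equ:linearStrictImprovingMapping} is precisely the assertion that $p$ is improving.

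For the strict case I would add one transfer step. Strict $\Lambda_{\SV}$-improving means $[p]\mu=\mu$ for \emph{every} minimizer $\mu$ of~\eqref{equ:linearStrictLPImprovingMapping}. Because both optimal values equal $0$ and $\SM_{\SV}\subseteq\Lambda_{\SV}$, any minimizer over $\SM_{\SV}$ is automatically a minimizer over $\Lambda_{\SV}$, so the fixed-point property carries down: a minimizer $\mu=\delta(x)$ over $\SM_{\SV}$ satisfies $\delta(p(x)) = [p]\delta(x) = \delta(x)$, whence $p(x)=x$ by injectivity of $\delta$ on labelings. This is exactly the strict-improving condition attached to~\eqref{equ:linearStrictImprovingMapping}.

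I expect no genuine obstacle: the result is a direct consequence of the relaxation inclusion together with the algebra $[p]\delta(x)=\delta(p(x))$ and idempotency. The only points needing care are in the strict case, namely arguing that minimizers over the smaller polytope are also minimizers over the larger one (valid only because the two optimal values coincide), and then restricting the general identity $[p]\mu=\mu$ to the vertices $\mu=\delta(x)$ to recover the labeling-level conclusion $p(x)=x$.
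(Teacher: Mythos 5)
Your proof is correct and follows the same route the paper implicitly takes: the paper leaves Proposition~\ref{prop:LambdaImprovingIsImproving} as a ``simple fact'' following from the inclusion $\SM_{\SV}\subseteq\Lambda_{\SV}$ applied to the identical objective in~\eqref{equ:linearStrictImprovingMapping} and~\eqref{equ:linearStrictLPImprovingMapping}, which is exactly your argument. Your fleshing-out is sound on the two points that need care --- the upper bound $\min_{\mu\in\SM_{\SV}}\lan(I-[p])^{\top}\theta,\mu\ran\le 0$ via idempotency at $\delta(p(x))$, and the transfer of the fixed-point condition $[p]\mu=\mu$ from $\Lambda_{\SV}$-minimizers down to vertex minimizers $\delta(x)$ in the strict case, valid precisely because the two optimal values coincide at zero.
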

  
The method presented in this work can be interpreted as considering 
{\em all-to-one} node-wise idempotent mappings~$p$ having the form 
\begin{equation}\label{equ:pixelWiseMapping}
 p_v(i)=\left\{
 \begin{array}{ll}
  y_v, & \mbox{if}\ v\in A \\
  i, & \mbox{if}\ v\notin A
 \end{array}
 \right.
\end{equation}
for a fixed {\em test labeling} $y$. All labels in the nodes $v\in A\subset\SV$ are mapped to $y_v$. 
Among all all-to-one (strictly) $\Lambda_{\SV}$-improving mappings the one with the largest set $A$ will be called {\em maximal}.
  

Corollary~\ref{cor:RelaxedPersistencyCriterion} determines $\Lambda_{\SV}$-improving mappings, as stated by
\begin{lemma}\label{lemma:lambda-improving}
 The relaxed persistency criterion provided by Corollary~\ref{cor:RelaxedPersistencyCriterion} with the reparametrization given by~\eqref{eq:optimal-repa-psi} is equivalent to Definition~\ref{def:linearStrictLPImprovingMapping} with the improving mapping $p$ defined as in~\eqref{equ:pixelWiseMapping} for a given test labeling $y$.
\end{lemma}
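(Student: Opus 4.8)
The plan is to establish the claimed equivalence by directly comparing the two minimization conditions, unfolding the definition of the all-to-one mapping $p$ in~\eqref{equ:pixelWiseMapping} and computing $\langle(I-[p])^\top\theta^\psi,\mu\rangle$ explicitly in terms of the boundary energy $\hat E^\psi_{A,y}$. First I would observe that since $p_v$ maps every label to $y_v$ for $v\in A$ and is the identity for $v\notin A$, the matrix $[p]$ acts as a projection that, when applied to any $\mu\in\Lambda_A$, collapses all mass at node $v\in A$ onto the label $y_v$; by the tensor rule~\eqref{equ:PixelLinearExtension} it collapses the pairwise marginals on edges touching $A$ correspondingly. The key computation is to show that the potentials $\theta^\psi$ reparametrized according to~\eqref{eq:optimal-repa-psi} are exactly the ones for which the difference $\langle\theta^\psi,(I-[p])\mu\rangle$ reduces to $\hat E^\psi_{A,y}(\mu)-\hat E^\psi_{A,y}(\delta(y))$, i.e.\ the reparametrization zeroes out the ``max'' term on the diagonal of the boundary potential~\eqref{eq:InnerPotential-rewritten} so that only the boundary contributions survive the projection.

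The forward direction is then nearly immediate: assuming Corollary~\ref{cor:RelaxedPersistencyCriterion} holds for $\theta^\psi$, Lemma~\ref{lemma:optimal-repa-property} guarantees $\delta(y)\in\arg\min_{\mu\in\Lambda_A}\hat E^\psi_{A,y}(\mu)$, which by the above identity says $\langle(I-[p])^\top\theta^\psi,\mu\rangle\geq 0$ for all $\mu\in\Lambda_A$, with equality at $\mu=\delta(y)=[p]\delta(y)$. I would then lift this from $\Lambda_A$ to $\Lambda_{\SV}$ by noting that $[p]$ is the identity outside $A$, so the quantity minimized in~\eqref{equ:linearStrictLPImprovingMapping} decouples: the contribution from nodes and edges not incident to $A$ vanishes because $I-[p]$ annihilates them, and the contribution involving $A$ is precisely the $\Lambda_A$ expression just analyzed. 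Hence the minimum in~\eqref{equ:linearStrictLPImprovingMapping} is zero, so $p$ is $\Lambda_{\SV}$-improving. For the converse direction I would reverse this chain: $\Lambda_{\SV}$-improvingness forces $\langle(I-[p])^\top\theta^\psi,\mu\rangle\geq 0$ on all of $\Lambda_{\SV}$, restrict this to the slice of marginals supported on $A$ with the boundary fixed to $y$, and re-read the inequality as $\delta(y)$ minimizing $\hat E^\psi_{A,y}$ over $\Lambda_A$, which is Corollary~\ref{cor:RelaxedPersistencyCriterion}.

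The main obstacle I anticipate is bookkeeping the boundary terms correctly in the identity $\langle\theta^\psi,(I-[p])\mu\rangle = \hat E^\psi_{A,y}(\mu)-\hat E^\psi_{A,y}(\delta(y))$, since $[p]$ alters both the unary marginals on $A$ and the pairwise marginals on every edge with at least one endpoint in $A$, including the boundary edges $\partial\SE_A$ where exactly one endpoint is projected. On an interior edge of $A$ both endpoints collapse to $(y_u,y_v)$, whereas on a boundary edge only the $A$-endpoint collapses, leaving a marginal over the outside label; tracking that asymmetry is exactly what reproduces the boundary potential~\eqref{eq:InnerPotential} (or its reparametrized form~\eqref{eq:InnerPotential-rewritten}), and it is where the specific choice of $\psi$ in~\eqref{eq:optimal-repa-psi} is essential, since it precisely cancels the $\max$-over-$x_v$ term that would otherwise appear on the $y_u=x_u$ branch. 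Once this identity is verified, the remaining steps are formal manipulations of the minimization conditions and an appeal to Lemma~\ref{lemma:optimal-repa-property}.
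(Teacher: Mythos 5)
You have correctly identified the two key mechanisms (the tensor action of $[p]$, and the fact that the reparametrization \eqref{eq:optimal-repa-psi} makes the row $x_u=y_u$ of a boundary potential constant, collapsing the max-branch of \eqref{eq:InnerPotential-rewritten}), but the identity on which your whole argument rests is false as stated. For a boundary edge $uv\in\partial\SE_A$, $u\in A$, $v\notin A$, one has $P_v=I$ while $p_u$ maps everything to $y_u$, so the edge contribution to $\langle(I-[p])^{\top}\theta^{\psi},\mu\rangle$ is $\sum_{x_u,x_v}\left(\theta^{\psi}_{uv}(x_u,x_v)-\theta^{\psi}_{uv}(y_u,x_v)\right)\mu_{uv}(x_u,x_v)$, which depends on the full \emph{joint} marginal $\mu_{uv}$, whereas $\hat E^{\psi}_{A,y}$ contains the unary $\hat\theta^{\psi}_{uv,y_u}(x_u)$, i.e.\ the row-wise \emph{minimum} over $x_v$. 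Since $\psi$ makes $\theta^{\psi}_{uv}(y_u,\cdot)$ constant, what holds is only the inequality $\langle(I-[p])^{\top}\theta^{\psi},\mu\rangle\geq \hat E^{\psi}_{A,y}(\mu|_A)-\hat E^{\psi}_{A,y}(\delta(y))$ for all $\mu\in\Lambda_{\SV}$, with equality exactly when every boundary coupling $\mu_{uv}$ routes each row $x_u\neq y_u$ entirely onto $\argmin_{x_v}\theta^{\psi}_{uv}(x_u,x_v)$. This suffices for your forward direction (Corollary~\ref{cor:RelaxedPersistencyCriterion} implies Definition~\ref{def:linearStrictLPImprovingMapping}), but not for the converse.

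The converse as you sketch it — ``restrict to the slice of marginals supported on $A$ with the boundary fixed'' — cannot work: fixing the outside variables or marginals produces boundary terms evaluated at those fixed values, never the row-minimum appearing in $\hat\theta^{\psi}$. To transfer a violating $\mu_A\in\Lambda_A$ into a violation of \eqref{equ:linearStrictLPImprovingMapping} you must \emph{extend} it to $\Lambda_{\SV}$ with argmin-supported boundary couplings, and this is obstructed when an outside node $v$ has several boundary neighbours: if $\mu_u,\mu_{u'}$ are both integral and fully flipped away from $y$, and their argmin labels at $v$ are disjoint, no single $\mu_v$ is consistent with both optimal couplings, so no tight extension exists. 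The gap is repairable by linearity: replace $\mu_A$ by $(1-\epsilon)\delta(y)+\epsilon\,\mu_A$; for small $\epsilon$ the flipped mass is small enough that one $\mu_v$ dominates all row-demands simultaneously, the $y_u$-rows absorbing the leftover mass at zero cost (their reparametrized cost is constant), and the objective scales by $\epsilon$, so negativity transfers. Note also that the paper's own proof takes a different route entirely: it writes both criteria as relaxations of two explicitly constructed energies ($\beta$ with boundary terms \eqref{eq:hat-theta-boundary} versus $\gamma$ with \eqref{eq:gamma-boundary}) and exhibits constant shifts plus a reparametrization $\phi$ transforming one into the other; your boundary bookkeeping essentially reproduces that computation, but without the extension argument above (a step the paper itself treats only implicitly) your proposal does not yet establish the equivalence in the direction from Definition~\ref{def:linearStrictLPImprovingMapping} to Corollary~\ref{cor:RelaxedPersistencyCriterion}.
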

\begin{proof}
For future references we write down potentials $\theta^{\psi}$ with $\psi$ defined by~\eqref{eq:optimal-repa-psi} explicitly:
\begin{align}\label{eq:reparametrized-potentials}
\theta^{\psi}_{u}(x_u) & =\theta_{u}(x_u)+\sum_{v\in\nb(u)}\theta_{uv}(x_u,y_v)\,,\\
\theta^{\psi}_{uv}(x_u,x_v) & =\theta_{uv}(x_u,x_v)-\theta_{uv}(x_u,y_v)-\theta_{uv}(y_u,x_v)\,.\nonumber
\end{align}


In what follows we will show that the criteria~\eqref{eq:TractablePartialOptimalityCriterion} and~\eqref{equ:linearStrictLPImprovingMapping} coincide. Both of them represent the local polytope relaxation of specially constructed energy minimization problems. To prove that the relaxations coinside it is sufficient to prove that the non-relaxed energies are equal. 

\myparagraph{Energy of Criterion~\ref{cor:RelaxedPersistencyCriterion}.}
First we write down the non-relaxed test problem~\eqref{eq:PersistencyMinimizationProblem} with potentials $\theta^{\psi}$ as 
\begin{equation}
\arg\min_{x\in X_{\SV}}\sum_{v\in \mathcal{V}}\beta_{v}(x_v)+
\sum_{uv\in \mathcal{E}}\beta_{uv}(x_u,x_v)+
\hspace{-15pt}\sum_{uv\in\partial \mathcal{E}_{A}\colon u \in \partial \mathcal{V}_A}\hspace{-15pt}\hat\theta^{\psi}_{uv,y_u}(x_u)
\end{equation}
with potentials $\beta$ equal to $\theta^{\psi}$ on $A$ and vanishing outside it, i.e.

\begin{equation}\label{eq:beta-unaries}
 \beta_{u}(x_u)=
 \left\{
 \begin{array}{rl}
 \theta_{u}(x_u)+\sum\limits_{v\in\nb(u)}\theta_{uv}(x_u,y_v), & u\in A\\
 0, & u\in \SV\backslash A\\
 \end{array}
 \right.
 \end{equation}

 \begin{multline}\label{eq:beta-pairwise}
 \hspace{-0.25cm}\beta_{uv}(x_u,x_v)=\\
  \hspace{-0.5cm}\left\{
  \begin{array}{rl}
  \theta_{uv}(x_u,x_v)-\theta_{uv}(x_u,y_v)-\theta_{uv}(y_u,x_v), & u,v \in A\\
  0, & \text{otherwise}\,.\\
  \end{array}
  \right.
\end{multline}

Border potentials $\hat\theta^{\psi}$ for $uv\in\SE,\ u\in \SV_A,\ v\in\SV\backslash A$ and $x_u\neq y_u$ read:
\begin{multline}\label{eq:hat-theta-boundary}
 \hat\theta^{\psi}_{uv,y_u}(x_u)=\min_{x_v\in X_v}\theta^{\psi}_{uv}(x_u,x_v)= \\
 =\min_{x_v\in X_v}(\theta_{uv}(x_u,x_v)-\theta_{uv}(x_u,y_v)-\theta_{uv}(y_u,x_v))\\
 =-\theta_{uv}(x_u,y_v)+\min_{x_v\in X_v}(\theta_{uv}(x_u,x_v)-\theta_{uv}(y_u,x_v))\,;
\end{multline}
for $x_u=y_u$:
 \begin{multline}\label{eq:hat-theta-y-u}
  \hat\theta^{\psi}_{uv,y_u}(y_u)=\max_{x_v\in X_v}\theta^{\psi}_{uv}(y_u,x_v) = \\
  = \max_{x_v\in X_v}(\theta_{uv}(y_u,x_v)-\theta_{uv}(y_u,y_v)-\theta_{uv}(y_u,x_v))\\
  = -\theta_{uv}(y_u,y_v) \,.
 \end{multline}
Note that~\eqref{eq:hat-theta-boundary} turns into~\eqref{eq:hat-theta-y-u} when $x_u=y_u$, hence it is sufficient to use only expression~\eqref{eq:hat-theta-boundary}.

\myparagraph{Energy of Definition~\ref{def:linearStrictLPImprovingMapping}.}
The non-relaxed version of condition~\eqref{equ:linearStrictLPImprovingMapping} defining $\Lambda_{\SV}$-improving all-to-one mapping, with the labeling proposal $y$ can be formulated as checking whether
\begin{equation}
y\in\arg\hspace{-1.5pt}\min\limits_{x\in X_{\SV}}\sum_{v\in\SV}\hspace{-1.5pt}\gamma_{v}(x_v)+\sum_{uv\in\mathcal{E}}\hspace{-3pt}\gamma_{uv}(x_u,x_v)+\sum_{u\in\partial\SE_{A}}\hspace{-6pt}\hat\gamma_{uv,y_u}(x_u)
\end{equation}
with potentials $\gamma$ defined as:
\begin{equation}\label{eq:gamma-unaries}
 \gamma_{u}(x_u)=
 \left\{
 \begin{array}{rl}
 \theta_u(x_u)-\theta_u(y_u), & u\in A\\
 0, & u\in \SV\backslash A\\
 \end{array}
 \right.
 \end{equation}
 \begin{multline}\label{eq:gamma-pairwise}
 \gamma_{uv}(x_u,x_v)=\\
  \left\{
  \begin{array}{rl}
  \theta_{uv}(x_u,x_v)-\theta_{uv}(y_u,y_v), & u,v\in A\\
  0, & \text{otherwise}\,.\\
  \end{array}
  \right.
\end{multline}

and the border term
\begin{equation}\label{eq:gamma-boundary}
\hat\gamma_{uv,y_u}(x_u)=\min_{x_v\in X_v}(\theta_{uv}(x_u,x_v)-\theta_{uv}(y_u,x_v))\,. 
\end{equation}

\myparagraph{Equivalency of Energies.}
Comparing~\eqref{eq:gamma-unaries},~\eqref{eq:gamma-pairwise} and~\eqref{eq:gamma-boundary} to~\eqref{eq:beta-unaries},~\eqref{eq:beta-pairwise} and~\eqref{eq:hat-theta-boundary} respectively it can be seen that they can be transformed to each other by several operations, which {\em equally} change energies of all labelings and thus do not influence the criterions provided by Theorem~\ref{thm:PersistencyCriterion} and~\cite[eq.(14)]{ShekhovtsovCVPR2014}. These operations are:
\begin{enumerate}
 \item Subtract $\theta_u(y_u)$ from $\beta_u(x_u)$ for all $u\in\SV_A$, $x_u\in X_u$.
 \item Subtract $\theta_{uv}(y_u,y_v)$ from $\beta_{uv}(x_u,x_v)$ for all $uv\in\SE_A$, $(x_u,x_v)\in X_u\times X_v$.
 \item Reparametrize $\beta$ with the reparametrization vector $\phi$ defined as
 \begin{equation}
  \phi_{u,v}(x_u)=
  \left\{
  \begin{array}{rl}
  -\theta_{uv}(x_u,y_v), & u\in A\\
  0, & u\in\SV\backslash A\,.\\
  \end{array}
  \right.
 \end{equation}
\end{enumerate}
\end{proof}


The following theorem states that our method provably delivers the best results among the methods providing node-persistency:

\begin{theorem}
 Under conditions of Corollary~\ref{cor:strickt-persistency-optimality}, Algorithm~\ref{alg:FindPersistency} with the reparametrizations given by~\eqref{eq:optimal-repa-psi} finds the maximal strict $\Lambda_{\SV}$-improving  all-to-one mapping for a given proposal labeling $x^0$.
\end{theorem}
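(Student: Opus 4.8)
The plan is to deduce the theorem by stitching together two already-proved facts: the optimality of Algorithm~\ref{alg:FindPersistency} from Corollary~\ref{cor:strickt-persistency-optimality} and the criterion-equivalence of Lemma~\ref{lemma:lambda-improving}, with the uniqueness hypothesis serving to reconcile the two distinct notions of ``strictness'' that appear on the two sides.

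First I would fix the proposal labeling $x^0$ and, for each candidate set $A\subset\SV$, consider the all-to-one node-wise mapping $p$ of the form~\eqref{equ:pixelWiseMapping} with $y=x^0$. By Lemma~\ref{lemma:lambda-improving}, the relaxed persistency criterion $\delta(x^0)\in\argmin_{\mu\in\Lambda_A}\hat E^{\psi}_{A,x^0}(\mu)$ of Corollary~\ref{cor:RelaxedPersistencyCriterion}, taken with the reparametrization $\psi$ of~\eqref{eq:optimal-repa-psi}, is equivalent to $p$ being $\Lambda_{\SV}$-improving in the sense of Definition~\ref{def:linearStrictLPImprovingMapping}. Since that lemma establishes the equivalence by exhibiting an explicit sequence of energy-preserving transformations between the test energy $\hat E^{\psi}_{A,x^0}$ and the energy $\langle(I-[p])^{\top}\theta,\cdot\rangle$ of~\eqref{equ:linearStrictLPImprovingMapping}, the bijection it induces between the two minimizer sets also matches the strict versions: $[p]\mu=\mu$ holds for every minimizer of the improving-mapping relaxation exactly when every minimizer of the test relaxation coincides with $\delta(x^0)$ on $A$.

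Next I would invoke the uniqueness hypothesis. Under the conditions of Corollary~\ref{cor:strickt-persistency-optimality} every subproblem $\min_{\mu\in\Lambda_{A^t}}\hat E_{A^t,x^t}(\mu)$ has a unique solution, so the point returned at each stage is the integral marginal $\delta(x^t)$ and is the only minimizer. This simultaneously certifies strong persistency on the returned set (the test relaxation has $\delta(x^0)$ as its unique optimum, so $x^0$ is the unique identifiable labeling) and, through the equivalence of the previous step, strictness of the associated mapping (all minimizers are fixed by $[p]$). Hence, for the set produced by the algorithm, the induced all-to-one mapping is not merely $\Lambda_{\SV}$-improving but strictly so. Finally, Corollary~\ref{cor:strickt-persistency-optimality} asserts that under the same hypothesis Algorithm~\ref{alg:FindPersistency}, run on the reparametrized potentials $\theta^{\psi}$, returns the largest subset of persistent variables identifiable by Corollary~\ref{cor:RelaxedPersistencyCriterion}, while Corollary~\ref{cor:ExistenceLargestStrictPersistency} guarantees this set is well defined and unique. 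Transporting this largest set through Lemma~\ref{lemma:lambda-improving} yields the largest $A$ for which~\eqref{equ:pixelWiseMapping} is strictly $\Lambda_{\SV}$-improving, i.e. the maximal strict $\Lambda_{\SV}$-improving all-to-one mapping for $x^0$.

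The main obstacle is precisely the second step: lining up the two strictness notions. Lemma~\ref{lemma:lambda-improving} is phrased only as an equality of non-relaxed energies, so one must check that the energy-preserving transformation it uses carries \emph{fractional} minimizers of one relaxation to fractional minimizers of the other, so that the condition ``all relaxed minimizers are fixed by $[p]$'' (strict $\Lambda_{\SV}$-improving) and the condition ``$x^0$ is the unique relaxed persistent labeling'' (strong persistency) are genuinely identical and not merely related through their integral solutions. The uniqueness hypothesis of Corollary~\ref{cor:strickt-persistency-optimality} is exactly what collapses this gap, since it forces the relaxed minimizer to be unique and integral, making both conditions hold at once.
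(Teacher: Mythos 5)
Your proposal is correct and follows essentially the same route as the paper's own (very terse) proof: Lemma~\ref{lemma:lambda-improving} identifies the reparametrized criterion of Corollary~\ref{cor:RelaxedPersistencyCriterion} with strict $\Lambda_{\SV}$-improving all-to-one mappings, and Corollary~\ref{cor:strickt-persistency-optimality} (via Theorem~\ref{thm:LargestPersistentLabeling}) guarantees the algorithm returns the largest set satisfying that criterion. Your additional care in aligning the two strictness notions (``all relaxed minimizers fixed by $[p]$'' versus ``$\delta(x^0)$ is the unique relaxed minimizer'') is a detail the paper glosses over, and it is sound --- indeed the transformations in Lemma~\ref{lemma:lambda-improving} are constants plus a reparametrization, which preserve the linear objective on the whole local polytope, so the two relaxations share identical minimizer sets including fractional ones.
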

\begin{proof}
Under condition of Corollary~\ref{cor:strickt-persistency-optimality} (i.e.\ when on each iteration there is a unique solution $\mu^t$) Lemma~\ref{lemma:lambda-improving} guarantees equivalence of our criterion (Corollary~\ref{cor:RelaxedPersistencyCriterion} with reparametrization $\psi$) to Definition~\ref{def:linearStrictLPImprovingMapping} for the strict $\Lambda_{\SV}$-improving  all-to-one mapping. Theorem~\ref{thm:LargestPersistentLabeling} states that Algorithm~\ref{alg:FindPersistency} delivers the largest set $A^*$ satisfying this criterion, which in turn proves the theorem.
\end{proof}


\section{Extensions}\label{sec:HigherOrder}

\myparagraph{Higher Order Models.}
Assume now we are not in the pairwise case anymore but have an energy minimization problem over a \emph{hyper}graph ${G=(\mathcal{V},\mathcal{E})}$ with $\mathcal{E} \subset \mathcal{P}(\mathcal{V})$ a set of subsets of $\mathcal{V}$:
\begin{equation}
  \label{eq:HyperGraphicalModel}
\min_{x \in X_{\SV}} E_{\mathcal{V}}(x) := \sum\limits_{e \in \mathcal{E}} \theta_{e}(x_e)\,.
\end{equation}
All definitions, our persistency criterion and Algorithm~\ref{alg:FindPersistency} admit a straightforward generalization.
Analoguously to Definition~\ref{def:Boundary} define for a subset of nodes $A \subset \mathcal{V}$ the boundary nodes as
\begin{equation}
\partial \mathcal{V}_A := \{ u \in A\colon\exists v \in \mathcal{V}\backslash A, \exists e \in \mathcal{E} \text{ s.t. } u,v \in e \}
\end{equation}
and the boundary edges as
\begin{equation}
\partial \mathcal{E}_A := \{ e \in \mathcal{E}\colon\exists u \in A, \exists v \in \mathcal{V} \backslash A \text{ s.t. } u,v \in e\}\,.
\end{equation}

The equivalent of boundary potential in Definition~\ref{def:InsidePotentialAndFunctional} for $e \in \partial \mathcal{E}_A$ is
\begin{equation}
 \hat{\theta}_{e,y}(x) := \left\{ 
\begin{array}{ll}
 \max\limits_{\tilde{x} \in X_e\colon \tilde{x}_{|A \cap e} = x_{|A \cap e}} \theta_{e}(\tilde{x}), & x_{|A \cap e} = y_{|A \cap e} \\
 \min\limits_{\tilde{x} \in X_e\colon \tilde{x}_{|A \cap e} = x_{|A \cap e}} \theta_{e}(\tilde{x}), & x_{|A \cap e} \neq y_{|A \cap e}
\end{array}\,.
\right.
\end{equation}
Now Theorem~\ref{thm:PersistencyCriterion}, Corollary~\ref{cor:RelaxedPersistencyCriterion} and Algorithm~\ref{alg:FindPersistency} can be directly translated to the higher order case.

\myparagraph{Tighter Relaxations.} 
Essentially, Algorithm~\ref{alg:FindPersistency} can be applied also to tighter relaxations than $\Lambda_A$, e.g. when one includes cycle inequalities~\cite{CycleRelaxationSontagThesis}. 
One merely has to replace the local polytope $\Lambda_A$ for $A \subset \mathcal{V}$ by the tighter feasible convex set:

\begin{proposition}
\label{prop:TighterRelaxations}
Let the polytopes $\tilde\Lambda_A\supseteq\SM_A$ satisfy ${\tilde{\Lambda}_A \subset \Lambda_A}$ $\forall A\subset \mathcal{V}$.
Use $\tilde{\Lambda}_{A^t}$ in place of $\Lambda_{A^t}$ in Algorithm~\ref{alg:FindPersistency} and let $\tilde{A}^*$ be the corresponding persistent set returned by the modified algorithm. 
Let $A^*_{strong} \subset A^*$  be the largest subset of strongly persistent variables identifiable by Corollary~\ref{cor:RelaxedPersistencyCriterion} subject to the relaxations $\tilde{\Lambda}_A$ and $\Lambda_A$.
Then $A^*_{strong} \subset \tilde{A}^*_{strong}$.
\end{proposition}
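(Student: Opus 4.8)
The plan is to exploit the monotonicity of the criterion in Corollary~\ref{cor:RelaxedPersistencyCriterion} under tightening of the relaxation. The crucial observation is that the augmented energy $\hat{E}_{A,x^0}$ is one fixed linear functional on $\R^{\SI}$, depending only on the potentials $\theta$ and the test labeling $x^0$ and not on which feasible polytope we minimize over. Since $\delta(x^0)$ is integral it lies in $\SM_A$, and therefore in both $\tilde{\Lambda}_A$ and $\Lambda_A$ because $\SM_A\subseteq\tilde{\Lambda}_A\subseteq\Lambda_A$ by assumption. Hence if $\delta(x^0)\in\argmin_{\mu\in\Lambda_A}\hat{E}_{A,x^0}(\mu)$, then for every $\mu\in\tilde{\Lambda}_A\subseteq\Lambda_A$ we have $\hat{E}_{A,x^0}(\delta(x^0))\le\hat{E}_{A,x^0}(\mu)$, so $\delta(x^0)\in\argmin_{\mu\in\tilde{\Lambda}_A}\hat{E}_{A,x^0}(\mu)$ as well. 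In words: whatever is identifiable by Corollary~\ref{cor:RelaxedPersistencyCriterion} for the weaker relaxation $\Lambda_A$ is identifiable for the tighter one $\tilde{\Lambda}_A$.

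Next I would check that strong persistency, i.e.\ uniqueness of the minimizer, is preserved in the same direction. Suppose $\delta(x^0)$ is the \emph{unique} minimizer of $\hat{E}_{A,x^0}$ over $\Lambda_A$, and let $\mu^*$ be any minimizer over $\tilde{\Lambda}_A$. Since $\delta(x^0)\in\tilde{\Lambda}_A$ we get $\hat{E}_{A,x^0}(\mu^*)\le\hat{E}_{A,x^0}(\delta(x^0))$; but $\mu^*\in\tilde{\Lambda}_A\subseteq\Lambda_A$ and $\delta(x^0)$ is optimal over $\Lambda_A$, so the reverse inequality also holds and $\mu^*$ is itself a minimizer over $\Lambda_A$. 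Uniqueness over $\Lambda_A$ then forces $\mu^*=\delta(x^0)$. Thus a labeling $x^0$ that is strongly persistent on a set w.r.t.\ $\Lambda$ remains strongly persistent on the same set w.r.t.\ $\tilde{\Lambda}$.

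With these two facts the conclusion is immediate. I would take the largest strongly persistent set $A^*_{strong}$ for the relaxation $\Lambda$ together with its unique identifying labeling $x^0$ furnished by Corollary~\ref{cor:ExistenceLargestStrictPersistency}. By the two observations above, $x^0$ is also strongly persistent on $A^*_{strong}$ and identifiable by Corollary~\ref{cor:RelaxedPersistencyCriterion} when $\tilde{\Lambda}_{A^*_{strong}}$ is used in place of $\Lambda_{A^*_{strong}}$. The structural machinery establishing a \emph{unique largest} such set, namely Lemma~\ref{lemma:PersistentLabelingDuringIterations}, Theorem~\ref{thm:LargestPersistentLabeling} and Corollary~\ref{cor:ExistenceLargestStrictPersistency}, uses only $\tilde{\Lambda}_A\supseteq\SM_A$ and so carries over verbatim to the tighter relaxation, yielding a largest strongly persistent set $\tilde{A}^*_{strong}$ for $\tilde{\Lambda}$ (with $\tilde{A}^*_{strong}\subset\tilde{A}^*$ by the same theorem). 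Since $A^*_{strong}$ is one strongly persistent set identifiable w.r.t.\ $\tilde{\Lambda}$, maximality of $\tilde{A}^*_{strong}$ gives $A^*_{strong}\subset\tilde{A}^*_{strong}$, as claimed.

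The only genuinely delicate point is the preservation of uniqueness (strong persistency) when passing to the smaller polytope; everything else is set-inclusion bookkeeping. I would also take care to state explicitly that all the invoked structural results were proved only under the hypothesis $\Lambda_A\supseteq\SM_A$, which $\tilde{\Lambda}_A$ satisfies, so that no re-proof is needed, merely a change of symbol.
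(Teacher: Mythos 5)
Your proof is correct, and it is worth noting that the paper states Proposition~\ref{prop:TighterRelaxations} \emph{without any proof at all}, so your argument supplies precisely the reasoning the authors leave implicit. Both of your key observations are sound: since $\hat{E}_{A,x^0}$ is one fixed linear functional on $\R^{\SI}$ and $\delta(x^0)\in\SM_A\subseteq\tilde{\Lambda}_A\subseteq\Lambda_A$, optimality of $\delta(x^0)$ over the larger polytope transfers to the smaller one; and your sandwiching argument (any minimizer $\mu^*$ over $\tilde{\Lambda}_A$ satisfies $\hat{E}_{A,x^0}(\mu^*)=\hat{E}_{A,x^0}(\delta(x^0))$ and is therefore a minimizer over $\Lambda_A$, hence equal to $\delta(x^0)$ by uniqueness) correctly shows that strong persistency is preserved when passing to the tighter relaxation, which is indeed the delicate step. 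Combining this with the maximality of $\tilde{A}^*_{strong}$ then yields the claimed inclusion.

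One caveat: your assertion that Lemma~\ref{lemma:PersistentLabelingDuringIterations}, Theorem~\ref{thm:LargestPersistentLabeling} and Corollary~\ref{cor:ExistenceLargestStrictPersistency} carry over ``verbatim'' under the sole hypothesis $\tilde{\Lambda}_A\supseteq\SM_A$ is slightly optimistic. The proof of Lemma~\ref{lemma:PersistentLabelingDuringIterations} (sketched in the paper as ``replace $\mathcal{V}$ by $B$'') implicitly uses a compatibility between the polytopes for nested index sets, namely that a feasible $\mu\in\Lambda_B$ restricted to the marginals indexed by $A\subset B$ behaves consistently with $\Lambda_A$; for an arbitrary family $\{\tilde{\Lambda}_A\}_{A\subseteq\SV}$ satisfying only the stated set-by-set inclusions, this restriction consistency is an additional (mild) assumption, satisfied for instance when $\tilde{\Lambda}_A$ is obtained by adding cycle inequalities supported in $A$. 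Since the paper glosses over this point as well --- its only explicit polytope-independence statement is the remark following Lemma~\ref{lemma:optimal-repa-property} --- your write-up is, if anything, more complete than the source.
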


\begin{remark}
For approximate dual solvers for tighter relaxations like~\cite{TighteningLPRelaxationForMAPUsingMessagePassing,FrustratedCyclesSontag} there are analogues of strict arc-consistency, hence these are also integrally correct algorithms as in Definition~\ref{def:ConsistencyMapping} and we can also use these algorithms in Algorithm~\ref{alg:FindPersistency} with the obvious modifications.
\end{remark}

Optimal reparametrization for tighter relaxations and higher order models is beyond the scope of this paper.

\section{Experiments}
\label{sec:Experiments}

We tested our approach with initial and optimal reparametrizations (described in Section~\ref{sec:Reparamerisation}) on several datasets from different computer vision and machine learning benchmarks, 47 problem instances overall, see Table~\ref{tab:Experiments}. We describe each dataset and the corresponding experiments in detail below.

\myparagraph{Competing methods.} We compared our method to {\bf MQPBO}~\cite{MQPBOShekhovtsovTP,PartialOptimalityInMultiLabelMRFsKohli}, {\bf Kovtun}'s method~\cite{KovtunPartialOptimalLabeling}, Generalized Roof Duality (\textbf{GRD}) by Kahl and Strandmark~\cite{GeneralizedRoofDualityStrandmark}, Fix et al's~\cite{HigherOrderGraphCutFix} and Ishikawa's Higer Order Clique Reduction (\textbf{HOCR})~\cite{TransformationGeneralToFirstOrderIshikawa} algorithms. For the first two methods we used our own implementation, and for the other the freely available code of Strandmark~\cite{GeneralizedRoofDualityImplementationStrandmark}.
We were unable to compare to the method of Windheuser et al.~\cite{GeneralizedRoofDualityWindheuser}, because the authors do not give a description for implementing their method in the higher order case and only provide experimental evaluation for problems with pairwise potentials, where their method coincides with MQPBO~\cite{PartialOptimalityInMultiLabelMRFsKohli}.

\myparagraph{Implementation details.} We employed TRWS as an approximate solver for Algorithm~\ref{alg:FindPersistency} and strong tree agreement as a consistency mapping (see Proposition~\ref{prop:PersistencyAlgorithmWithConsistentSolver}) for most of the pairwise problems. We stop TRWS once it has either arrived at (i) tree-agreement; (ii) a small duality gap of $10^{-5}$; (iii) when number of nodes with tree agreement did not increase over the last $100$ iterations or (iv) overall $1500$ iterations. 
For the higher-order models \texttt{protein-interaction}, \texttt{cell-tracking} and \texttt{geo-surf} we employed CPLEX~\cite{CPLEX} as an exact linear programming solver. 
We have run Algorithm~\ref{alg:FindPersistency} with boundary potentials computed as in~\eqref{eq:InnerPotential} for all problems and with boundary potentials computed with the optimal reparametrization as in~\eqref{eq:InnerPotential-rewritten} for the pairwise problems.

\myparagraph{Datasets and Evaluation.} We give a brief characterization of all datasets and report the obtained total percentage of persistent variables of our and competing methods in Table~\ref{tab:Experiments}. The percentage of partial optimality is computed as follows: Suppose we have found a persistent labeling on set $A \subset \mathcal{V}$. Then the percentage is 
$1 - \frac{\sum_{u\not\in A} \log |X_u|} {\sum_{u\in\V} \log |X_u| }$.
Note that by this formulation we take into account the size of the label space for each node.
For an uniform label space the above formula equals $\frac{\abs{A}}{\abs{\mathcal{V}}}$. The latter measure was used in~\cite{PartialOptimalitySwobodaCVPR2014}.
\begin{remark}
Note that in comparison to our conference paper~\cite{PartialOptimalitySwobodaCVPR2014}, persistency results for some datasets with higher order potentials, which were solved with CPLEX are lower now. This is due to two reasons: First, we weight the size of the label space instead of simply counting the number of variables which are partially optimal. In models with nonuniform label space our method tends to find partial optimality for nodes with small label space, hence the new formula gives a smaller percentage. Second, our original research implementation contained subtle bugs which resulted in a higher number of wrongly assigned partially optimal nodes for these models.
We apologize for reporting incorrect results in the experimental section of~\cite{PartialOptimalitySwobodaCVPR2014}.
\end{remark}

\begin{table*}
 \centering
\begin{tabular}{lggggrrrrrrr}
\toprule
\rotatebox{90}{Experiment} & \#I&  \#L&          \#V&O   & 
\rotatebox{90}{MQPBO~\cite{PartialOptimalityInMultiLabelMRFsKohli}} & 
\rotatebox{90}{Kovtun~\cite{KovtunPartialOptimalLabeling}}  &     
\rotatebox{90}{GRD~\cite{GeneralizedRoofDualityStrandmark}} &   
\rotatebox{90}{Fix~\cite{HigherOrderGraphCutFix}}&
\rotatebox{90}{HOCR~\cite{TransformationGeneralToFirstOrderIshikawa}} &
\rotatebox{90}{Ours original~\cite{PartialOptimalitySwobodaCVPR2014}} &
\rotatebox{90}{Ours optimal} \\
\midrule
\texttt{teddy}                 &   1 &        60 &       168749 & 2 &       0 &$\dagger$&$\dagger$&$\dagger$&$\dagger$&\textbf{0.3820}&\textbf{0.3820}\\
\texttt{venus}                 &   1 &        20 &       166221 & 2 &       0 &$\dagger$&$\dagger$&$\dagger$&$\dagger$&0&0\\
\texttt{family}                &   1 &         5 &       425631 & 2 &  0.0432 &$\dagger$&$\dagger$&$\dagger$&$\dagger$&0.0044&\textbf{0.0611}\\
\texttt{pano}                  &   1 &         7 &       514079 & 2 &  0.1247 &$\dagger$&$\dagger$&$\dagger$&$\dagger$&0.2755&\textbf{0.3893}\\
\texttt{Potts}                 &  12 &  $\leq$12 & $\leq$424720 & 2 &  0.1839 &  0.7475 &$\dagger$&$\dagger$&$\dagger$&\textbf{0.9220}&\textbf{0.9220}\\
\texttt{side-chain}            &  21 & $\leq$483 &   $\leq$1971 & 2 &  0.0247 &$\dagger$&$\dagger$&$\dagger$&$\dagger$& 0.1747&\textbf{0.2558}\\
\parbox{2cm}{\texttt{protein \vspace*{-0.15cm} \\-interaction\vspace*{0.15cm}}}   &   8 &         2 &  $\leq$14440 & 3 &$\dagger$&$\dagger$&  \textbf{0.2603} &  0.2545 &  0.2545 &0.0008&$\dagger$\\
\texttt{cell-tracking}         &   1 &         2 &        41134 & 9 &$\dagger$&$\dagger$&$\dagger$&0.1771 &$\dagger$&\textbf{0.2966}&$\dagger$\\
\texttt{geo-surf}&$\dagger$&$\dagger$&$\dagger$&$\dagger$&$\dagger$&$\dagger$&$\dagger$&$\dagger$&$\dagger$&\textbf{0.0743}&$\dagger$\\

\bottomrule
\end{tabular}
\caption{Percentage of persistent variables obtained by methods~\cite{PartialOptimalityInMultiLabelMRFsKohli},\cite{KovtunPartialOptimalLabeling},\cite{GeneralizedRoofDualityStrandmark},\cite{HigherOrderGraphCutFix},\cite{TransformationGeneralToFirstOrderIshikawa} and our methods with boundary potentials computed as in~\eqref{eq:InnerFunctional} (Ours original) and as in~\eqref{eq:InnerPotential-rewritten} (Ours optimal).
Notation $\dagger$ means inapplicability of the method. The columns \#I,\#L,\#V,O  denote the number of instances, labels, variables and the highest order of potentials respectively. 
}
\label{tab:Experiments} 
\end{table*}

The problem instances \texttt{teddy}, \texttt{venus}, \texttt{family}, \texttt{pano}, \texttt{Potts} and \texttt{geo-surf} were made available by~\cite{Kappes2013Benchmark}, while the datasets \texttt{side-chain} and \texttt{protein-interaction} were made available by~\cite{PIC2011}.

The problem instances \texttt{teddy} and \texttt{venus} come from \textbf{the disparity estimation for stereo vision}~\cite{SzeliskiComparativeStudyMRF}. None of the competing approaches was able to find even a single persistent variable for these datasets, presumably because of the large number of labels, whereas we labeled over one third of them as persistent in \texttt{teddy}, though none in \texttt{venus}.

Instances named \texttt{pano} and \texttt{family} come from the \textbf{photomontage} dataset~\cite{SzeliskiComparativeStudyMRF}. These problems have more complicated pairwise potentials than the disparity estimation problems, but less labels. For both datasets we found significantly more persistent variables than  MQPBO, in particular, we were able to label more than a third of the variables in \texttt{pano}.

We also chose $12$ relatively big energy minimization problems with grid structure and \textbf{Potts} interaction terms. 
The underlying application is a color segmentation problem previously considered in~\cite{PartialOptimalityByPruningPotts}. 
Our general approach reproduces results of~\cite{PartialOptimalityByPruningPotts} for the specific Potts model.

\pgfplotsset{clownfish/.append style={
  xlabel = {Algorithm~\ref{alg:FindPersistency} iterations},
  ylabel = {TRWS iterations},
  xmin = -0.5,
  xmax = 45.5,
  ymin = 0,
  ymax = 1000,
  xtick       = {0, 5, 10, 15, 20, 25, 30, 35, 40, 45},
  xticklabels = {0, 5, 10, 15, 20, 25, 30, 35, 40, 45},
  ytick       = { 10, 100, 1000}, 
  yticklabels = { 10, 100, 1000},
  height=5cm,
  width=\textwidth,
  legend pos = north east
}}
\pgfplotsset{TRWS_iterations/.append style = {color={rgb,255:red,190;green,0;blue,138}, mark options = {solid}, mark=square*}}
\begin{figure*}
\centering
\begin{tikzpicture}[scale=0.7]
\begin{semilogyaxis}[clownfish]
\addplot[mark=x,brown] plot coordinates {
(0,654)
(1,537)
(2,594)
(3,347)
(4,448)
(5,434)
(6,301)
(7,356)
(8,260)
(9,564)
(10,274)
(11,421)
(12,279)
(13,207)
(14,429)
(15,331)
(16,332)
(17,320)
(18,208)
(19,100)
(20,270)
(21,325)
(22,298)
(23,351)
(24,243)
(25,195)
(26,294)
(27,243)
(28,100)
(29,100)
(30,100)
(31,225)
(32,108)
(33,189)
(34,230)
(35,198)
(36,100)
(37,100)
(38,100)
(39,100)
(40,100)
(41,100)
(42,100)
(43,100)

};
\addlegendentry{pfau}

\addplot[mark=x,blue] plot coordinates {
(0,223)
(1,122)
(2,122)
(3,122)
(4,122)
(5,122)
};
\addlegendentry{clownfish}

\addplot[mark=x,red] plot coordinates {
(0,260)
(1,158)
(2,90)
(3,90)
(4,90)
(5,90)
(6,90)
(7,90)
(8,90)
(9,90)
(10,90)
(11,90)
(12,90)
(13,90)
(14,90)
(15,90)
(16,90)
(17,90)
(18,90)
(19,90)
(20,90)
(21,90)
(22,90)
(23,90)
(24,90)
(25,90)
(26,90)
(27,90)
(28,90)
(29,90)
(30,90)
(31,90)
(32,90)
(33,90)
(34,90)
(35,90)
(36,90)
(37,90)
};
\addlegendentry{crops}

\end{semilogyaxis}
\end{tikzpicture}
\caption{Iterations needed by TRWS~\cite{TRWSKolmogorov} in Algorithm~\ref{alg:FindPersistency} for three instances from the \texttt{Potts} dataset. }
\label{fig:TRWSIterations}
\end{figure*}
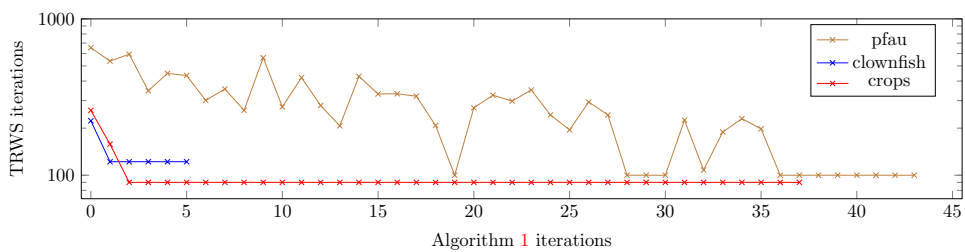

We considered also \texttt{side-chain} prediction problems in {\bf protein folding}~\cite{SideChainPredictionYanover}.  
The datasets consist of pairwise graphical models with $32-1971$ variables and $2-483$ labels. 
The problems with fewer variables are densely connected and have very big label spaces, while the larger ones are less densely connected and have label space up to $81$ variables.


The \texttt{protein interaction} models~\cite{ProteinProteinInteractionNetworkJaimovich} aim to find the subset of proteins, which interact with each other.  
Roof-duality based methods, i.e. Fix et at, GRD, HOCR~\cite{GeneralizedRoofDualityStrandmark,HigherOrderGraphCutFix,TransformationGeneralToFirstOrderIshikawa}
gave around a quarter of persistent labels. This is the only dataset where our methods gives worse results. Note that for higher-order models we do not provide an optimal reparametrization and hence our method is not provably better then the competitors. We consider this as a direction for future work. 

The \textbf{cell tracking} problem consists of a binary higher order graphical model~\cite{CellTrackingByChainGraphKausler}.
Given a sequence of microscopy images of a growing organism, the aim is to find the lineage tree of all cells.
For implementation reasons we were not able to solve \texttt{cell-tracking} dataset with Ishikawa's~\cite{TransformationGeneralToFirstOrderIshikawa} method. However Fix~\cite{HigherOrderGraphCutFix} reports that his method outperforms Ishikawa's method~\cite{TransformationGeneralToFirstOrderIshikawa}.
Other methods are not applicable even theoretically.

Last, we took the higher order multi-label \textbf{geometric surface labeling problems} (denoted as {\tt geo-surf} in Table~\ref{tab:Experiments}) from~\cite{SurfaceLabelingHoiem}. The only instance having an integrality gap has $968$ variables with $7$ labels each and has  ternary terms. Note that MQPBO cannot handle ternary terms, Fix et al's~\cite{HigherOrderGraphCutFix} Ishikawa's~\cite{TransformationGeneralToFirstOrderIshikawa} methods and the generalized roof duality method by Strandmark and Kahl~\cite{GeneralizedRoofDualityStrandmark} cannot handle more than 2 labels. Hence we report our results without comparison.

\myparagraph{Runtime.}  
The runtime of our algorithm mainly depends on the speed of the underlying solver for the local polytope relaxation. 
Currently there seems to be no general rule regarding the runtime of our algorithm, neither in the number of Algorithm~\ref{alg:FindPersistency}-iterations nor in the number of TRWS~\cite{TRWSKolmogorov}-iterations. We show three iteration counts for instances of the \texttt{Potts} dataset in Figure~\ref{fig:TRWSIterations}.


Exemplary pictures comparing the pixels optimally labelled between Kovtuns's method~\cite{KovtunPartialOptimalLabeling} and our method for some Potts-models can be seen in Figure~\ref{fig:KovtunGraphicalComparison}.

\begin{table*}
\label{fig:KovtunGraphicalComparison}
\begin{tabular}{cccccc}
\multicolumn{6}{c}{Kovtun's method~\cite{KovtunPartialOptimalLabeling}} \\
\includegraphics[width=0.145\textwidth]{./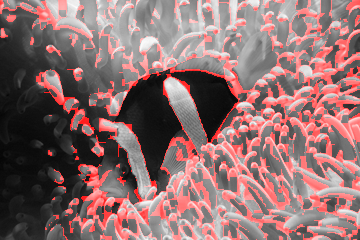} &
\includegraphics[width=0.145\textwidth]{./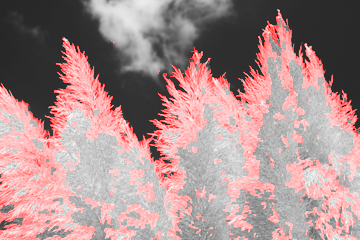} &
\includegraphics[width=0.145\textwidth]{./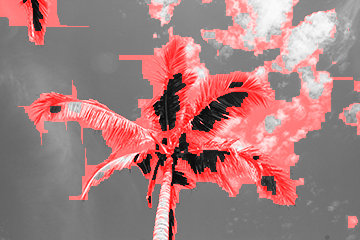} &
\includegraphics[width=0.145\textwidth]{./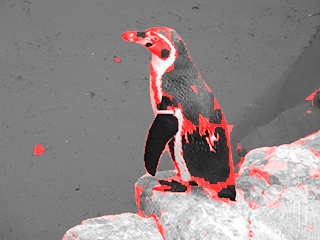} &
\includegraphics[width=0.145\textwidth]{./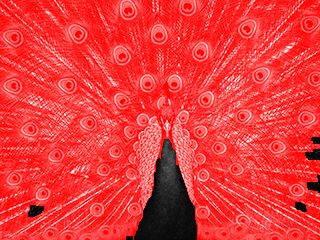} &
\includegraphics[width=0.145\textwidth]{./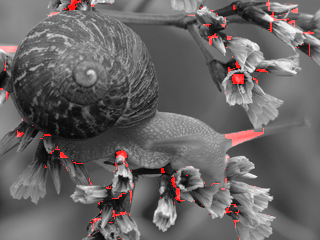}
\\
\multicolumn{6}{c}{Our method} \\
\includegraphics[width=0.145\textwidth]{./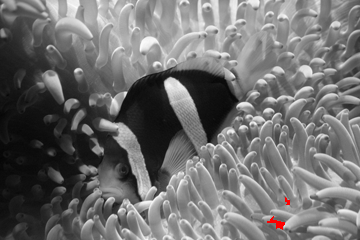} &
\includegraphics[width=0.145\textwidth]{./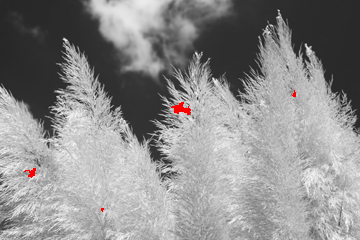} &
\includegraphics[width=0.145\textwidth]{./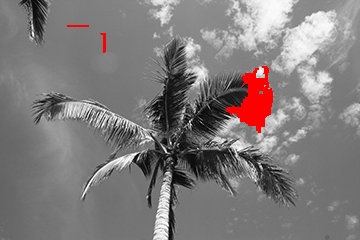} &
\includegraphics[width=0.145\textwidth]{./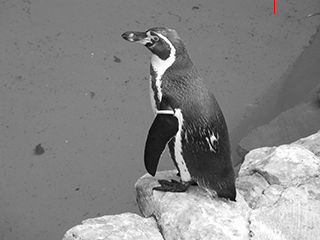} &
\includegraphics[width=0.145\textwidth]{./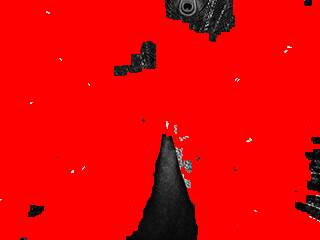} &
\includegraphics[width=0.145\textwidth]{./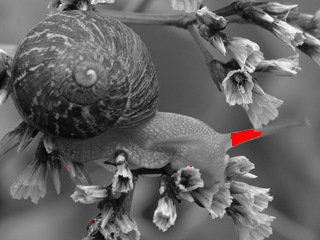}
\\
\end{tabular}
\caption{Comparison between Kovtun's method~\cite{KovtunPartialOptimalLabeling} and our method. The red area denotes pixels which could not be labelled persistently. Contrary to ours the Kovtun's method allows to eliminate separate labels, which is denoted by different intensity of the red color: the more intensive is red, the less labels were eliminated.}
\end{table*}

\section{Conclusion and Outlook}
We have presented a novel method for finding persistent variables for undirected graphical models. 
Empirically it outperforms all tested approaches with respect to the number of persistent variables found on every single dataset.
Our method is general: it can be applied to graphical models of arbitrary order and type of potentials. 
Moreover, there is no fixed choice of convex relaxation for the energy minimization problem and also approximate solvers  for these relaxations can be employed in our approach. 

In the future we plan to significantly speed-up the implementation of our method and consider finer persistency criteria, as done in~\cite{ShekhovtsovCVPR2014}, where the subset-to-one class of persistency conditions was introduced, but no efficient algorithm for finding persistency in this class was proposed. 

\textbf{Acknowledgments.} This work has been supported by the German Research Foundation (DFG) within the program “Spatio-/Temporal Graphical Models and Applications in Image Analysis”, grant GRK 1653. 

{\small
\bibliographystyle{plain}
\bibliography{literatur}
}

\end{document}


\title{Appendix to ``Partial Optimality by Pruning for MAP-Inference with General Graphical Models''}

\author{Paul Swoboda,
	Alexander Shekhovtsov,
        J\"org~Hendrik~Kappes,
        Christoph~Schn\"orr,
        and~Bogdan~Savchynskyy}

\maketitle

\section{Detailed experimental Evaluation for Our Algorithms}

\begin{table*}
\footnotesize
 \centering
\begin{longtable}{llgggrrr|rrr}
\toprule
Experiment & instance & \#L & \#V & O & 
Ours original & \#iter & time (s) &
Ours invariant & \#iter & time (s)\\
\midrule
\multirow{2}{2cm}{\texttt{mrf- \vspace*{-0.15cm} \\stereo\vspace*{0.15cm}}}
& teddy &60&168750&2&0.381268&10539&3590& 0.381961&10539&3571\\
& venus &20&166222&2&0&13202&13945& 0&13202&13956\\
\hline
\multirow{2}{2cm}{\texttt{mrf- \vspace*{-0.15cm} \\photomontage\vspace*{0.15cm}}}
& family &5&425632&2&0.0440991&16604&12278& 0.0450718&16604&9231 \\
& pano  &7&514080&2&0.275465&12705&& 0.389325&12705&17366 \\
\hline
\texttt{Potts}
& clownfish &12&86400&2& 0.999711 &833&152&  0.999711&833&152 \\
& crops &12&86400&2& 0.989988 &3658&668& 0.989988&3658&3658\\
& fourcolors &4&65536&2&0.998734&358&30& 0.998734&358&358\\
& lake &12&86400&2&1&95&14& 1&95&14\\
& palm &12&86400&2&0.981991&5131&830& 0.981991&5131&830\\
& penguin &8&76800&2&1&152&16& 1&152&16\\
& pfau &12&76800&2&0.10431&11665&786& 0.10431&11665&786\\
& snail &3&76800&2&0.999792&581&45& 0.999792&581&45\\
& strawberry &12&76800&2&0.993125&1980&312& 0.993125&1980&312\\
& colseg-cow3 &3&41720&2&0.999535&4664&6292& 0.999535&4664&6292\\
& colseg-cow4 &4&414720&2&0.998006&7080&6994& 0.998006&7080&6994\\
& colseg-garden4 &4&21000&2&0.999095&289&34& 0.999095&289&34\\
\hline
\texttt{side-chain} 
&1CKK&2-445&35&2&0&473&47& 0&471&46\\
&1CM1&2-350&37&2&0&600&51& 0&471&41\\
&1SY9&2-503&33&2&0&574&66& 0.191997&747&77\\ 
&2BBN&2-404&37&2&0&388&53& 0&412&55\\
&2BCX&2-473&39&2&0&446&55& 0&407&51\\
&2BE6&2-470&40&2&0&593&40& 0&490&34\\
&2F3Y&2-439&35&2&0&458&41& 0&407&35\\
&2FOT&2-374&35&2&0&620&61& 0&526&53\\
&2HQW&2-459&36&2&0&503&43& 0&507&42\\
&2O60&2-496&38&2&0&721&93& 0&597&77\\
&3BXL&2-380&36&2&0&443&44& 0&427&43\\
&pdb1b25&2-81&1972&2&0.189202&1681&118& 0.293441&2532&160\\
&pdb1d2e&2-81&1328&2&0.556895&2965&83& 0.920537&654&34\\
&pdb1fmj&2-81&614&2& 0.161837& 863&16&0.263551&1122&20\\
&pdb1i24&2-81&337&2&1&115&1& 1&115&1\\
&pdb1iqc&2-81&1040&2&0.404958&2227&35&0.840613&781&26\\
&pdb1jmx&2-81&739&2 &0.352647&1286&20& 0.460479&1729&30\\
&pdb1kgn&2-81&1060&2&0.131970&2092&53& 0.196760&2896&68\\
&pdb1kwh&2-81&424&2&0.254326&  609&8& 0.323246&716&10\\
&pdb1m3y&2-81&1364&2&0.310822&1335&28&0.490703 &1246&28\\
&pdb1qks&2-81&926&2&0.306292&1315 &28& 0.391010&1687&38\\

\hline
\multirow{2}{2cm}{\texttt{protein- \vspace*{-0.15cm} \\interaction\vspace*{0.15cm}}}
&didNotconverge1&2&14306&3&0.000280&$\dagger$&210&$\dagger$ &$\dagger$  &$\dagger$  \\
&didNotconverge2&2&14441&3&0.004363&$\dagger$&408&$\dagger$ &$\dagger$ &$\dagger$  \\
&didNotconverge3&2&14306&3&0.000280&$\dagger$&211&$\dagger$ &$\dagger$ &$\dagger$  \\
&didNotconverge4&2&14306&3&0.000350&$\dagger$&578&$\dagger$ &$\dagger$ &$\dagger$  \\
&didNotconverge5&2&14258&3&0.000140&$\dagger$&77&$\dagger$ &$\dagger$ &$\dagger$  \\
&didNotconverge6&2&14258&3&0.000140&$\dagger$&174&$\dagger$ &$\dagger$ &$\dagger$  \\
&didNotconverge7&2&14352&3&0.000279&$\dagger$&209&$\dagger$ &$\dagger$ &$\dagger$  \\
&didNotconverge8&2&14352&3&0.000697&$\dagger$&504&$\dagger$ &$\dagger$ &$\dagger$  \\
\hline
\texttt{cell-tracking} 
& ogm\_model&2&41134&9&0.296592&$\dagger$&212&$\dagger$ &$\dagger$ &$\dagger$  \\
\hline
\texttt{geo-surf}
& gm166 &7&969&3&0.074303&$\dagger$&23&$\dagger$ &$\dagger$ &$\dagger$  \\
\bottomrule
\end{longtable}
\caption{Percentage of persistent variables obtained by our methods with boundary potentials computed as in~\eqref{eq:InnerFunctional} (Ours original) and as in~\eqref{eq:InnerPotential-rewritten} (Ours invariant).
Notation $\dagger$ means inapplicability of the method. The columns \#I,\#L,\#V,O  denote the number of instances, labels, variables and the highest order of potentials respectively. The columns \#iter denotes the total number of iterations needed by TRWS for solving the MAP-inference problems in Algorithm~\ref{alg:FindPersistency}.
}
\label{tab:ExperimentsDetailed} 
\end{table*}